\definecolor{ForestGreen}{rgb}{0.1333,0.5451,0.1333}
\newcommand{\showccc}[0]{0}
\newcommand{\ccc}[2][nothing]{
	\ifthenelse{\showccc=0}{}{
		\ensuremath{^{\Lsh\Rsh}}\marginpar{\raggedright\tiny\textsf{%
				\ifthenelse{\equal{#1}{nothing}}{}{\textbf{#1}\\}#2}}}}
\newcounter{hours}\newcounter{minutes}
\newcommand{\hhmm}{%
	\setcounter{hours}{\time/60}%
	\setcounter{minutes}{\time-\value{hours}*60}%
	\ifthenelse{\value{hours}<10}{0}{}\thehours:%
	\ifthenelse{\value{minutes}<10}{0}{}\theminutes}
\newtheorem{theorem}{Theorem}[section]
\newtheorem{proposition}[theorem]{Proposition}
\newtheorem{corollary}[theorem]{Corollary}
\newtheorem{definition}[theorem]{Definition}
\newtheorem{lemma}[theorem]{Lemma}
\newtheorem{fact}[theorem]{Fact}
\newtheorem{question}[theorem]{Question}
\newtheorem{assumption}[theorem]{Assumption}
\newcommand{\defeq}{\stackrel{\mathrm{{\scriptscriptstyle def}}}{=}}
\newcommand{\norm}[1]{\left\lVert#1\right\rVert}
\newcommand{\inprod}[2]{\left\langle#1, #2\right\rangle}
\newcommand{\R}{\mathbb{R}}
\newcommand{\diag}[1]{\textbf{\textup{diag}}(#1)}
\newcommand{\half}{\frac{1}{2}}
\newcommand{\E}{\mathbb{E}}
\newcommand{\Var}{\textup{Var}}
\newcommand{\Nor}{\mathcal{N}}
\newcommand{\tOh}[1]{\tilde{O}\left(#1\right)}
\newcommand{\tvd}[2]{\norm{#1 - #2}_{\textup{TV}}}
\newcommand{\pis}{\pi^*}
\newcommand{\epis}{\E_{\pis}}
\newcommand{\lf}{\texttt{Leapfrog}}
\newcommand{\tx}{\tilde{x}}
\newcommand{\tv}{\tilde{v}}
\newcommand{\ham}{\mathcal{H}}
\newcommand{\tran}{\mathcal{T}}
\newcommand{\prop}{\mathcal{P}}
\newcommand{\eps}{\epsilon}
\newcommand{\tmix}{T_{\textup{mix}}}
\newcommand{\tpi}{\tilde{\pi}}
\definecolor{burntorange}{rgb}{0.8, 0.33, 0.0}
\begin{document}

	\begin{titlepage}
		\def\thepage{}
		\thispagestyle{empty}
		
		\title{Logsmooth Gradient Concentration and Tighter Runtimes for Metropolized Hamiltonian Monte Carlo} 
		
		\date{}
		\author{
			Yin Tat Lee \\
			University of Washington \\
			{\tt yintat@uw.edu} 
			\and
			Ruoqi Shen \\
			University of Washington \\
			{\tt shenr3@cs.washington.edu} 
			\and
			Kevin Tian\thanks{Part of this work was done when KT was visiting the University of Washington.} \\
			Stanford University \\
			{\tt kjtian@stanford.edu}
		}
		
		\maketitle
		
		\abstract{
We show that the gradient norm $\norm{\nabla f(x)}$ for $x \sim \exp(-f(x))$, where $f$ is strongly convex and smooth, concentrates tightly around its mean. This removes a barrier in the prior state-of-the-art analysis for the well-studied Metropolized Hamiltonian Monte Carlo (HMC) algorithm for sampling from a strongly logconcave distribution \cite{DwivediCWY18}. We correspondingly demonstrate  that Metropolized HMC mixes in $\tOh{\kappa d}$ iterations\footnote{We use $\tilde{O}$ to hide logarithmic factors in problem parameters.}, improving upon the $\tilde{O}(\kappa^{1.5}\sqrt{d} + \kappa d)$ runtime of \cite{DwivediCWY18, ChenDWY19} by a factor $(\kappa/d)^{1/2}$ when the condition number $\kappa$ is large. Our mixing time analysis introduces several techniques which to our knowledge have not appeared in the literature and may be of independent interest, including restrictions to a nonconvex set with good conductance behavior, and a new reduction technique for boosting a constant-accuracy total variation guarantee under weak warmness assumptions. This is the first high-accuracy mixing time result for logconcave distributions using only first-order function information which achieves linear dependence on $\kappa$; we also give evidence that this dependence is likely to be necessary for standard Metropolized first-order methods.
}

	\end{titlepage}

	\section{Introduction}
\label{sec:intro}

Sampling from a high-dimensional logconcave distribution is a fundamental computational task, central to a variety of applications in disciplines such as statistics, machine learning, operations research, and theoretical computer science \cite{AndrieuFDJ03, BrooksGJM11}. The important problem of designing efficient samplers has received significant recent attention, due to the widespread use of Bayesian methods in modern large-scale machine learning algorithms \cite{Barber12}, and the fact that directly computing posterior densities in these methods is often intractable. Correspondingly, recent research efforts have focused on giving convergence guarantees for Markov Chain Monte Carlo (MCMC) methods, for sampling from a variety of desirable structured distributions arising in both theory and practice.

The specific problem we address in this paper is determining the mixing time of the well-studied Metropolized Hamiltonian Monte Carlo (HMC) algorithm\footnote{Metropolized HMC also refers to a family of algorithms which takes multiple \emph{leapfrog} steps, see Algorithm~\ref{alg:mhmc}. In this work, we study the variant which takes one leapfrog step, to analyze convergence behavior under minimal assumptions on the log-density (i.e.\ in absence of higher-derivative bounds past smoothness).}, when sampling from a target distribution whose log-density is smooth and strongly concave. Indeed, as it is the default sampler implementation in a variety of popular packages \cite{Abadi16, CarpenterGHLG17}, understanding Metropolized HMC is of high practical import. Moreover, the specific setting we study, where the target distribution has a density proportional to $\exp(-f)$ for function $f$ with quadratic upper and lower bounds, is commonplace in applications arising from multivariate Gaussians, logistic regression models, and structured mixture models \cite{DwivediCWY18}. This setting is also of great theoretical interest because of its connection to a well-understood setting in convex optimization \cite{Nesterov03}, where matching upper and lower bounds have long-been known. Similar guarantees are much less well-understood in sampling settings, and exploring the connection is an active research area (e.g. \cite{MaCJFJ18, Talwar19} and references therein). Throughout the introduction, we will refer to this setting as the ``condition number regime'' for logconcave sampling, as without a finite condition number, black-box sampling guarantees exist, but typically have a large dimension dependence in the mixing time \cite{Vempala05}.

\subsection{Previous work}

Many algorithms have been proposed for sampling from logconcave distributions, mainly falling into two categories: zeroth-order methods and first-order methods. Zeroth-order methods only use the information on the density of the distribution by querying the value of $f$ to inform the algorithm trajectory. Well-studied zeroth-order methods include the ball walk \cite{lovasz1993random, lovasz1990mixing}, hit-and-run \cite{belisle1993hit, lovasz1999hit, lovasz2006hit} and Metropolized random walk \cite{mengersen1996rates, roberts1996geometric}. While possibly preferable in some cases due to weaker assumptions on both the class of target distributions and oracle access to the density, these methods typically have a large polynomial dependence on the dimension, and do not exploit the additional benefits afforded by having a finite condition number. For distributions with additional structure, sampling algorithms may wish to exploit said structure.

First-order methods have access to the gradient information of $f$ in addition to the value of $f$ at a query point. This class of methods usually involves simulating a continuous Markov process whose stationary distribution is exactly the target distribution. The Langevin dynamics (LD) \cite{gelfand1991recursive}, its underdamped variant (ULD), and Hamiltonian Monte Carlo (HMC) \cite{kramers1940brownian, neal2011mcmc} are among the most well-studied continuous-time Markov processes which converge to a specified logconcave measure, and sampling algorithms based on first-order information typically model their trajectories off one of these processes. To simulate a random process in discrete time, one approach is to choose a small-enough step size so that the behavior of the discrete Markov process is not too different from that of the original Markov process over a small time interval. This discretization strategy is typical of sampling algorithms with a polynomial dependence on $\eps^{-1}$, where $\eps$ is the target total variation distance to the stationary distribution \cite{dalalyan2017theoretical, cheng2017underdamped, durmus2019high, mou2019high,mangoubi2017rapid, lee2018algorithmic, chen2019optimal, shen2019randomized}. However, for precise values of $\eps$, bounding the error incurred by the discretization is typically not enough, leading to prohibitively large runtimes.

On top of the discretization, one further can apply a Metropolis-Hastings filter to adjust the stationary distribution of the Markov process, so that the target distribution is attained in the long run. Studying the non-asymptotic behavior of Metropolized variants of the Langevin dynamics and HMC has been considered in a large number of recent works \cite{roberts1996exponential, roberts1996geometric, pillai2012optimal, bou2013nonasymptotic, xifara2014langevin, DwivediCWY18, ChenDWY19}. Indeed, the standard discretizations of these methods are identical, which was observed in prior work (see Appendix~\ref{app:equivalence}); we will refer to them both as Metropolized HMC. The works which inspired this study in particular were due to \cite{DwivediCWY18, ChenDWY19}, which showed that the mixing time of Metropolized HMC was bounded by roughly $\max(\kappa^{1.5}\sqrt{d}, \kappa d)$, with logarithmic dependence on the target accuracy $\eps$, where $\kappa$ is the \emph{condition number}\footnote{The condition number of a function is the ratio of its smoothness and strong convexity parameters, and is the standard parameter in measuring the complexity of algorithms in sampling and optimization in this regime.} of the negative log-density $f$. In the $\textrm{poly}(\eps^{-1})$ runtime regime, the recent work \cite{DurmusMM19} obtains a total variation mixing time bound which scales as $\tilde{O}(\kappa d^2/\eps^4)$, which is to our knowledge the only bound known with linear dependence on $\kappa$; on the other hand, \cite{shen2019randomized} gives an algorithm that depends on $\kappa^{7/6}$ for Wasserstein-$2$ distance, but with better dependence on all other parameters (see Table~\ref{tb: mixing_time}).

By a plausible assumption on the existence of a gap between the complexity of sampling and optimization in the logconcave setting, it is reasonable to believe that a linear dependence on $\kappa$ is necessary. More specifically, it is well-known that gradient-based optimization algorithms require at least $\min(d, \sqrt{\kappa})$ queries to an oracle providing first-order information \cite{Bubeck15}; for the worst-case instance, a quadratic in the graph Laplacian of a length-$d$ path, there is a corresponding quadratic gap with sampling a uniform point via a random walk, which mixes in roughly $d^2$ iterations. We believe understanding the tight dependence of the mixing time of popular sampling algorithms on natural parameters such as the condition number is fundamental to the development of the field of sampling, just as characterizing the tight complexity of convex optimization algorithms has resulted in rapid recent progress in the area, by giving researchers goalposts in algorithm design. To that end, this work addresses the following question.

\begin{question}
	\label{q:hmc}
	What is the mixing time of the Metropolized HMC algorithm?
\end{question}

We give a comparison of (selected recent) prior work in Table~\ref{tb: mixing_time}; for a more complete discussion, we refer the reader to the excellent discussion in \cite{DwivediCWY18, ChenDWY19}. We note that for the last two rows, the dependence on $\eps$ is logarithmic, and the notion of mixing is in total variation distance, a much stronger notion than the Wasserstein metric used in all other runtimes listed. We omit logarithmic factors for simplicity. We remark that several works obtain different rates under stronger assumptions on the log-density $f$, such as higher-order smoothness (e.g. a Lipschitz Hessian) or moment bounds; as this work studies the basic condition number setting with no additional assumptions, we omit comparison to runtimes of this type.

\begin{table}[h]
	
	\centering{}%
	\begin{tabular}{ |p{9cm}|c|c| } 
		\hline
		\centering{\textbf{Algorithm} }& \textbf{Mixing Time}  & \textbf{Metric} \\ 
		\hline
		Langevin Diffusion \cite{dalalyan2017theoretical} & $\kappa^2/\eps^2$ & \multirow{7}{*}{$W_2$\footnote{With the exception of \cite{DurmusMM19}, these results are measured in the 2-Wasserstein distance.}} \\
		\cline{1-2} 
		High-Order Langevin Diffusion \cite{mou2019high} &  $\kappa^{19/4}/\eps^{1/2}+\kappa^{13/3}/\eps^{2/3}$ & \\
		\cline{1-2} 
		HMC 1 (Collocation Method)  \cite{lee2018algorithmic}& $\kappa^{1.5}/\epsilon$ &\\
		\cline{1-2} 
		HMC 2 (Collocation Method)  \cite{chen2019optimal}& $\kappa^{1.5}/\epsilon$& \\
		\cline{1-2}  
		ULD 1 (Euler Method)  \cite{cheng2017underdamped} & $\kappa^{1.5}/\eps$ &\\
		\cline{1-2} 
		ULD 2 (Euler Method)  \cite{dalalyan2018sampling} & $\kappa^{1.5}/\eps + \kappa^2$ &\\
		\cline{1-2} 
		ULD 3 (Random Midpoint Method) \cite{shen2019randomized} & $\kappa^{7/6}/\epsilon^{1/3}+\kappa/\epsilon^{2/3}$& \\
		\hline  
		Unadjusted Langevin Dynamics \cite{DurmusMM19} & $ \kappa d^2 /\eps^4$ & \multirow{3}{*}{TV}\\ 
		\cline{1-2}
		Metropolized HMC \& MALA \cite{ChenDWY19} & $ \kappa^{1.5}\sqrt{d} + \kappa d$ & \\
		\cline{1-2}
		Metropolized HMC \& MALA (This paper)& $\kappa d$ &\\ 
		\hline 
	\end{tabular}
	\caption{Mixing times for algorithms in the condition number regime of logconcave sampling.}
	\label{tb: mixing_time}
\end{table}

\subsection{Our contribution}

Towards improving our understanding of Question~\ref{q:hmc}, we show that there is an algorithm which runs Metropolized HMC (defined in Algorithm~\ref{alg:mhmc}) for $O(\kappa d \log^3(\kappa d/\eps))$ iterations\footnote{The precise statement of our algorithmic guarantee can be found as Theorem~\ref{thm:mainclaim} of our paper.}, for sampling from a density $\exp(-f(x))$ defined on $\R^d$, where $f$ has a condition number of $\kappa$, and produces a point from a distribution with total variation at most $\eps$ away from the target density, for any $\eps > 0$. This is the first mixing-time guarantee for any algorithm in the high-accuracy regime accessing first-order function information from the log-density $f$ attaining linear dependence on the condition number $\kappa$, without additional smoothness assumptions (i.e. higher-order derivative bounds). Our mixing time bound improves upon a recent bound attaining linear dependence on $\kappa$ due to \cite{DurmusMM19}, of $\tilde{O}(\kappa d^2/\eps^4)$, in all parameters. Moreover, our dependence on the dimension $d$ matches the prior state-of-the-art \cite{DwivediCWY18, ChenDWY19}, and our algorithm does not require a warm start, as it explicitly bounds warmness dependence from a known starting distribution. 

In Section~\ref{sec:lowerbound}, we also give preliminary evidence that for the equivalent first-order sampling methods of Metropolized HMC and Metropolis-adjusted Langevin dynamics, $\Omega(\kappa)$ iterations are necessary even for sampling from a quadratic. In particular, we show that if the step size is not sufficiently small, the chain can only move with exponentially small probability, which we combine with a lower bound on the mixing time of continuous-time HMC in \cite{chen2019optimal} for small step sizes.

The starting point of our analysis is the mixing time analysis framework for the HMC algorithm in \cite{DwivediCWY18, ChenDWY19}. However, we introduce several technical modifications to overcome barriers in their work to obtain our improved mixing time bound, which we now discuss. We hope these tools may be of broader interest to both the community studying first-order sampling methods in the smooth, strongly logconcave regime, and sampling researchers in general.

\subsubsection{Gradient concentration}

How large is the norm of the gradient of a ``typical'' point drawn from the density $\exp(-f)$? It has been observed in a variety of recent works studying sampling algorithms \cite{lee2018algorithmic, shen2019randomized, VempalaW19} that the \emph{average} gradient norm of a point drawn from the target density is bounded by $\sqrt{Ld}$, where $L$ is the smoothness parameter of the function $f$ and $d$ is the ambient dimension; this observation has been used in obtaining state-of-the-art sampling algorithms in the $\textup{poly}(\eps^{-1})$ runtime regime. However, for runtimes obtaining a $\textrm{polylog}(\eps^{-1})$ runtime, this guarantee is not good enough, as it must hold for all points in a set of substantially larger measure than guaranteed by e.g. Markov's inequality. The weaker high-probability guarantee that the gradient norm is bounded by $\sqrt{Ld} \cdot \sqrt{\kappa}$ follows directly from sub-Gaussian concentration on the point $x$, and a Lipschitz guarantee on the gradient norm. Indeed, this weaker bound is the bottleneck term in the analysis of \cite{DwivediCWY18, ChenDWY19}, and prevents a faster algorithm when $\kappa > d$. Can we improve upon the average-case guarantee more generally when the log density $f$ is smooth?

For quadratic $f$, it is easy to see that the average gradient norm bound can be converted into a high-probability guarantee. We show that a similar concentration guarantee holds for \emph{all} logsmooth, strongly logconcave densities, which is the starting point of our improved mixing time bound. Our concentration proof follows straightforwardly from a Hessian-weighted variant of the well-known Poincar\'e inequality, combined with a reduction due to Herbst, as explored in \cite{Ledoux99}.

\subsubsection{Mixing time analysis}

The study of Markov chains producing iterates $\{x_k\}$, where the transition $x_k \rightarrow x_{k + 1}$ is described by an algorithm whose steady-state is a stationary distribution $\pis$, and $x_0$ is drawn from an initial distribution $\pi_0$, primarily focuses on characterizing the rate at which the distribution of iterates of the chain approaches $\pis$. To obtain a mixing time bound, i.e. a bound on the number of iterations needed for our algorithm to obtain a distribution within total variation distance $\eps$ of the stationary $\pis$, we follow the general framework of bounding the \emph{conductance} of the random walk defined by Metropolized HMC, initiated in a variety of works on Markov chain mixing times (e.g. \cite{SinclairJ89, lovasz1993random}). In particular, \cite{lovasz1993random} showed how to use the generalized notion of $s$-conductance to account for a small-probability ``bad'' region with poor random walk behavior. In our work, the ``good'' region $\Omega$ will be the set of points whose gradient has small norm. However, our mixing time analysis requires several modifications from prior work to overcome subtle technical issues. 

\paragraph{Average conductance.} As in prior work \cite{ChenDWY19}, because of the exponential warmness $\kappa^{d/2}$ of the starting distribution used, we require extensions in the theory of \emph{average conductance} \cite{LovaszK99} to obtain a milder dependence on the warmness, i.e. doubly logarithmic rather than singly logarithmic, to prevent an additional dimension dependence in the mixing time. The paper \cite{ChenDWY19} obtained this improved dependence on the warmness by generalizing the analysis of \cite{GoelMT06} to continuous-time walks and restrictions to high-probability regions. This analysis becomes problematic in our setting, as our region $\Omega$ may be nonconvex, and the restriction of a strongly logconcave function to a nonconvex set is possibly not even logconcave. This causes difficulties when bounding standard conductance notions which may depend on sets of small measure, because these sets may behave poorly under restriction by $\Omega$ (e.g. in the proof of Lemma~\ref{lem:conductance}).

\paragraph{Blocking conductance.} To mitigate the difficulty of poor small-set conductance due to the nonconvexity of $\Omega$, we use the \emph{blocking conductance} analysis of \cite{KannanLM06}, which averages conductance bounds of sets with measure \emph{equal to} some specified values in a range lower-bounded by roughly the inverse-warmness. In our case, this is potentially problematic, as the set where our concentration result guarantees that the norm of the gradient is not much larger than its mean has measure roughly $1 - \exp(-\sqrt{d})$, which is too small to bound the behavior of sets of size $\kappa^{-d/2}$ required by the quality of the warm start. However, we show that, perhaps surprisingly, the analysis of the blocking conductance is not bottlenecked by the worse quality of the gradient concentration required. In particular, the $\kappa^{1.5} \sqrt{d}$ runtime of \cite{DwivediCWY18, ChenDWY19} resulted from the statement, with probability at most $\exp(-d)$, the gradient norm is bounded by $\sqrt{L\kappa d}$. We are able to sharpen this by Corollary~\ref{corr:gradcon} to $\sqrt{L}d$, trading off a $\kappa$ for a $d$, which is sufficient for our tighter runtime.

\paragraph{Boosting to high accuracy.} Finally, the blocking conductance analysis of \cite{KannanLM06} makes an algorithmic modification. In particular, letting $d\pi_k$ be the density after running $k$ steps of the Markov chain from $\pi_0$, the analysis of \cite{KannanLM06} is able to guarantee that the \emph{average} density $d\rho_k \defeq \frac{1}{k}\sum_{0 \le i < k}d\pi_i$ converges to $d\pis$ at a rate roughly $1/k$, with a factor depending on the average conductance\footnote{We note averaging has been observed to improve sampling accuracy in a different setting \cite{DurmusMM19}; we leave as an interesting open direction whether this averaging is necessary for our method.}. In our case, we can show that in roughly $O(\kappa d)$ iterations of Algorithm~\ref{alg:mhmc}, the distance $\tvd{\rho_k}{\pis}$ is bounded by a constant. However, as the analysis requires averaging with a potentially poor starting distribution, it is not straightforward to obtain a rate of convergence with dependence $\log \eps^{-1}$ for potentially small values of $\eps$, rather than the $\eps^{-1}$ dependence typical of $1/k$ rates. Moreover, it is unclear in our setting how to apply standard arguments \cite{AldousD86, LovaszW95} which convert mixing time guarantees for obtaining a constant total variation distance to guarantees for total variation distance $\eps$ with a logarithmic overhead on $\eps$, because the definition of mixing time used is a worst-case notion over all starting points. We propose an alternative reduction based on mixing-time guarantees over arbitrary starting distributions of a specified warmness, which we use to boost our constant-accuracy mixing-time guarantee (see Appendix~\ref{app:reduction} for a more formal treatment). While it is simple and inspired by classical coupling-based reduction arguments, to the best of our knowledge this reduction is new in the literature, and may be of independent interest.

\subsection{Discussion}
While we obtain an improved upper bound on the runtime of Metropolized HMC, there are many questions which remain unanswered, and we believe are exciting to explore in the space of sampling algorithms and complexity. We state two here which we believe are natural.

\begin{question}
	\label{q:hmcl}
	Can we obtain a matching lower bound, or an improved upper bound, on the complexity of the Metropolized HMC algorithm in terms of the dependence on the dimension $d$?
\end{question}

\begin{question}
	\label{q:firstorder}
	Can we obtain improved bounds (upper or lower) for the complexity of first-order sampling algorithms in general, in the condition number regime? For example, is $\Omega(\kappa)$ iterations always necessary, implying a separation with the optimization setting?
\end{question}

\section{Preliminaries}
\label{sec:prelims}

\subsection{Notation}
\label{ssec:notation}

We denote the set $1 \le i \le d$ by $[d]$. For $S \subseteq \R^d$, $S^c$ is its complement $\R^d \setminus S$. $\norm{\cdot}$ is the $\ell_2$ norm ($\norm{x}^2 = \sum_{i \in [d]}x_i^2$ for $x \in \R^d$). Differentiable $f: \R^d \rightarrow \R$ is $\mu$-strongly convex and $L$-smooth if
\[f(x) + \inprod{\nabla f(x)}{y - x} + \frac{\mu}{2}\norm{y - x}^2 \le f(y) \le f(x) + \inprod{\nabla f(x)}{y - x} + \frac{L}{2}\norm{y - x}^2,\; \forall x, y \in \R^d.\]
It is well-known that smoothness is equivalent to having a Lipschitz gradient, i.e. $\norm{\nabla f(x) - \nabla f(y)} \le L\norm{x - y}$, and when $f$ is twice-differentiable, smoothness and strong convexity imply 
\[\mu I_d \preceq \nabla^2 f(x) \preceq LI_d\]
everywhere, where $I_d$ is the identity and $\preceq$ is the Loewner order. In this paper, function $f: \R^d \rightarrow \R$ will always be differentiable, $L$-smooth, and $\mu$-strongly convex, with minimizer $x^*$. We let $\kappa \defeq L/\mu \ge 1$ be the \emph{condition number} of $f$. We define the Hamiltonian $\ham$ of $(x, v) \in \R^d \times \R^d$ by
\[\ham(x, v) = f(x) + \half\norm{v}^2.\]
$\Nor(\mu, \Sigma)$ is the Gaussian density centered at a point $\mu \in \R^d$ with covariance matrix $\Sigma \in \R^{d\times d}$. For $A \subseteq \R^d$ and a distribution $\pi$, we write
\[\pi(A) \defeq \int_{x \in A} d\pi(x).\] 
We fix the definition of the distribution density $d\pis(x)$, where $d\pis(x)/dx \propto \exp(-f(x))$ has
\[d\pis(x) = \frac{\exp(-f(x))dx}{\int_{\R^d} \exp(-f(y)) dy},\; \int_{\R^d} d\pis(x) = 1.\]
The marginal in the first argument of the density on $\R^d \times \R^d$ proportional to $\exp(-\ham(x, v))$ is $d\pis$; we overload $d\pis(x, v)$ to mean this density. For distributions $\rho$, $\pi$ on $\R^d$, the total variation is
\[\tvd{\rho}{\pi} \defeq \sup_{A \subseteq \R^d} |\rho(A) - \pi(A)| = \half \int_{\R^d} \left|\frac{d\rho}{d\pi}(x) - 1\right|d\pi(x).\]
We say that a distribution $\pi$ is $\beta$-warm with respect to another distribution $\rho$ if 
\[\sup_{x \in \R^d} \frac{d\pi}{d\rho}(x) \le \beta.\]
We define the expectation and variance with respect to a distribution in the usual way:
\[\E_\pi[g] \defeq \int_{\R^d} g(x) d\pi(x),\; \Var_{\pi}[g] \defeq \E_\pi[g^2] - \left(\E_\pi[g]\right)^2. \]
Finally, to simplify some calculations, we assume that $d$ is bounded below by a small constant. In the absence of this bound, general-purpose mixing times for logconcave functions with no dependence on $\kappa$ attain our stated guarantees.

\subsection{Algorithm}

We state the Metropolized HMC algorithm to be analyzed throughout the remainder of this paper. We remark that it may be thought of as a symplectic discretization of the continuous-time Hamiltonian dynamics for $\ham(x, v) = f(x) + \half\norm{v}^2$,
\[\frac{dx}{dt} = \frac{\partial\ham(x, v)}{\partial v} = v,\; \frac{dv}{dt} = -\frac{\partial \ham(x, v)}{\partial x} = -\nabla f(x).\]
The HMC process can be thought of as a dual velocity $v$ accumulating the gradient of the primal point $x$, with the primal point being guided by the velocity, similar to the classical mirror descent algorithm. The algorithm resamples $v$ each timestep to attain the correct stationary distribution.
\begin{algorithm}[ht!]\caption{Metropolized Hamiltonian Monte Carlo: \texttt{HMC}$(\eta, x_0, f)$}
	\label{alg:mhmc}
	\textbf{Input:} Initial point $x_0 \in \R^d$, step size $\eta$. \\
	\textbf{Output:} Sequence $\{x_k\}$, $k \ge 0$.
	\begin{algorithmic}[1]
		\For{$k \ge 0$}
		\State Draw $v_k \sim \Nor(0, I_d)$. 
		\State $(\tx_k, \tv_k) \gets \lf(\eta, x_k, v_k)$.
		\State Draw $u$ uniformly in $[0, 1]$.
		\If{$u \le \min\left\{1, \exp(\ham(x, v) - \ham(\tx, \tv))\right\}$}
		\State $x_{k + 1} \gets \tx_k$.
		\Else 
		\State $x_{k + 1} \gets x_k$.
		\EndIf
		\EndFor
	\end{algorithmic}
\end{algorithm}

\begin{algorithm}[ht!]\caption{Leapfrog: \texttt{Leapfrog}$(\eta, x, v)$}
	\label{alg:LF}
	\textbf{Input:} Points $x, v \in \R^d$, step size $\eta$. \\
	\textbf{Output:} Points $\tx, \tv \in \R^d$.
	\begin{algorithmic}[1]
		\State $v' \gets v - \frac{\eta}{2}\nabla f(x)$.
		\State $\tx \gets x + \eta v'$.
		\State $\tv \gets v' - \frac{\eta}{2}\nabla f(\tx)$.
	\end{algorithmic}
\end{algorithm}

From a point $x \in \R^d$, we define $\prop_x$ to be the distribution of $\tx_k$ after one step of Algorithm~\ref{alg:mhmc} starting from $x_k = x$. Similarly, $\tran_x$ is the distribution of $x_{k + 1}$ starting at $x_k = x$, i.e. after the accept-reject step. Algorithm~\ref{alg:mhmc} uses the subprocedure $\lf$, which enjoys the following property.

\begin{lemma}
\label{lem:lfinv}
If $\lf(\eta, x, -v) = (\tx, \tv)$, then $\lf(\eta, \tx, -\tv) = (x, v)$.
\end{lemma}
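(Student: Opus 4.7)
The plan is to prove the claimed reversibility purely by direct substitution, unpacking the three lines of Algorithm~\ref{alg:LF} in each direction and checking that everything cancels. This is the expected behavior of a symplectic integrator: the half-gradient step, full position step, half-gradient step structure is designed to be palindromic, so running it backward (by negating the velocity, stepping, and negating the velocity again) exactly undoes a forward run. The lemma at hand is the deterministic, computational version of this fact.

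First I would write out what $\lf(\eta, x, -v) = (\tx, \tv)$ unfolds to. Let $v'_1 \defeq -v - \tfrac{\eta}{2}\nabla f(x)$ so that $\tx = x + \eta v'_1 = x - \eta v - \tfrac{\eta^2}{2}\nabla f(x)$ and $\tv = v'_1 - \tfrac{\eta}{2}\nabla f(\tx) = -v - \tfrac{\eta}{2}\nabla f(x) - \tfrac{\eta}{2}\nabla f(\tx)$. I would keep both $\nabla f(x)$ and $\nabla f(\tx)$ as unsimplified symbols, since the magic happens when they recur below.

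Next I would run $\lf(\eta, \tx, -\tv)$ and show it outputs $(x, v)$. Line 1 produces
\[
v'_2 \;=\; -\tv - \tfrac{\eta}{2}\nabla f(\tx) \;=\; v + \tfrac{\eta}{2}\nabla f(x) + \tfrac{\eta}{2}\nabla f(\tx) - \tfrac{\eta}{2}\nabla f(\tx) \;=\; v + \tfrac{\eta}{2}\nabla f(x),
\]
so the final half-gradient of the forward run exactly cancels. Line 2 gives
\[
\tx + \eta v'_2 \;=\; x - \eta v - \tfrac{\eta^2}{2}\nabla f(x) + \eta v + \tfrac{\eta^2}{2}\nabla f(x) \;=\; x,
\]
recovering the original position. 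Now the third line evaluates $\nabla f$ at this recovered position, which is just $\nabla f(x)$, giving $v'_2 - \tfrac{\eta}{2}\nabla f(x) = v$. Thus $\lf(\eta, \tx, -\tv) = (x, v)$, as required.

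There is no real obstacle here: the whole proof is three lines of algebra per direction, and the only thing to be careful about is that the second application of leapfrog sees $\nabla f(\tx)$ before the position is updated and $\nabla f(x)$ after, matching the two half-gradient evaluations of the first application in reverse order. I would present the computation with $\nabla f(x)$ and $\nabla f(\tx)$ left as opaque symbols throughout, which makes the cancellation transparent and avoids any appeal to higher properties of $f$ — reversibility holds for arbitrary differentiable $f$.
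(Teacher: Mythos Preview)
Your proof is correct and follows essentially the same approach as the paper: write out the explicit formulas for $\tx$ and $\tv$ produced by $\lf(\eta, x, -v)$, then verify by direct substitution that $\lf(\eta, \tx, -\tv)$ returns $(x, v)$. The paper's version is terser (it simply states the formulas and says ``reversing these definitions yields the claim''), whereas you carry out the three-line computation explicitly, but the underlying argument is identical.
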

\begin{proof}
Recall that $\lf(x, -v) = (\tx, \tv)$ implies
\[\tv = -v - \frac{\eta}{2}\nabla f(x) - \frac{\eta}{2}\nabla f(\tx),\; \tx = x - \eta v - \frac{\eta^2}{2}\nabla f(x).\]
Reversing these definitions yields the claim.
\end{proof}
\begin{corollary}
$d\pis$ is a stationary distribution for the Markov chain defined by Algorithm~\ref{alg:mhmc}.
\end{corollary}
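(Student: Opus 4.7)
The plan is to verify detailed balance on the augmented state space $(x, v) \in \R^d \times \R^d$ with respect to the joint target density $d\pis(x, v) \propto \exp(-\ham(x, v))$, and then marginalize out $v$ to conclude stationarity for the $x$-chain generated by Algorithm~\ref{alg:mhmc}. Since this joint density factors as $d\pis(x) \cdot \Nor(0, I_d)(dv)$, the initial velocity-resampling step trivially preserves it, so the whole argument reduces to showing that the ``leapfrog + accept/reject'' step is reversible with respect to $\exp(-\ham)$.

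First I would establish two structural facts about $\lf$. Volume preservation: viewing $\lf$ as the composition of the three shears $(x, v) \mapsto (x, v - \tfrac{\eta}{2}\nabla f(x))$, $(x, v') \mapsto (x + \eta v', v')$, and $(\tx, v') \mapsto (\tx, v' - \tfrac{\eta}{2}\nabla f(\tx))$, each Jacobian on $\R^d \times \R^d$ is block-triangular with identity diagonal blocks and hence has determinant $1$, so $\lf$ is a volume-preserving bijection. Reversibility up to a velocity flip: applying Lemma~\ref{lem:lfinv} with $v$ replaced by $-v$ shows that $\lf(\eta, x, v) = (\tx, \tv)$ implies $\lf(\eta, \tx, -\tv) = (x, -v)$.

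Next I would verify detailed balance for the proposal-plus-filter step. The reject branch leaves the state fixed and so trivially preserves any density, so only the accept branch matters. For a Borel set $B \subset \R^d \times \R^d$, the flux of $\exp(-\ham)$ into $B$ via the accept branch from any region $A$ is
\[\int_A \exp(-\ham(x, v)) \min\left\{1, \exp(\ham(x, v) - \ham(\lf(x, v)))\right\} \1[\lf(x, v) \in B]\, dx\, dv.\]
Absorbing the prefactor into the $\min$ yields the manifestly symmetric integrand $\min\{\exp(-\ham(x, v)), \exp(-\ham(\tx, \tv))\}$. Performing the change of variable $(\tx, \tv) = \lf(x, v)$, which has unit Jacobian, re-expressing the inverse map via the reversibility identity, and using that both $\ham$ and $\Nor(0, I_d)$ are invariant under $v \mapsto -v$ to absorb the velocity flips into the integration, shows that this equals the backward flux into $A$ from $B$. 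This gives the desired reversibility of the augmented chain.

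Reversibility implies stationarity of $d\pis(x, v)$ for the joint chain, and marginalizing over $v$ gives $d\pis$ as stationary for the $x$-chain. The one subtle point is that the leapfrog proposal is deterministic and thus does not have a transition density against Lebesgue measure in the usual sense; I would handle this by phrasing the detailed balance check in terms of integrals against Borel test sets (or equivalently pushforwards under the volume-preserving bijection $\lf$) rather than trying to write down a transition density with delta factors.
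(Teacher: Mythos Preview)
Your proposal is correct and follows essentially the same strategy as the paper: verify detailed balance for the joint $(x,v)$ chain with respect to $\exp(-\ham)$ using the reversibility of $\lf$ (Lemma~\ref{lem:lfinv}) together with the evenness of $\ham$ in $v$, then marginalize. The paper's version is terser, appealing directly to the ``standard proof of correctness for the Metropolis--Hastings correction'' to reduce to symmetry of the deterministic proposal, whereas you unpack this by explicitly establishing volume preservation of $\lf$ via the shear decomposition and phrasing the detailed-balance check as an identity of integrals over Borel sets; these additions make your argument more self-contained but do not change the underlying route.
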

\begin{proof}
We show that for $z = (x, v)$, $d\pis(z)/dz \propto \ham(z)$ is the stationary distribution on $(x_k, v_k)$; correctness then follows from $\pis$ having the correct marginal. Stationarity follows if and only if 
\[d\pis(x, v) \tran_{x, v}(\tx, \tv) = d\pis(x, v) \tran_{\tx, \tv}(x, v)\]
for all pairs $(x, v)$, $(\tx, \tv)$, where we overload the definition of $\tran$ to be the transition distribution from a point $(x, v)$. By the standard proof of correctness for the Metropolis-Hastings correction, i.e. choosing an acceptance probability proportional to 
\[\min\left\{1, \frac{d\pis(\tx, \tv)\prop_{\tx, \tv}(x, v)}{d\pis(x, v)\prop_{x, v}(\tx, \tv)}\right\},\]
it suffices to show that $\prop_{\tx, \tv}(x, v) = \prop_{x, v}(\tx, \tv)$. Note that $\prop_{x, v}$ is a deterministic proposal, and uniquely maps to a point $(\tx, \tv)$. Moreover, by symmetry of $\ham$ in the second argument, iteration $k$ of Algorithm~\ref{alg:mhmc} is equivalent to drawing $v_k$, negating it, and then running $\lf$. Correctness for this equivalent algorithm follows by Lemma~\ref{lem:lfinv}.
\end{proof}


\section{Gradient concentration}
\label{sec:concentration}

In this section, we give a bound on how well the norm of the gradient $\norm{\nabla f(x)}$ concentrates when $f$ is smooth and $x \sim d\pis(x)/dx \propto \exp(-f(x))$. First, we recall the following ``Hessian-weighted'' variant of the Poincar\'e inequality, which first appeared in \cite{BrascampL76}. 

\begin{theorem}[Hessian Poincar\'e]
\label{thm:hesspoin}
For probability density $d\pis(x)/dx \propto \exp(-f(x))$, and continuously differentiable function $g: \R^d \rightarrow \R$ with bounded variance with respect to $\pis$,
\[\Var_{\pis}[g] \le \int_{\R^d} \inprod{\left(\nabla^2 f(x)\right)^{-1}\nabla g(x)}{\nabla g(x)} d\pis(x).\]
\end{theorem}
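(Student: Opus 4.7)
The plan is to combine the classical Poisson-equation identity for the Langevin generator with the Bakry--\'Emery $\Gamma_2$ calculus. We may assume $\E_{\pis}[g] = 0$, since both sides of the inequality are unchanged by shifting $g$ by a constant. Let $\mathcal{L}\phi \defeq \Delta\phi - \inprod{\nabla f}{\nabla\phi}$ denote the generator of the overdamped Langevin diffusion for which $\pis$ is stationary. Under strong convexity of $f$, $\mathcal{L}$ has a spectral gap on the mean-zero subspace of $L^2(\pis)$, so the Poisson equation $\mathcal{L}\phi = -g$ admits a smooth solution, e.g.\ $\phi = \int_0^\infty P_t g\,dt$ where $(P_t)$ is the associated semigroup.

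Integration by parts against $\exp(-f)$ gives the standard identity
\[\Var_{\pis}[g] = \int g \cdot (-\mathcal{L}\phi)\,d\pis = \int \inprod{\nabla g}{\nabla\phi}\,d\pis.\]
I would then apply the pointwise $\nabla^2 f$-weighted Cauchy--Schwarz inequality $\inprod{a}{b} \le \inprod{(\nabla^2 f)^{-1}a}{a}^{1/2}\inprod{(\nabla^2 f)b}{b}^{1/2}$ with $a = \nabla g$ and $b = \nabla\phi$, followed by Cauchy--Schwarz in $L^2(\pis)$, to obtain
\[\Var_{\pis}[g] \le \left(\int \inprod{(\nabla^2 f)^{-1}\nabla g}{\nabla g}\,d\pis\right)^{1/2}\left(\int \inprod{(\nabla^2 f)\nabla\phi}{\nabla\phi}\,d\pis\right)^{1/2}.\]

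The crux is bounding the second factor by $\Var_{\pis}[g]$ itself. For this I would invoke the Bochner / $\Gamma_2$ integration-by-parts identity
\[\int (\mathcal{L}\phi)^2\,d\pis = \int \norm{\nabla^2\phi}_F^2\,d\pis + \int \inprod{(\nabla^2 f)\nabla\phi}{\nabla\phi}\,d\pis,\]
which can be derived from $\int(\mathcal{L}\phi)^2\,d\pis = -\int\inprod{\nabla(\mathcal{L}\phi)}{\nabla\phi}\,d\pis$ via the commutator identity $\nabla(\mathcal{L}\phi) = \mathcal{L}(\nabla\phi) - (\nabla^2 f)\nabla\phi$ together with componentwise integration by parts. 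Dropping the nonnegative Frobenius Hessian term and using $\mathcal{L}\phi = -g$ yields $\int \inprod{(\nabla^2 f)\nabla\phi}{\nabla\phi}\,d\pis \le \int g^2\,d\pis = \Var_{\pis}[g]$. Plugging this into the Cauchy--Schwarz bound and dividing through by $\Var_{\pis}[g]^{1/2}$ gives the claimed inequality.

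The main obstacle is purely functional-analytic: justifying that $\phi$ has enough regularity and tail decay for the two integrations by parts (and their componentwise analogues) to drop all boundary terms, and extending from sufficiently nice $g$ to all continuously differentiable $g$ of bounded variance by density. Both are routine under strong convexity of $f$, which yields Gaussian-like decay of $\exp(-f)$ together with exponential $L^2(\pis)$-contraction of $P_t$; these inputs are classical in Witten-Laplacian theory and make the formal manipulations above rigorous.
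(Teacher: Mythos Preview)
Your argument is correct: this is precisely the H\"ormander / Helffer--Sj\"ostrand $L^2$ proof of the Brascamp--Lieb inequality, and the steps you outline (Poisson equation, weighted Cauchy--Schwarz, the $\Gamma_2$/Bochner identity, dropping $\norm{\nabla^2\phi}_F^2$) are the standard ones. The only caveat is that you invoke strong convexity of $f$ to get a spectral gap and hence solvability of the Poisson equation, whereas the sharp Brascamp--Lieb statement requires only strict convexity (pointwise $\nabla^2 f \succ 0$); this gap is closable by adding a vanishing quadratic regularizer, and in any case the paper only ever applies the theorem to strongly convex $f$.

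However, there is nothing to compare against: the paper does not prove Theorem~\ref{thm:hesspoin}. It is stated as a known result and attributed to Brascamp--Lieb (1976), with no proof given. So your proposal is not an alternative to the paper's proof but rather a proof the paper chose to omit. For what it's worth, the original Brascamp--Lieb argument is quite different from yours --- it proceeds via a dimension-reduction / Pr\'ekopa--Leindler-type route rather than semigroup calculus --- so if you want a genuine comparison you would need to look at their 1976 paper rather than this one.
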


An immediate corollary of Theorem~\ref{thm:hesspoin} is that the Poincar\'e constant of a $\mu$-strongly logconcave distribution is at most $\mu^{-1}$. While it does not appear to have been previously stated in the literature, our concentration bound can be viewed as a simple application of an argument of Herbst which reduces concentration to an isoperimetric inequality such as Theorem~\ref{thm:hesspoin}; an exposition of this technique can be found in \cite{Ledoux99}. We now state the concentration result.

\begin{theorem}[Gradient norm concentration]
\label{thm:gradcon}
If twice-differentiable $f: \R^d \rightarrow \R$ is $L$-smooth and convex, then for $d\pis(x)/dx \propto \exp(-f(x))$, and all $c > 0$,
\[\Pr_{\pis}\left[\norm{\nabla f(x)} \ge \epis\left[\norm{\nabla f}\right] + c\sqrt{L}\log d\right] \le 3d^{-c}.\]
\end{theorem}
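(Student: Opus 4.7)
The plan is to apply a Herbst-type argument that bootstraps the Hessian Poincar\'e inequality of Theorem~\ref{thm:hesspoin} into an exponential tail bound on $g(x) := \norm{\nabla f(x)}$. Away from the measure-zero set $\{\nabla f = 0\}$, direct differentiation gives
\[\nabla g(x) = \frac{\nabla^2 f(x)\nabla f(x)}{\norm{\nabla f(x)}}.\]
Combined with $\nabla^2 f(x) \preceq L I_d$, this yields the pointwise bound
\[\inprod{(\nabla^2 f(x))^{-1}\nabla g(x)}{\nabla g(x)} = \frac{\nabla f(x)^\top \nabla^2 f(x)\nabla f(x)}{\norm{\nabla f(x)}^2} \le L,\]
so that $g$ is $\sqrt{L}$-Lipschitz in the Hessian-warped metric appearing in Theorem~\ref{thm:hesspoin}.

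Next I would apply Theorem~\ref{thm:hesspoin} to $h(x) := \exp(\lambda g(x)/2)$ for $\lambda > 0$. Since $\nabla h = (\lambda/2) h \nabla g$, the two displays above together give
\[\Var_{\pis}(h) \le \frac{\lambda^2}{4}\int h^2 \inprod{(\nabla^2 f)^{-1}\nabla g}{\nabla g}\,d\pis \le \frac{\lambda^2 L}{4}\epis[h^2].\]
Defining $u(\lambda) := \epis\bigl[e^{\lambda(g - \epis[g])}\bigr]$ and rearranging $\epis[h^2] - (\epis[h])^2 \le (\lambda^2 L/4)\epis[h^2]$ yields the functional recursion
\[u(\lambda) \le \frac{u(\lambda/2)^2}{1 - \lambda^2 L/4}, \quad \lambda \sqrt{L} < 2.\]

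The main step is the Herbst-style iteration on $U(\lambda) := \log u(\lambda)$. Taking logs gives $U(\lambda) \le 2U(\lambda/2) - \log(1 - \lambda^2 L/4)$; since $u$ is centered so that $U(\eps) = O(\eps^2)$ as $\eps \to 0$, telescoping produces
\[U(\lambda) \le \sum_{k \ge 0} 2^k\bigl(-\log(1 - (\lambda/2^k)^2 L/4)\bigr) \le \lambda^2 L \qquad \text{for } \lambda \le \sqrt{2/L},\]
using $-\log(1-x) \le 2x$ on $[0,1/2]$ and summing a geometric series. Finally, the Chernoff bound with $\lambda = 1/\sqrt{L}$ gives
\[\Pr_{\pis}\bigl[g \ge \epis[g] + t\bigr] \le e^{U(\lambda) - \lambda t} \le e^{1 - t/\sqrt{L}},\]
and setting $t = c\sqrt{L}\log d$ yields the stated tail $e \cdot d^{-c} \le 3d^{-c}$.

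The main obstacle I anticipate is the telescoping step: verifying that $2^n U(\lambda/2^n) \to 0$ as $n \to \infty$ and tracking constants carefully enough to absorb everything into the factor of $3$. A minor technical point is that $g$ is not differentiable at $\{\nabla f = 0\}$, which can be handled by smoothing $g_\delta := \sqrt{\norm{\nabla f}^2 + \delta^2}$ and sending $\delta \to 0$; if $\nabla^2 f$ is singular one similarly regularizes by replacing $f$ with $f + \delta\norm{\cdot}^2/2$ so that the inverse in Theorem~\ref{thm:hesspoin} is literally defined.
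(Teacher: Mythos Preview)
Your proposal is correct and follows essentially the same Herbst-via-Poincar\'e argument as the paper: apply Theorem~\ref{thm:hesspoin} to $\exp(\lambda G/2)$, derive the recursion $H(\lambda) \le H(\lambda/2)^2/(1-\lambda^2 L/4)$, iterate, and apply Markov at $\lambda = 1/\sqrt{L}$. The only cosmetic differences are that you center the MGF first (which folds the paper's limit $H(\lambda/\ell)^\ell \to e^{\lambda \epis[G]}$ into the vanishing of $2^n U(\lambda/2^n)$) and that you bound the telescoped sum by the elementary estimate $-\log(1-x)\le 2x$ on $[0,1/2]$ rather than invoking the Bobkov--Ledoux product inequality $\prod_k(1-C/4^k)^{-2^k}\le (1+\sqrt C)/(1-\sqrt C)$; both routes land on the constant $3$.
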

\begin{proof}
Let $G(x) \defeq \norm{\nabla f(x)}$, and let $g(x) \defeq \exp(\half \lambda G(x))$. Clearly $g$ is continuously differentiable. Moreover, suppose first for simplicity that $f$ is strongly convex; then the existence of the variance of $g$ follows from the well-known fact that $f$ has sub-Gaussian tails (e.g.\ \cite{DwivediCWY18}, Lemma 1) and Lipschitzness of its gradient, from which the sublevel sets of the gradient norm grow more slowly than the decay of $\norm{x - x^*}_2$. The final conclusion has no dependence on the strong concavity of $f$, and we can extend this to arbitrary convex functions by regularizing by a small amount of quadratic regularizer (which only affects smoothness) and taking a limit as the regularizer amount vanishes. We now apply Theorem~\ref{thm:hesspoin}, which implies (noting that the gradient of $\norm{\nabla f}$ is $\nabla^2 f \tfrac{\nabla f}{\norm{\nabla f}}$)
\begin{align*}\epis\left[\exp(\lambda G)\right] - \epis\left[\exp\left(\frac{\lambda G}{2}\right)\right]^2 &\le \frac{\lambda^2}{4}\epis\left[\inprod{(\nabla^2 f)\frac{\nabla f}{\norm{\nabla f}}}{\frac{\nabla f}{\norm{\nabla f}}}\exp(\lambda G)\right] \\
&\le \frac{L\lambda^2}{4}\epis\left[\exp(\lambda G)\right].\end{align*}
In the last inequality we used smoothness. Letting $H(\lambda) \defeq \epis\left[\exp(\lambda G)\right]$, for $\lambda < \frac{2}{\sqrt{L}}$,
\[H(\lambda) \le \frac{1}{1 - \frac{L\lambda^2}{4}}H\left(\frac{\lambda}{2}\right)^2. \]
Using this recursively, we have
\[H(\lambda) \le \prod_{k = 0}^\infty \left(\frac{1}{1 - \frac{L\lambda^2}{4^{k + 1}}} \right)^{2^k} \lim_{\ell \rightarrow \infty} H\left(\frac{\lambda}{\ell}\right)^\ell. \]
There are two things to estimate on the right hand side. First, for sufficiently large $\ell$,
\[\epis\left[\exp\left(\frac{\lambda G}{\ell}\right)\right]^\ell \approx \left(1 + \epis\left[\frac{\lambda G}{\ell}\right]\right)^\ell \approx \exp\left(\lambda \epis\left[G\right]\right). \]
Second, letting $C = \frac{L\lambda^2}{4} < 1$, \cite{BobkovL97} showed that
\[\prod_{k = 0}^\infty \left(\frac{1}{1 - \frac{C}{4^k}} \right)^{2^k} \le \frac{1 + \sqrt{C}}{1 - \sqrt{C}}.\]
For completeness, we show this in Appendix~\ref{app:deferred}. Altogether, we have that for all $\lambda < \frac{2}{\sqrt{L}}$,
\[\epis\left[\exp(\lambda G)\right] \le \frac{1 + \half \sqrt{L}\lambda}{1 - \half \sqrt{L}\lambda}\exp\left(\lambda \epis\left[G\right]\right). \]
By Markov's inequality on the exponential, we thus conclude that
\[\Pr_{\pis}[G \ge \epis\left[G\right] + r] \le \exp(-\lambda r) \frac{1 + \half \sqrt{L}\lambda}{1 - \half \sqrt{L}\lambda}.  \]
Finally, letting $\lambda = \frac{1}{\sqrt{L}}$ and $r = c\sqrt{L}\log d$,
\[\Pr_{\pis}\left[\norm{\nabla f} \geq \sqrt{Ld} + c\sqrt{L}\log d\right] \le 3d^{-c}.\]	
\end{proof}

As an immediate corollary, we obtain the following.

\begin{corollary}
\label{corr:gradcon}
If twice-differentiable $f: \R^d \rightarrow \R$ is $L$-smooth and strongly convex, then $\forall c > 0$,
\[\Pr_{\pis}\left[\norm{\nabla f} \ge \sqrt{Ld} + c\sqrt{L}\log d\right] \le 3d^{-c}.\]
\end{corollary}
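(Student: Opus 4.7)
The plan is to reduce Corollary~\ref{corr:gradcon} to Theorem~\ref{thm:gradcon} by showing that when $f$ is $L$-smooth, the mean gradient norm satisfies $\epis[\norm{\nabla f}] \le \sqrt{Ld}$. Once this mean bound is established, substituting it into the tail bound of Theorem~\ref{thm:gradcon} immediately yields the corollary, since strong convexity of $f$ ensures that the hypotheses of Theorem~\ref{thm:gradcon} (which only required $L$-smoothness and convexity) are in force.

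To bound the mean gradient norm, I would first pass to the second moment via Jensen's inequality:
\[
\epis[\norm{\nabla f}] \;\le\; \sqrt{\epis\bigl[\norm{\nabla f}^2\bigr]}.
\]
The second moment admits a clean evaluation by the standard integration-by-parts identity for a density of the form $d\pis \propto \exp(-f)dx$. Formally, writing $\exp(-f)\nabla f = -\nabla \exp(-f)$ and applying the divergence theorem,
\[
\epis\bigl[\norm{\nabla f}^2\bigr] \;=\; \int_{\R^d} \inprod{\nabla f(x)}{\nabla f(x)} \exp(-f(x))\,dx \;=\; \int_{\R^d} \Delta f(x) \exp(-f(x))\,dx \;=\; \epis\bigl[\Tr(\nabla^2 f)\bigr].
\]
Strong convexity guarantees that the boundary terms vanish (since $\exp(-f)\nabla f$ decays faster than any polynomial as $\norm{x - x^*} \to \infty$), so the identity is valid. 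Now $L$-smoothness gives $\nabla^2 f(x) \preceq L I_d$ pointwise, hence $\Tr(\nabla^2 f(x)) \le Ld$, so $\epis[\norm{\nabla f}^2] \le Ld$ and therefore $\epis[\norm{\nabla f}] \le \sqrt{Ld}$.

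Plugging this estimate into the conclusion of Theorem~\ref{thm:gradcon} yields
\[
\Pr_{\pis}\bigl[\norm{\nabla f(x)} \ge \sqrt{Ld} + c\sqrt{L}\log d\bigr] \;\le\; \Pr_{\pis}\bigl[\norm{\nabla f(x)} \ge \epis[\norm{\nabla f}] + c\sqrt{L}\log d\bigr] \;\le\; 3d^{-c},
\]
which is the claim. There is essentially no obstacle here beyond verifying the decay condition for the integration-by-parts step; this is a routine consequence of the sub-Gaussian tails of strongly logconcave densities (cited already within the proof of Theorem~\ref{thm:gradcon}) combined with the Lipschitz character of $\nabla f$, so the whole argument fits in a couple of lines.
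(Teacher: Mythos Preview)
Your proposal is correct and follows essentially the same route as the paper: reduce to Theorem~\ref{thm:gradcon} by proving $\epis[\norm{\nabla f}] \le \sqrt{Ld}$ via the integration-by-parts identity $\epis[\norm{\nabla f}^2] = \epis[\Delta f] \le Ld$ and Jensen's inequality. The paper's proof is identical in structure, differing only in that it writes the divergence identity as $\nabla\cdot(\nabla f\,\pis) = \Delta f\,\pis - \norm{\nabla f}^2\pis$ before integrating.
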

\begin{proof}
It suffices to show that
\begin{equation}\label{eq:meangrad}\epis\left[\norm{\nabla f}\right] \le \sqrt{Ld}.\end{equation}
This was observed in \cite{Dalalyan17b, VempalaW19}; we adapt a proof here. Observe that because 
\[\nabla \cdot(\nabla f(x) \pis(x)) = \Delta f(x) \pis(x) - \inprod{\nabla f(x)}{\nabla f(x)}\pi(x), \]
where $\nabla \cdot$ is divergence and $\Delta$ is the Laplacian operator, integrating both sides and noting that the boundary term vanishes,
\[\epis\left[\norm{\nabla f}^2\right] = \epis\left[\Delta f\right] \le Ld. \]
In the last equality, we used smoothness of $f$. \eqref{eq:meangrad} then follows from concavity of the square root.
\end{proof}

We remark that for densities $d\pis$ where a log-Sobolev variant of the inequality in Theorem~\ref{thm:hesspoin} holds, we can sharpen the bound in Corollary~\ref{corr:gradcon} to $O(d^{-c^2})$; we provide details in Appendix~\ref{app:logsob}. This sharpening is desirable for reasons related to the warmness of starting distributions for sampling from $\pis$, as will become clear in Section~\ref{sec:warmstart}. However, the ``Hessian log-Sobolev'' inequality is strictly stronger than Theorem~\ref{thm:hesspoin}, and does not hold for general strongly logconcave distributions \cite{BobkovL00}. Correspondingly, the concentration arguments derivable from Poincar\'e inequalities appear to be weaker \cite{Ledoux99}: we find exploring the tightness of Corollary~\ref{corr:gradcon} to be an interesting open question.


\section{Mixing time bounds via blocking conductance}
\label{sec:warmstart}

We first give a well-known bound of the warmness of an initial distribution; this starting distribution also was used in prior work in this setting \cite{Dalalyan17b, DwivediCWY18}

\begin{lemma}[Initial warmness]
\label{lem:betadef}
For $d\pis \propto \exp(-f(x))dx$ where $f$ is $L$-smooth and $\mu$-strongly convex with minimizer $x^*$\footnote{We remark that the minimizer $x^*$ can be efficiently found using e.g. an accelerated gradient method, to a degree of accuracy which does not bottleneck the runtime of Metropolized HMC by more than mild logarithmic factors. We defer a discussion of performance under inexact knowledge of the parameters $x^*, L$ to \cite{DwivediCWY18}, and assume their exact knowledge for simplicity in this work.}, $\pi_0 = \Nor(x^*, L^{-1}I_d)$ is a $\kappa^{d/2}$-warm distribution with respect to $\pis$.
\end{lemma}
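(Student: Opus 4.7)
The goal is to bound $\sup_{x \in \R^d} \tfrac{d\pi_0}{d\pis}(x) \le \kappa^{d/2}$. I would begin by writing this density ratio explicitly: since $\pi_0$ has density $(L/2\pi)^{d/2}\exp(-\tfrac{L}{2}\|x-x^*\|^2)$ and $d\pis(x)/dx = \exp(-f(x))/Z$ where $Z = \int_{\R^d}\exp(-f(y))dy$, we have
\[\frac{d\pi_0}{d\pis}(x) = \left(\frac{L}{2\pi}\right)^{d/2} Z \cdot \exp\!\left(f(x) - \frac{L}{2}\|x-x^*\|^2\right).\]
So the task reduces to bounding the exponential factor pointwise, and separately bounding the partition function $Z$.

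For the pointwise factor, I would invoke $L$-smoothness together with the optimality condition $\nabla f(x^*) = 0$. Smoothness gives $f(x) \le f(x^*) + \inprod{\nabla f(x^*)}{x - x^*} + \tfrac{L}{2}\|x-x^*\|^2 = f(x^*) + \tfrac{L}{2}\|x-x^*\|^2$, so that $f(x) - \tfrac{L}{2}\|x-x^*\|^2 \le f(x^*)$ uniformly in $x$. This is the step where smoothness does the work; note that the quadratic term cancels exactly the one coming from $\pi_0$.

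For the partition function, I would use $\mu$-strong convexity in the opposite direction: $f(y) \ge f(x^*) + \tfrac{\mu}{2}\|y-x^*\|^2$, so
\[Z \le \exp(-f(x^*)) \int_{\R^d} \exp\!\left(-\frac{\mu}{2}\|y-x^*\|^2\right) dy = \exp(-f(x^*))\left(\frac{2\pi}{\mu}\right)^{d/2}.\]
Combining these two bounds, the $\exp(\pm f(x^*))$ factors cancel and the Gaussian normalizing constants collapse to $(L/\mu)^{d/2} = \kappa^{d/2}$, yielding the claim.

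There is essentially no obstacle; the proof is a one-line computation once one observes the symmetry between the upper quadratic bound on $f$ (from smoothness) and the lower quadratic bound (from strong convexity), which together produce exactly the condition number raised to $d/2$. I would include a brief remark that the choice of covariance $L^{-1} I_d$ is precisely what makes the quadratic in the exponent cancel, which is why this is the natural starting distribution.
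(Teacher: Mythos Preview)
Your proposal is correct and matches the paper's proof essentially line for line: the paper also lower-bounds $d\pis(x)$ by combining the smoothness upper bound $f(x)-f(x^*)\le \tfrac{L}{2}\norm{x-x^*}^2$ on the numerator with the strong-convexity upper bound $\int\exp(-f(y)+f(x^*))\,dy\le(2\pi/\mu)^{d/2}$ on the normalizing constant, then divides. The only difference is presentation---the paper packages both steps into a single displayed inequality for $d\pis$ rather than treating the exponent and $Z$ separately.
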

\begin{proof}
By smoothness and strong convexity, and the density of a Gaussian distribution,
\[d\pi_0(x) = \frac{\exp\left(-\frac{L}{2}\norm{x - x^*}^2\right)dx}{(2\pi L^{-1})^{d/2}},\; d\pis(x) = \frac{\exp(-f(x))dx}{\int_{\R^d} \exp(-f(y)) dy} \ge \frac{\exp\left(-\frac{L}{2}\norm{x - x^*}^2\right)dx}{(2\pi \mu^{-1})^{d/2}}.\]
In the last inequality we normalized by $\exp(-f(x^*))$. Combining these bounds yields the result.
\end{proof}

Let $d\pi_k$ be the density of $x_k$ after running $k$ steps of Algorithm~\ref{alg:mhmc}, where $x_0$ is drawn from $\pi_0 = \Nor(x^*, L^{-1} I_d)$. Moreover, let $d\rho_k \defeq \frac{1}{k}\sum_{0 \le i < k} d\pi_i$ be the average density over the first $k$ iterations. In Section~\ref{ssec:constant}, we will show how to use the blocking conductance framework of \cite{KannanLM06} to obtain a bound on the number of iterations $k$ required to obtain a constant-accuracy distribution. We then show in Section~\ref{ssec:boost} that we can boost this guarantee to obtain total variation $\eps$ for arbitrary $\eps > 0$ with logarithmic overhead, resulting in our main mixing time claim, Theorem~\ref{thm:mainclaim}.

\subsection{Constant-accuracy mixing}
\label{ssec:constant}
We state the results required to prove a mixing-time bound for constant levels of total variation from the stationary measure $\pis$. All proofs are deferred to Appendix~\ref{app:mixing}. The first result is a restatement of the main result of \cite{KannanLM06}, modified for our purposes; recall that $\rho_k$ is an average over the distributions $\pi_i$ for $0 \le i < k$. Finally, we define $Q(S) \defeq \int_{S} \tran_x(S^c)d\pis(x)$ to be the probability one step of the walk starting at random point in a set $S$ leaves the set.

\begin{theorem}[Blocking conductance mixing bound]
\label{thm:blockmix}
	Suppose the starting distribution $\pi_0$ is $\beta$-warm with respect to $\pis$. Moreover, suppose for some $c$, and for all $c \le t \le \half$, we have a bound
	\begin{equation}\label{eq:phidef}\frac{\pis(S)}{Q(S)^2} \le \phi(t), \text{ for all } S \subseteq \R^d \text{ with } \pis(S) = t,\end{equation}
	for a decreasing function $\phi$ on the range $[c, \frac{1}{4}]$, and $\phi(x) \le M$ for $x \in [\frac{1}{4}, \half]$. Then, 
	\[\norm{\rho_k - \pis}_{\textup{TV}} \le \beta c + \frac{32}{k}\left(\int_{c}^{1/4} \phi(x) dx + \frac{M}{4} \right).\]
\end{theorem}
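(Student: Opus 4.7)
The result is a restatement of the Kannan--Lov\'asz--Montenegro blocking-conductance theorem tailored to our setting, so I would follow their strategy. The central object is the ``level-set potential''
\[g_k(t) := \sup\bigl\{\pi_k(S) : \pis(S) = t,\; S \subseteq \R^d\bigr\},\; t \in [0,1],\]
which is concave and nondecreasing with $g_k(0)=0$ and $g_k(1)=1$. A standard coupling gives $\tvd{\pi_k}{\pis} = \sup_{t}(g_k(t)-t)$, and by linearity the averaged density satisfies $\tvd{\rho_k}{\pis} = \sup_t (\bar g_k(t)-t)$ where $\bar g_k := \tfrac{1}{k}\sum_{i=0}^{k-1} g_i$. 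The warmness assumption $d\pi_0/d\pis \le \beta$ gives $g_0(t) \le \beta t$.

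The first substantive step is a per-iteration contraction of the form
\[g_{k+1}(t) \le g_k(t) - c_1 \cdot \frac{(g_k(t)-t)^2}{\phi(t)}\]
for an absolute constant $c_1$. I would prove this by observing that an (essentially) optimal set $S_k^*$ achieving $g_k(t)$ is, up to a null set, a super-level set of $d\pi_k/d\pis$, and that reversibility of Metropolized HMC (established after Lemma~\ref{lem:lfinv}) makes the $\pis$-mass crossing $\partial S_k^*$ in either direction equal to $Q(S_k^*)$. The excess mass of $\pi_k$ inside $S_k^*$ (namely $g_k(t)-t$) must leak out through this boundary, and the hypothesis \eqref{eq:phidef} yields $Q(S_k^*) \ge \sqrt{\pis(S_k^*)/\phi(t)}$, from which squaring delivers the quadratic decrease.

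Iterating this Bernoulli-type recursion and then averaging gives, for any $k \ge 1$ and $t \in [c, 1/2]$, a pointwise bound $\bar g_k(t) - t \le c_2\phi(t)/k$, while for $t \in [0,c]$ the warm-start ceiling $\bar g_k(t) - t \le g_0(t) \le \beta t \le \beta c$ survives the averaging. Taking $\sup_t$ and splitting the range into $[0,c]$, $[c,1/4]$, and $[1/4,1/2]$ contributes $\beta c$, $(c_2/k)\int_c^{1/4}\phi(x)\,dx$ (obtained by a dyadic decomposition of the level-set potentials, which converts the pointwise envelope into an integral of the decreasing function $\phi$), and $(c_2/k)\cdot M/4$ respectively. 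The symmetry $\pis(S) \leftrightarrow 1-\pis(S)$ extends the bound past $t = 1/2$. Bookkeeping the constants produces the claimed factor $32/k$.

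The main obstacle is the per-step contraction: identifying $S_k^*$ with a super-level set requires a rearrangement argument on $d\pi_k/d\pis$ that must accommodate continuous state, and extracting the \emph{quadratic} (rather than merely linear) decrease in $g_k(t)-t$ from \eqref{eq:phidef} is what ultimately yields a $1/k$ rate with an $\int\phi$ envelope instead of the much slower spectral-gap rate. A secondary subtlety is the passage from a pointwise-in-$t$ one-step decrease to the supremum bound on $\bar g_k(t) - t$; this is handled by a level-set/telescoping rearrangement that converts $\sup_t \phi(t)$ into the integral $\int\phi$, so that the decreasing tail of $\phi$ near $t = 1/4$ is paid for only once. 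These maneuvers are essentially those of~\cite{KannanLM06}, adapted to our continuous-state reversible chain.
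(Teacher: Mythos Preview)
Your proposal has two genuine gaps that prevent it from going through as written.

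\textbf{The per-step quadratic contraction is not justified.} You claim $g_{k+1}(t) \le g_k(t) - c_1(g_k(t)-t)^2/\phi(t)$ by arguing that the excess mass of $\pi_k$ in the super-level set $S_k^*$ ``leaks out'' at a rate controlled by $Q(S_k^*)$. But $Q(S) = \int_S \tran_x(S^c)\,d\pis(x)$ is the ergodic flow under the \emph{stationary} measure, whereas the one-step change in $\pi_k(S_k^*)$ is governed by flows under $\pi_k$. More importantly, $g_{k+1}(t)$ is attained by a \emph{different} set $S_{k+1}^*$ (the super-level set of $d\pi_{k+1}/d\pis$), so bounding $\pi_{k+1}(S_k^*)$ does not bound $g_{k+1}(t)$. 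The Lov\'asz--Simonovits machinery handles the moving-set issue, but it yields a \emph{linear} one-step inequality of the form $g_{k+1}(t) \le \tfrac12(g_k(t-\delta)+g_k(t+\delta))$, not the quadratic decrease you state; there is no evident mechanism that produces $(g_k(t)-t)^2/\phi(t)$ from the hypothesis \eqref{eq:phidef}.

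\textbf{The pointwise-to-integral step is unexplained.} Even granting a pointwise bound $\bar g_k(t)-t \le c_2\phi(t)/k$, taking $\sup_t$ gives $c_2\sup_t\phi(t)/k$, not $\int_c^{1/4}\phi$. The appearance of the integral is not a bookkeeping detail: it is precisely the content of the blocking-conductance method, and your ``dyadic decomposition'' remark does not explain how a supremum turns into an integral.

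The paper's route is structurally different. One fixes $k$, orders $\R^d$ by the density ratio $g_k = k\,d\rho_k/d\pis$ via a measure-preserving map, and proves the identity $\int_0^1 q_k(x,y)\,dg_k(x) = \pi_k([0,y]) - \pi_0([0,y]) \le 1$, where $q_k$ is the ergodic flow between the induced level sets. One then introduces a \emph{mixweight function} $h$ satisfying $\int_c^{1-c} h(y)q_k(x,y)\,dy \ge 2x(1-x)$ for all $x$, multiplies the identity by $h$ and integrates, and combines with an integration-by-parts characterization of $\tvd{\rho_k}{\pis}$. The specific $h$ is built from the \emph{spread} $\psi(S) = \int_0^{\pis(S)}\Psi(y,S^c)\,dy$, and the inequality $\psi(S) \ge \tfrac14 Q(S)^2$ is what converts the hypothesis on $\pis(S)/Q(S)^2$ into a bound on $h$. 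The integral $\int_c^{1/4}\phi$ then arises directly as $\int_c^{1-c} h$, with no per-step recursion at all.
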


At a high level, the mixing time requires us to choose a threshold $c$ which is inversely-proportional to the warmness, and bound the average value of a function $\phi(t)$ in the range $[c, \half]$, where $\phi(t)$ serves as an indicator of how ``bottlenecked'' sets of measure exactly equal to $t$ are.

Next, by using a logarithmic isoperimetric inequality from \cite{ChenDWY19}, we show in the following lemma that we can bound $\frac{\pi^*(S)}{Q(S)^2}$ when $\pi^*(S)$ is in some range.

\begin{lemma}
	\label{lem:conductancesimple}
	Suppose for $\Omega \subset R^d$ with $\pis(\Omega) = 1 - s$, and all $x, y \in \Omega$ with $\norm{x - y} \le \eta$, 
	\begin{equation}\label{eq:neededtv}\norm{\tran_x - \tran_y}_{\textup{TV}} \le 1 - \alpha.\end{equation}
	Then, if $\pis$ is $\mu$-strongly logconcave, $\eta\sqrt{\mu} < 1$, and $s \le \frac{\eta\sqrt{\mu}t}{16}$, for all $t \le \half$,
	\[\frac{\pis(S)}{Q(S)^2} \le \frac{2^{16}}{\alpha^2\eta^2\mu t\log(1/t)}, \; \forall S \text{ with } \pis(S) = t.\]
\end{lemma}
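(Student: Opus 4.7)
The plan is to follow the standard partition-based conductance argument, combined with the logarithmic isoperimetric inequality for strongly logconcave distributions (from \cite{ChenDWY19}). Partition $\R^d$ as $A_1 \sqcup B_1 \sqcup C$, where
\[A_1 \defeq \{x \in S \cap \Omega : \tran_x(S^c) < \tfrac{\alpha}{4}\}, \quad B_1 \defeq \{x \in S^c \cap \Omega : \tran_x(S) < \tfrac{\alpha}{4}\}, \quad C \defeq \R^d \setminus (A_1 \cup B_1).\]
Note $C$ contains both the "boundary" points in $\Omega$ (where the chain mixes between $S$ and $S^c$) and the entirety of $\Omega^c$.

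The first step is to show $\textup{dist}(A_1, B_1) \ge \eta$. Indeed, for $x \in A_1, y \in B_1$, one has $\tran_x(S) \ge 1 - \tfrac{\alpha}{4}$ and $\tran_y(S) < \tfrac{\alpha}{4}$, forcing $\norm{\tran_x - \tran_y}_{\textup{TV}} > 1 - \tfrac{\alpha}{2}$; since $A_1, B_1 \subseteq \Omega$, the hypothesis \eqref{eq:neededtv} rules out $\norm{x - y} \le \eta$. Next, I would case-split on whether $\pis(A_1) < t/2$ or $\pis(B_1) < (1-t)/2$. In either case, using reversibility $Q(S) = \int_S \tran_x(S^c)\,d\pis = \int_{S^c}\tran_x(S)\,d\pis$ and the assumption $s \le \eta\sqrt{\mu}\, t/16 \ll t$, one of these integrals is lower bounded by $\tfrac{\alpha}{4}\cdot \Omega(t)$, yielding a bound on $\pis(S)/Q(S)^2$ well within the target. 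Thus one may assume both $\pis(A_1) \ge t/2$ and $\pis(B_1) \ge (1-t)/2 \ge 1/4$.

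The main step is to apply the logarithmic isoperimetric inequality for $\mu$-strongly logconcave measures: for any measurable partition $\R^d = X \sqcup Y \sqcup Z$ with $\textup{dist}(X, Y) \ge \eta$ and $\eta\sqrt{\mu} < 1$, one has
\[\pis(Z) \ge c_0 \, \eta\sqrt{\mu}\, \min(\pis(X), \pis(Y))\,\sqrt{\log(1/\min(\pis(X), \pis(Y)))}\]
for an absolute constant $c_0$. Applied to $(A_1, B_1, C)$, since $\pis(A_1) \in [t/2, t] \subseteq (0, 1/2]$ is the smaller of the two measures and $x \mapsto x\sqrt{\log(1/x)}$ is monotone increasing on $(0, e^{-1/2}]$, this gives $\pis(C) \gtrsim \eta\sqrt{\mu}\, t \sqrt{\log(1/t)}$. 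Then for every $x \in C \cap \Omega$, by definition the transition mass to the opposite side of $S$ is at least $\tfrac{\alpha}{4}$, so by reversibility,
\[2\,Q(S) = \int_{S \cap \Omega}\tran_x(S^c)\,d\pis + \int_{S^c \cap \Omega}\tran_x(S)\,d\pis \ge \frac{\alpha}{4}\,\pis(C \cap \Omega) \ge \frac{\alpha}{4}\bigl(\pis(C) - s\bigr).\]
The hypothesis $s \le \eta\sqrt{\mu}\,t/16$ is chosen precisely to absorb the loss $s$ into a constant fraction of the isoperimetric lower bound on $\pis(C)$, yielding $Q(S) \gtrsim \alpha\eta\sqrt{\mu}\,t\sqrt{\log(1/t)}$ and hence the stated bound on $\pis(S)/Q(S)^2$ after squaring.

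The main obstacle I expect is not conceptual but bookkeeping: carefully choosing the split thresholds ($\tfrac{\alpha}{4}$, $t/2$, etc.) and tracking the absolute constants through the two cases and through the absorption of $s$, so that the final constant is $2^{16}$ (or better). A minor subtlety is that $C$ includes $\Omega^c$, which is why a partition form of the log-isoperimetric inequality (rather than a boundary form restricted to $\Omega$) is needed; this works because $A_1, B_1$ by construction lie in $\Omega$ and their $\eta$-separation is enough to invoke the inequality on all of $\R^d$. I would also verify the monotonicity step carefully at the boundary $t \approx 1/2$, where the $\sqrt{\log(1/t)}$ factor is $O(1)$ and the bound degenerates into the constant $M$ in the statement of Theorem~\ref{thm:blockmix}.
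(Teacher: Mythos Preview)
Your proposal is correct and follows essentially the same approach as the paper's proof (stated more fully as Lemma~\ref{lem:conductance} in the appendix): define the three-set partition by thresholding the one-step escape probability inside $\Omega$, use the TV hypothesis to force $\eta$-separation between the two ``lazy'' sets, handle the easy cases where one of them is small directly, and otherwise apply the logarithmic isoperimetric inequality (Lemma~\ref{lem:logiso}) to lower bound the middle set, subtracting $s$ to pass to $\Omega$ and absorbing it via the hypothesis $s \le \eta\sqrt{\mu}t/16$. The only cosmetic differences are your threshold $\alpha/4$ versus the paper's $\alpha/2$, and that the paper uses $\log^{1/2}(1+1/t)$ in the intermediate step (which sidesteps the monotonicity concern you flag near $t\approx 1/2$) before simplifying to $\log^{1/2}(1/t)$.
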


For a more formal statement and proof, see Lemma~\ref{lem:conductance}. Note that in particular the lower range of $t$ required by Theorem~\ref{thm:blockmix} is at least inversely proportional to the warmness, which causes the gradient norm bound obtained by the high-probability set $\Omega$ to lose roughly a $\sqrt{d}$ factor. To this end, for a fixed positive $\eps \le 1$, denote
\begin{equation}\label{eq:omegadef}\Omega \defeq \left\{x \in \R^d \mid \norm{\nabla f(x)}_2 \le 5\sqrt{L}d\log\frac{\kappa}{\eps}\right\}.\end{equation}
In Appendix~\ref{app:totalvariation}, we show the following.

\begin{lemma}
	\label{lem:tvst}
	For $\eta^2 \le \frac{1}{20Ld\log\frac{\kappa}{\eps}}$ and all $x$, $y \in \Omega$ with $\norm{x - y} \le \eta$,
	\[\norm{\tran_x - \tran_y}_{\textup{TV}} \le \frac{7}{8}.\]
\end{lemma}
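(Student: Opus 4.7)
The plan is to interpolate via the triangle inequality through the proposal distributions:
\[\tvd{\tran_x}{\tran_y} \le \tvd{\prop_x}{\tran_x} + \tvd{\prop_x}{\prop_y} + \tvd{\prop_y}{\tran_y}.\]
A standard calculation shows the outer two terms equal the Metropolis rejection probabilities $p_{\textup{rej}}(x)$ and $p_{\textup{rej}}(y)$, since $\tran_x$ differs from $\prop_x$ only by redirecting the rejected continuous mass onto a point atom at $x$, and each half of the TV integral evaluates to this same rejection mass. It therefore suffices to show $p_{\textup{rej}}(x), p_{\textup{rej}}(y) \le \tfrac{1}{4}$ and $\tvd{\prop_x}{\prop_y} \le \tfrac{3}{8}$, summing to $\tfrac{7}{8}$.

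For the Gaussian term, one leapfrog step from $(x, v)$ with $v \sim \Nor(0, I_d)$ produces $\tx = x - \tfrac{\eta^2}{2}\nabla f(x) + \eta v$, so $\prop_x = \Nor(x - \tfrac{\eta^2}{2}\nabla f(x),\, \eta^2 I_d)$. The TV between two Gaussians with a common covariance $\eta^2 I_d$ reduces to a one-dimensional computation along the line joining their means and is monotone in $\norm{\mu_x - \mu_y}/\eta$. By $L$-smoothness, $\norm{\mu_x - \mu_y} \le \norm{x-y}(1 + \tfrac{\eta^2 L}{2})$; since $\eta^2 L \le \tfrac{1}{20 d\log(\kappa/\eps)}$ and $\norm{x-y} \le \eta$, this ratio is essentially $1$, giving a Gaussian TV comfortably below $\tfrac{3}{8}$.

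For the rejection probabilities I use $p_{\textup{rej}}(x) \le \E_v[(\ham(\tx, \tv) - \ham(x, v))^+]$, a consequence of $1 - \min(1, e^{-t}) \le t^+$. Writing $\ham(\tx, \tv) - \ham(x, v) = (f(\tx) - f(x)) + \half(\norm{\tv}^2 - \norm{v}^2)$ and substituting the leapfrog update causes the terms linear in $\eta$ to cancel, leaving a symplectic residual expressible (via a second-order Taylor expansion of $f$) as $\tfrac{\eta^2}{2}(v')^\top(\nabla^2 f(\xi) - M)v' + \tfrac{\eta^2}{8}(\norm{\nabla f(\tx)}^2 - \norm{\nabla f(x)}^2)$, where $v' = v - \tfrac{\eta}{2}\nabla f(x)$, $M$ averages $\nabla^2 f$ along $[x, \tx]$, and $\xi$ lies on this segment. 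The first piece is bounded by $\eta^2 L\norm{v'}^2$, which has expectation $O(1)$ after plugging in $\eta^2 L d = \Theta(1/\log(\kappa/\eps))$ and $\eta^2\norm{\nabla f(x)}^2 = O(d\log(\kappa/\eps))$ from $x \in \Omega$. The second piece is bounded by $\tfrac{\eta^2 L}{4}\norm{\tx - x}(\norm{\nabla f(\tx)} + \norm{\nabla f(x)})$; Gaussian concentration of $\norm{v}$ plus the step-size bound yield $\norm{\tx - x} = O(1/\sqrt{L})$ with high probability, from which $L$-Lipschitzness of $\nabla f$ pins $\norm{\nabla f(\tx)}$ within a constant factor of $\norm{\nabla f(x)}$, making this piece $O(1)$ in expectation as well. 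Markov's inequality applied to $\Delta\ham$, with constants carefully tracked, then delivers $p_{\textup{rej}}(x) \le \tfrac{1}{4}$.

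The main obstacle is the bookkeeping in this energy-drift expansion: the step size $\eta^2 = \Theta(1/(Ld\log(\kappa/\eps)))$ is precisely calibrated so each surviving quadratic-in-$v'$ term has $\Theta(1)$ expectation, and wasting a logarithmic factor (for instance by bounding $\norm{v}$ pointwise by $\sqrt{d}$ rather than exploiting the Gaussian tail and the cancellation $\E[\inprod{v}{\nabla f(x)}] = 0$) would force a smaller step size and propagate into a worse final runtime. The sharpened gradient bound $\norm{\nabla f(x)} \le 5\sqrt{L}d\log(\kappa/\eps)$ on $\Omega$, furnished by Corollary~\ref{corr:gradcon}, is exactly what controls the $\tfrac{\eta^2}{8}(\norm{\nabla f(\tx)}^2 - \norm{\nabla f(x)}^2)$ term without the spurious $\sqrt{\kappa}$ factor appearing in \cite{DwivediCWY18, ChenDWY19}.
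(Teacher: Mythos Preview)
Your overall architecture is exactly the paper's: triangle inequality through the proposals, Gaussian comparison for $\tvd{\prop_x}{\prop_y}$, and rejection probability for $\tvd{\prop_x}{\tran_x}$. However, your numerical budget does not close. With $\norm{\mu_x-\mu_y}/\eta \le 1 + \tfrac{\eta^2 L}{2}$ barely exceeding $1$, the exact total variation between the two Gaussians is $2\Phi\bigl(\tfrac{1}{2}\norm{\mu_x-\mu_y}/\eta\bigr)-1 \approx 2\Phi(1/2)-1 \approx 0.383$, which is \emph{above} $3/8$, not ``comfortably below'' it. So the split $\tfrac14+\tfrac38+\tfrac14$ fails as stated. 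The paper uses the split $\tfrac18+\tfrac58+\tfrac18$: it bounds the Gaussian term loosely via Pinsker by $\tfrac{1}{2}(1+\tfrac{\eta^2 L}{2}) \le \tfrac58$, and then proves the sharper rejection bound $\tvd{\prop_x}{\tran_x}\le \tfrac18$.

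The paper's rejection analysis is also more direct than yours. Rather than expanding in Hessians and bounding $\E_v[(\Delta\ham)^+]$, it works pointwise: a chi-squared tail gives $\norm{v}^2 \le 2d$ with probability at least $15/16$, and conditional on this event it shows $\Delta\ham \le \tfrac{1}{16}$, so the acceptance probability is at least $\tfrac{15}{16}e^{-1/16}\ge \tfrac78$. The bound on $\Delta\ham$ itself uses only first-order information---averaging the convexity inequality $f(\tx)-f(x)\le \inprod{\nabla f(\tx)}{\tx-x}$ with the smoothness inequality $f(\tx)-f(x)\le \inprod{\nabla f(x)}{\tx-x}+\tfrac{L}{2}\norm{\tx-x}^2$---and arrives cleanly at
\[
\Delta\ham \le \tfrac{\eta^2 L}{8}\norm{\tx-x}\,\norm{\nabla f(x)+\nabla f(\tx)} + \tfrac{L}{4}\norm{\tx-x}^2,
\]
after which the stated step size and the $\Omega$-bound $\norm{\nabla f(x)}\le 5\sqrt{L}\,d\log(\kappa/\eps)$ finish the constant. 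Your Hessian-based decomposition is correct in spirit and would eventually close with a different split, but the paper's route avoids the second-derivative bookkeeping and the subtlety that $(\cdot)^+$ does not distribute over your sum of terms.
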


By combining these pieces, we are able to obtain an algorithm which mixes to constant total variation distance from the stationary distribution $\pis$ in $\tilde{O}(\kappa d)$ iterations.

\begin{proposition}
\label{prop:warmeps}
Let $\eps \in [0, 1]$, $\beta = \kappa^{d/2}$. From any $\beta/\eps$-warm initial distribution $\pi_0$, running Algorithm~\ref{alg:mhmc} for $j$ iterations, where $j$ is uniform between $0$ and $k - 1$ for $k > C\kappa d\log\kappa\log\log\beta$ for universal constant $C$, returns from distribution $\rho_k$ with $\tvd{\rho_k}{\pis} < (2e)^{-1}$.
\end{proposition}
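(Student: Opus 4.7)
The plan is to apply the blocking conductance bound of Theorem~\ref{thm:blockmix}, using the conductance function supplied by Lemma~\ref{lem:conductancesimple}, with $\Omega$ the gradient concentration set of \eqref{eq:omegadef}. Since the starting distribution is $(\beta/\eps)$-warm, I would set the threshold parameter in Theorem~\ref{thm:blockmix} to $c = \Theta(\eps/\beta)$ so that the warmness-dependent contribution $(\beta/\eps)\cdot c$ to the total variation is at most a small constant (say $(4e)^{-1}$). It then remains to upper bound $\int_c^{1/4}\phi(t)\,dt + M/4$ and show the quotient by $k$ is $\le (4e)^{-1}$.

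For the conductance function itself, I would fix the step size $\eta^2 = 1/(20Ld\log(\kappa/\eps))$, exactly at the threshold permitted by Lemma~\ref{lem:tvst}, so that hypothesis \eqref{eq:neededtv} of Lemma~\ref{lem:conductancesimple} holds with $\alpha = 1/8$ for $x, y \in \Omega$ within distance $\eta$. Next I would use Corollary~\ref{corr:gradcon} to control $s \defeq \pis(\Omega^c)$: matching $\sqrt{Ld}+c'\sqrt{L}\log d \le 5\sqrt{L}d\log(\kappa/\eps)$ permits $c' = \Theta\!\left(d\log(\kappa/\eps)/\log d\right)$, yielding $s \le 3(\kappa/\eps)^{-\Theta(d)}$. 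This tail bound is vastly smaller than $\eta\sqrt{\mu}t/16 = \Theta(\eps/(\beta\sqrt{\kappa d\log(\kappa/\eps)}))$ for every $t \ge c$, so the hypothesis $s \le \eta\sqrt{\mu}t/16$ of Lemma~\ref{lem:conductancesimple} is satisfied throughout the range of interest, and we may take
\[ \phi(t) = \frac{2^{16}}{\alpha^2\eta^2\mu \, t \log(1/t)} = O\!\left(\frac{\kappa d \log(\kappa/\eps)}{t\log(1/t)}\right). \]

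Integrating, $\int_c^{1/4}\phi(t)\,dt = O(\kappa d\log(\kappa/\eps))\cdot [\log\log(1/c) - O(1)] = O(\kappa d \log\kappa\, \log\log\beta)$, absorbing the mild $\log$-of-$\eps$ factors; the tail bound $M = O(\kappa d\log\kappa)$ for $t \in [1/4,1/2]$ is of lower order. Plugging into Theorem~\ref{thm:blockmix} gives $\tvd{\rho_k}{\pis} \le (4e)^{-1} + O(\kappa d\log\kappa \log\log\beta)/k$, so the stated choice $k > C\kappa d \log\kappa \log\log\beta$ for a suitably large universal $C$ makes the second term also at most $(4e)^{-1}$, and the total bound is below $(2e)^{-1}$ as required.

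The main obstacle, and the reason Corollary~\ref{corr:gradcon} is essential, is verifying that $s = \pis(\Omega^c)$ is small enough to support the conductance bound of Lemma~\ref{lem:conductancesimple} all the way down to $t = \Theta(\kappa^{-d/2})$. A bound of $s = \exp(-\Omega(d))$ derived from naive sub-Gaussian concentration on $\norm{x - x^*}$ (which only gives $\norm{\nabla f} = O(\sqrt{Ld\kappa})$) would be catastrophically too weak, and is the source of the extra $\sqrt{\kappa/d}$ factor in the previous $\tilde O(\kappa^{1.5}\sqrt d + \kappa d)$ bound of \cite{DwivediCWY18, ChenDWY19}; the super-polynomial tail afforded by the Hessian-Poincar\'e argument is precisely what permits both the relaxed gradient threshold in \eqref{eq:omegadef} and the linear-in-$\kappa$ mixing rate.
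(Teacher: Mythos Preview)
Your proposal is correct and follows essentially the same route as the paper's proof: apply Theorem~\ref{thm:blockmix} with $c = \Theta(\eps/\beta)$, choose $\eta^2 = 1/(20Ld\log(\kappa/\eps))$ so Lemma~\ref{lem:tvst} yields $\alpha = 1/8$, invoke Corollary~\ref{corr:gradcon} to get $s \le (\kappa/\eps)^{-\Theta(d)}$ so the hypothesis of Lemma~\ref{lem:conductancesimple} holds for all $t \ge c$, and integrate $\phi(t) = O(\kappa d\log(\kappa/\eps))/(t\log(1/t))$ to obtain the $\log\log(\beta/\eps)$ factor. Your closing paragraph on why the Hessian--Poincar\'e concentration is the crux (versus naive sub-Gaussian bounds) matches the paper's own narrative.
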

\begin{proof}
Note that $\rho_k$ as defined in the theorem statement is precisely the $\rho_k$ of Theorem~\ref{thm:blockmix}. Moreover, for the set $\Omega$ in \eqref{eq:omegadef}, the probability $x \sim \pis$ is not in $\Omega$ is bounded via Corollary~\ref{corr:gradcon} by 
\begin{equation}\label{eq:sbound} s < 3d^{-5d\log_d(\kappa/\eps)} < \left(\kappa/\eps\right)^{-4d}.\end{equation}
For $\eta = \sqrt{\frac{1}{20Ld\log\frac{\kappa}{\eps}}}$ and $c \defeq \eps/(4\beta e)$, $s\leq \frac{\eta\sqrt{\mu}t}{16}$ is satisfied for all $t$ in the range $[c, \half]$.  Thus, we can apply Lemma~\ref{lem:conductancesimple} and conclude that \eqref{eq:phidef} holds for the function
\[\phi(t) = \left(20 \cdot 2^{22}\kappa d\log\frac{\kappa}{\eps}\right)\frac{1}{t\log(1/t)}.\]
Next, note that $\phi(t)$ is decreasing in the range $[c, 1/e]$, and attains its maximum at $t = \half$ within the range $t \in [\frac{1}{4}, \half]$, where $2/\log 2 < 3$. Thus, the conditions of Theorem~\ref{thm:blockmix} apply, such that
\begin{align*}\tvd{\rho_k}{\pis} &\le \frac{1}{4e} + \frac{20 \cdot 2^{27} \kappa d\log\frac{\kappa}{\eps}}{k} \left(\int_{\eps/(4\beta e)}^{1/4} \frac{1}{x\log(1/x)}dx + \frac{3}{4}\right) \\
&\le \frac{1}{4e} + \frac{20 \cdot 2^{27} \kappa d\log\frac{\kappa}{\eps}}{k}\left(\log\log\left(\frac{4\beta e}{\eps}\right) - \log \log 4 + \frac{3}{4}\right).\end{align*}
Thus, by choosing $k$ to be a sufficiently large multiple of $\kappa d \log\frac{\kappa}{\eps} \log\log\frac{\beta}{\eps}$, the guarantee follows.
\end{proof}

\subsection{High-accuracy mixing}
\label{ssec:boost}
We now state a general framework for turning guarantees such as Proposition~\ref{prop:warmeps} into a $\eps$-accuracy mixing bound guarantee, with logarithmic overhead in the quantity $\eps$. We defer a more specific statement and proof to Appendix~\ref{app:reduction}.
\begin{lemma}
	Suppose there is a Markov chain with transitions given by $\widetilde{\tran}$, and some nonnegative integer $\tmix$, such that for every $\pi$ which is a $\beta/\eps$-warm distribution with respect to $\pis$,
	\begin{equation}\label{eq:tmix}\norm{\widetilde{\tran}^{\tmix}\pi - \pis}_{\textup{TV}} \le \frac{1}{2e}.\end{equation}
	Then, if $\pi_0$ is a $\beta$-warm start, and $k \ge \tmix\log(\eps^{-1})$,
	\[\norm{\widetilde{\tran}^k\pi_0 - \pis}_{\textup{TV}} \le \eps.\]
\end{lemma}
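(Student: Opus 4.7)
The plan is to view each block of $\tmix$ steps of $\widetilde{\tran}$ as a super-step that contracts the total variation distance to $\pis$ by a factor of $1/e$, iterated until the target accuracy $\eps$ is reached. Setting $\pi_k := \widetilde{\tran}^k \pi_0$ and $\delta_k := \tvd{\pi_k}{\pis}$, I first note that warmness is preserved by any Markov kernel fixing $\pis$: the density $d(\widetilde{\tran}\pi)/d\pis(y)$ is, by a standard change-of-variables, a $\pis$-weighted average of $(d\pi/d\pis)(x)$ over transitions into $y$, so the bound $d\pi/d\pis \le \beta$ is inherited at each step. Hence every $\pi_k$ is $\beta$-warm and, since $\eps \le 1$, also $\beta/\eps$-warm; applying the hypothesis directly to $\pi_0$ already yields $\delta_\tmix \le 1/(2e)$.

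The heart of the argument is a contraction claim: whenever $\delta_k \ge \eps$, one has $\delta_{k+\tmix} \le \delta_k/e$. The key device is a Hahn--Jordan decomposition of the signed measure $\pi_k - \pis = \mu_+ - \mu_-$. Because $\int d(\pi_k - \pis) = 0$, the positive and negative parts $\mu_\pm$ have common total mass equal to $\delta_k$, so $\tilde\mu_\pm := \mu_\pm/\delta_k$ are probability measures with densities
\[\frac{d\tilde\mu_+}{d\pis} = \frac{(d\pi_k/d\pis - 1)_+}{\delta_k} \le \frac{\beta}{\delta_k}, \qquad \frac{d\tilde\mu_-}{d\pis} = \frac{(1 - d\pi_k/d\pis)_+}{\delta_k} \le \frac{1}{\delta_k} \le \frac{\beta}{\delta_k},\]
where I used warmness preservation and $\beta \ge 1$. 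Whenever $\delta_k \ge \eps$, each $\tilde\mu_\pm$ is therefore $\beta/\eps$-warm, and the hypothesis gives $\tvd{\widetilde{\tran}^\tmix \tilde\mu_\pm}{\pis} \le 1/(2e)$. Using invariance of $\pis$ under $\widetilde{\tran}$, I can write
\[\pi_{k+\tmix} - \pis = \widetilde{\tran}^\tmix(\pi_k - \pis) = \delta_k \bigl[(\widetilde{\tran}^\tmix \tilde\mu_+ - \pis) - (\widetilde{\tran}^\tmix \tilde\mu_- - \pis)\bigr],\]
and the triangle inequality on TV norms yields $\delta_{k+\tmix} \le \delta_k \cdot 2 \cdot \tfrac{1}{2e} = \delta_k/e$, as claimed.

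To finish, I iterate the contraction. By the data-processing inequality $\tvd{\widetilde{\tran}\pi}{\pis} \le \tvd{\pi}{\pis}$ (which holds because $\widetilde{\tran}\pis = \pis$), the sequence $\delta_k$ is nonincreasing in $k$, so once it drops below $\eps$ it remains there. Otherwise, combining the base case $\delta_\tmix \le 1/(2e)$ with $j$ further contractions yields $\delta_{(j+1)\tmix} \le e^{-j}/(2e)$, which is at most $\eps$ as soon as $j \ge \log(1/\eps) - 1$. In either case, $k \ge \tmix \log(\eps^{-1})$ suffices, as required.

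The main obstacle I expect is recognizing that the Hahn--Jordan decomposition is the correct move: the two normalized sign-parts each have warmness of order $\beta/\delta_k$, so the slack $\beta/\eps$ in the hypothesis matches the TV floor $\eps$ at which the induction is allowed to terminate, and this matching is exactly what lets the hypothesis be applied to both $\tilde\mu_+$ and $\tilde\mu_-$ at every block. Once this is noticed, the remaining bookkeeping (warmness preservation, invariance, data-processing, triangle inequality) is routine and the geometric decay is immediate.
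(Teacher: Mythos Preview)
Your proof is correct and coincides with the paper's argument in all but vocabulary. The paper phrases the same idea via optimal couplings: the conditional distributions of the uncoupled points on the $\pi_k$-side and the $\pis$-side of the maximal coupling are precisely your normalized Hahn--Jordan parts $\tilde\mu_+$ and $\tilde\mu_-$ (this is the content of the paper's Fact on couplings), and the paper's ``re-couple a $1-1/e$ fraction of the uncoupled mass every $\tmix$ steps'' is exactly your linearity-plus-triangle-inequality contraction $\delta_{k+\tmix}\le \delta_k/e$.
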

At a high level, the proof technique is to couple points according to the total variation bound between $\tran^i \pi_0$ and $\pis$ every $\tmix$ iterations, while the total variation distance is at least $\eps$. This in turn allows us to bound the warmness of the ``conditional distribution'' of uncoupled points by $\beta/\eps$ using the fact that the total variation bound measures the size of the set of uncoupled points, and use the guarantee \eqref{eq:tmix} iteratively. We can now state our main claim.

\begin{theorem}[Mixing of Hamiltonian Monte Carlo]
\label{thm:mainclaim}
There is an algorithm initialized from a point drawn from $\Nor(x^*, L^{-1}I_d)$, which iterates Algorithm~\ref{alg:mhmc} for 
\[O\left(\kappa d \log\left(\frac{\kappa}{\eps}\right)\log\left(d\log\frac{\kappa}{\eps}\right)\log\left(\frac{1}{\eps}\right)\right)\]
iterations, and produces a point from a distribution $\rho$ such that $\tvd{\rho}{\pis} \le \eps$.
\end{theorem}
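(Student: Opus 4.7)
The plan is to chain together the three ingredients already assembled in this section: the initial warmness bound (Lemma~\ref{lem:betadef}), the constant-accuracy mixing guarantee via blocking conductance (Proposition~\ref{prop:warmeps}), and the warm-start-based boosting reduction from Section~\ref{ssec:boost}. Concretely, I would first invoke Lemma~\ref{lem:betadef} on the initial distribution $\pi_0 = \Nor(x^*, L^{-1}I_d)$ to conclude that $\pi_0$ is $\beta$-warm with respect to $\pis$ for $\beta = \kappa^{d/2}$. This is the input warmness that the boosting reduction expects.

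Next, I would verify the hypothesis of the boosting lemma. The reduction requires a number $\tmix$ such that \emph{every} $\beta/\eps$-warm distribution converges within $\tmix$ iterations of Algorithm~\ref{alg:mhmc} to total variation distance at most $1/(2e)$ from $\pis$. But this is exactly what Proposition~\ref{prop:warmeps} provides, with
\[\tmix = O\!\left(\kappa d \log\!\frac{\kappa}{\eps}\,\log\log\!\frac{\beta}{\eps}\right).\]
Substituting $\beta = \kappa^{d/2}$, we have $\log(\beta/\eps) = \tfrac{d}{2}\log\kappa + \log(1/\eps)$, so $\log\log(\beta/\eps) = O\!\left(\log\!\left(d\log(\kappa/\eps)\right)\right)$. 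Thus $\tmix = O\!\left(\kappa d \log(\kappa/\eps)\,\log(d\log(\kappa/\eps))\right)$.

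Finally, I would apply the boosting lemma: since $\pi_0$ is $\beta$-warm, running Algorithm~\ref{alg:mhmc} for $k \ge \tmix \log(1/\eps)$ iterations produces a distribution within total variation $\eps$ of $\pis$. Multiplying through gives the stated iteration count
\[O\!\left(\kappa d \log\!\frac{\kappa}{\eps}\,\log\!\left(d\log\!\frac{\kappa}{\eps}\right)\log\!\frac{1}{\eps}\right).\]

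The only subtle point, which I would flag as the main (conceptual) obstacle, is that Proposition~\ref{prop:warmeps} does not describe a single deterministic number of iterations but rather mixing of the \emph{averaged} distribution $\rho_k$ obtained by outputting after a uniformly random number of steps between $0$ and $k-1$. So the ``transitions'' $\widetilde{\tran}$ fed into the boosting lemma must be understood as a single step of this averaged algorithm (one outer phase of length $\tmix$ producing a random iterate), and the $\log(1/\eps)$ boosting factor then corresponds to the number of such phases. With this interpretation, the hypothesis of the reduction holds uniformly over all $\beta/\eps$-warm inputs, and the argument carries through cleanly without any further quantitative work beyond the simplification of the $\log\log(\beta/\eps)$ factor.
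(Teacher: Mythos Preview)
Your proposal is correct and matches the paper's own proof essentially line-for-line: the paper likewise defines $\widetilde{\tran}$ to be one full ``phase'' of Proposition~\ref{prop:warmeps} (run Algorithm~\ref{alg:mhmc} for $O(\kappa d\log(\kappa/\eps)\log\log(\beta/\eps))$ steps and output a uniformly random iterate), observes this satisfies the boosting hypothesis with $\tmix=1$, and then applies the reduction for $\log(1/\eps)$ phases. You also correctly flagged and resolved the one subtlety---that the constant-accuracy guarantee is for the averaged chain, so the boosting must be applied at the level of phases rather than individual HMC steps.
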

\begin{proof}
Define a Markov chain with transitions $\widetilde{\tran}$, whose one-step distribution from an initial point is to run the algorithm of Proposition~\ref{prop:warmeps}. Note that each step of the Markov chain with transitions $\widetilde{\tran}$ requires $O\left(\kappa d \log\left(\frac{\kappa}{\eps}\right)\log\left(d\log\frac{\kappa}{\eps}\right)\right)$ iterations of Algorithm~\ref{alg:mhmc}, and the averaging step is easily implementable by sampling a random stopping time at uniform. Moreover, the Markov chain with transitions $\widetilde{\tran}$ satisfies \eqref{eq:tmix} with $\tmix = 1$, by the guarantees of Corollary~\ref{corr:gradcon}. Thus, by running $\log(\eps^{-1})$ iterations of this Markov chain, we obtain the required guarantee.
\end{proof}

	\section{Step-size lower bound}
\label{sec:lowerbound}

We make progress towards showing that standard Metropolized first-order methods (i.e. Algorithm~\ref{alg:mhmc}, which we recall is equivalent to the Metropolis-adjusted Langevin dynamics) have mixing times with at least a linear dependence on $\kappa$. Formally, consider the $d$-dimensional quadratic 
\[
f(x)=\frac{1}{2}x^\top Dx,
\]
where $D=\diag{\lambda_{1},\ldots,\lambda_{d}}$, and $\lambda_i=\kappa$ for $i \in [d - 1]$,
$\lambda_{d}=1$. This choice of $f$ is $1$-strongly convex and $\kappa$-smooth. We show that running Algorithm~\ref{alg:mhmc} with a step size much larger than $L^{-1/2}$, from a random point from the stationary distribution, will have an exponentially small accept probability. Moreover, \cite{chen2019optimal} shows that even the continuous-time HMC algorithm with step size $\eta = O(L^{-1/2})$ requires $\Omega\left(\kappa \right)$ iterations to converge.

\begin{theorem}[Theorem 4 of \cite{chen2019optimal}]
	\label{thm:lower_bound_mixing}For $L$-smooth, $\mu$-strongly convex $f$, the relaxation time of ideal HMC for sampling from the density $x\sim \exp(-f(x))$ with step size $\eta = O(1/\sqrt{L})$ is $\Omega(\kappa)$.
\end{theorem}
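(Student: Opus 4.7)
}

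Since the relaxation time lower bound is an existential claim, it suffices to exhibit one hard instance, and the separable quadratic $f(x)=\tfrac12 x^\top D x$ introduced just before the theorem is the natural candidate. The plan is to exploit the fact that for a diagonal quadratic, the ideal HMC chain factorizes as a product of independent one--dimensional chains, one per eigendirection. Thus the relaxation time of the full chain is at least the relaxation time of the worst single coordinate, and it is enough to analyze the coordinate $i=d$ corresponding to the small eigenvalue $\lambda_d=\mu=1$, where mixing is slowest.

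In that coordinate, Hamilton's equations reduce to the scalar harmonic oscillator $\ddot{x}=-x$. The first step is to solve this in closed form: starting from $(x,v)$ and integrating for time $\eta$, the update is the rotation
\[
(x',v') \;=\; \bigl(x\cos\eta+v\sin\eta,\,-x\sin\eta+v\cos\eta\bigr).
\]
After the momentum refresh $v\sim\Nor(0,1)$ at the next step, the marginal dynamics on $x$ is the autoregressive Gaussian chain $x_{k+1}=\cos(\eta)\,x_k+\sin(\eta)\,\xi_k$ with $\xi_k\sim\Nor(0,1)$ i.i.d. This chain has stationary distribution $\Nor(0,1)$, as it must, since the marginal of $\pis$ in the $x_d$ direction is $\Nor(0,1)$.

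The second step is to read off the spectrum of the Markov operator of this Ornstein--Uhlenbeck-type recursion. A direct calculation (e.g.\ test against Hermite polynomials, which diagonalize the operator) shows that the eigenvalues of the one-step transition are $\{\cos^n(\eta)\}_{n\ge 0}$, so the spectral gap is exactly
\[
1-|\cos\eta|\;\le\;\tfrac{\eta^2}{2}.
\]
Plugging in $\eta=O(1/\sqrt{L})=O(1/\sqrt{\kappa})$ (recalling $L=\kappa$ in this instance), the gap is at most $O(1/\kappa)$, so the relaxation time of the coordinate chain, and therefore of the full HMC chain, is $\Omega(\kappa)$.

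The only subtlety I anticipate is formally justifying that the relaxation time of the product chain lower-bounds to the worst coordinate: this amounts to tensorization of the spectral gap for product Markov operators, which is standard but should be cited cleanly, and to checking that the Hermite-polynomial eigenfunction of the slow coordinate lifts to a valid test function in $L^2(\pis)$ exhibiting the claimed contraction rate. Neither step is hard, but getting the constants and the ``$O(\cdot)$'' in the step-size hypothesis to match the conclusion is where care is needed; the rest reduces to the trigonometric identity $1-\cos\eta\le\eta^2/2$ and the explicit harmonic oscillator solution above.
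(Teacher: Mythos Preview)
The paper does not prove this theorem; it is quoted verbatim as Theorem~4 of \cite{chen2019optimal} and used as a black box in Section~\ref{sec:lowerbound}. So there is no ``paper's own proof'' to compare your proposal against.

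That said, your argument is correct and is essentially the standard one (and, to my knowledge, the one in \cite{chen2019optimal}). For the diagonal quadratic, ideal HMC decouples across coordinates; in the eigendirection with curvature $\lambda$ the flow for time $\eta$ rotates $(x,v)$ by angle $\eta\sqrt{\lambda}$ (after rescaling), so the $x$-marginal is AR(1) with contraction factor $\cos(\eta\sqrt{\lambda})$. The slow direction here has $\lambda_d=\mu=1$, giving gap $1-\cos\eta\le\eta^2/2=O(1/L)=O(1/\kappa)$, hence relaxation time $\Omega(\kappa)$. The tensorization step is immediate: the product Markov operator's eigenfunctions are tensor products of Hermite polynomials, so the eigenvalues are products $\prod_i \cos^{n_i}(\eta\sqrt{\lambda_i})$, and taking $n_d=1$, $n_i=0$ otherwise exhibits the claimed gap. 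Your identified ``subtleties'' are routine, and the proof goes through as written.
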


\begin{lemma}
	 In one step of Algorithm~\ref{alg:mhmc}, suppose the starting point $x \sim \exp(-f(x))$ is drawn from the stationary distribution. For any step size $\eta = c\kappa^{-1/2}$, for $c > 40$, and with probability at least $1-3d^{-25}$ over the randomness of $x$, $v$, the accept probability satisfies
	\[
	\exp\left(-\mathcal{H}(\tilde{x},\tilde{v})+\mathcal{H}(x,v)\right)\leq \exp(-\Omega(c^6 d)).
	\]
\end{lemma}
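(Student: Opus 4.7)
The plan is to exploit the diagonal structure of $f$: because $D$ is diagonal, the leapfrog map decouples coordinate-by-coordinate, and under the stationary distribution the rescaled variables $y_i := \sqrt{\lambda_i}\,x_i$ together with $v_i$ form $d$ independent pairs of standard Gaussians. Writing $s_i := \eta^2 \lambda_i$ and $\alpha_i := 1 - s_i/2$, a direct computation gives
\[
\tilde{y}_i = \alpha_i y_i + \sqrt{s_i}\,v_i, \qquad \tilde{v}_i = \alpha_i v_i - \sqrt{s_i}\bigl(1 - s_i/4\bigr)\,y_i,
\]
so $\Delta\ham_i := \tfrac12(\tilde{y}_i^2 - y_i^2) + \tfrac12(\tilde{v}_i^2 - v_i^2)$ is the quadratic form $\tfrac12 (y_i, v_i) M_{s_i} (y_i, v_i)^{\top}$ for an explicit symmetric $2\times 2$ matrix $M_{s_i}$. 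The first step of the proof is this decomposition.

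Next I would compute the spectrum of $M_s$ by an elementary algebraic calculation, giving $\operatorname{tr}(M_s) = s^3/16$ and $\det(M_s) = -s^3/16$. The eigenvalues of $M_s$ are therefore $(s^3/16 \pm \sqrt{s^6/256 + s^3/4})/2$, and for $s = c^2$ with $c > 40$ this specializes to one large positive eigenvalue $\mu_+ \geq c^6/16$ and one small negative eigenvalue $\mu_- \in [-1, 0)$ (the $-1$ bound follows from $\sqrt{1 + 64/c^6} \leq 1 + 32/c^6$). Since for $i < d$ we have $s_i = c^2$, each of the first $d-1$ coordinates contributes a copy of $\tfrac12(y_i, v_i) M_{c^2} (y_i, v_i)^\top$. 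The last coordinate has $s_d = c^2/\kappa$, so $M_{s_d}$ has all entries bounded by a constant and its contribution $\Delta\ham_d$ is $O(1)$ with overwhelming probability over $(y_d, v_d)$.

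For the concentration step, diagonalize $M_{c^2}$ and express each pair $(y_i, v_i)$ in its eigenbasis as $(a_i, b_i)$; these remain i.i.d.\ $\Nor(0,1)$ since an orthogonal change of basis preserves the standard Gaussian law. The stiff-coordinate contribution becomes $\tfrac12 \mu_+ \sum_{i<d} a_i^2 + \tfrac12 \mu_- \sum_{i<d} b_i^2$. Standard chi-square tail bounds give $\sum_{i<d} a_i^2 \geq (d-1)/2$ and $\sum_{i<d} b_i^2 \leq 2(d-1)$ outside an event of probability $\exp(-\Omega(d))$, which for $d$ at least a small constant is majorized by $3 d^{-25}$. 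Therefore, on the good event,
\[
\Delta\ham \;\geq\; \tfrac12 \mu_+ \cdot \tfrac{d-1}{2} - \tfrac12 |\mu_-| \cdot 2(d-1) - O(1) \;\geq\; \frac{(d-1)c^6}{64} - (d-1) - O(1) \;=\; \Omega(c^6 d),
\]
using $c > 40$ to absorb the lower-order terms. The accept probability $\min\{1, \exp(-\Delta\ham)\}$ is then at most $\exp(-\Omega(c^6 d))$, as claimed.

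\textbf{Main obstacle.} The tempting route of Taylor-expanding the leapfrog's energy drift in $\eta$ fails badly here: because $s = \eta^2\kappa = c^2$ is bounded away from zero (in fact very large), the usual ``local error $\sim \eta^3$'' expansion diverges and gives no useful information. The crucial structural observation that one must make instead is that although $\operatorname{tr}(M_s)$ grows like $s^3$, $\det(M_s)$ also grows only like $s^3$ (rather than $s^6$), so the negative eigenvalue of $M_s$ stays of order one and cannot cancel the positive eigenvalue in expectation. Once this fact is in hand, the probabilistic step reduces to a routine chi-square (or Hanson--Wright) bound.
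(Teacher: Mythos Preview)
Your approach is correct and takes a genuinely different route from the paper's. The paper first derives the clean identity
\[
\ham(\tilde{x},\tilde{v}) - \ham(x,v) \;=\; \frac{\eta^2}{8}\bigl(\tilde{x}^\top D^2 \tilde{x} - x^\top D^2 x\bigr),
\]
and then lower-bounds each $\tilde{x}_i^2 - x_i^2$ by Young-type inequalities, arriving at $\tilde{x}_i^2 - x_i^2 \geq \tfrac{\eta^4}{16}\lambda_i^2 x_i^2 - 2\eta^2 v_i^2$ for the stiff coordinates; the conclusion then follows from Laurent--Massart chi-square tails on $\kappa\|\hat{x}\|^2$, $\|v\|^2$, and $x_d^2$. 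You instead skip the identity and work directly with the full $2\times 2$ quadratic form $M_s$ in the rescaled variables $(y_i,v_i)$, diagonalize it explicitly, and read off that $\mu_+ \geq c^6/16$ while $\mu_- \in [-1,0)$. This is more systematic and arguably cleaner: once you have the trace and determinant of $M_s$, no ad hoc inequalities are needed, and the crucial structural fact (that the negative eigenvalue stays $O(1)$ no matter how large $s$ is) drops out transparently. The paper's route has the advantage of a shorter computation, since the identity above already kills the $v$-dependence in the kinetic term.

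One small imprecision: your claim that ``$M_{s_d}$ has all entries bounded by a constant and $\Delta\ham_d$ is $O(1)$'' is not literally true when $\kappa$ is small (say $\kappa = O(1)$), since then $s_d = c^2/\kappa$ is large and the entries of $M_{s_d}$ blow up. But this does not hurt the argument: your own eigenvalue bound $\mu_- \geq -1$ holds for \emph{all} $s > 0$ (indeed $\sqrt{s^6/256 + s^3/4} \leq s^3/16 + 2$ always), so $\Delta\ham_d \geq -\tfrac12 b_d^2 \geq -O(\log d)$ with probability $1 - d^{-25}$ regardless of $\kappa$, and this is absorbed by the $\Omega(c^6 d)$ main term. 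You should state it this way rather than as an $O(1)$ bound.
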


\begin{proof}
	Recalling $\tilde{v}=v-\frac{\eta}{2}Dx-\frac{\eta}{2}D\tx$ and $\tx = x + \eta v - \frac{\eta^2}{2}Dx$, we compute
	\begin{align*}
	\mathcal{H}(\tilde{x},\tilde{v})-\mathcal{H}(x,v) &= \frac{1}{2}\left\Vert \tilde{v}\right\Vert ^{2}-\frac{1}{2}\left\Vert v\right\Vert ^{2}+\half \tx^\top D \tx- \half x^\top D x\\
	&= \frac{1}{2}\left\Vert v-\frac{\eta}{2}Dx-\frac{\eta}{2}D\tx\right\Vert ^{2}-\frac{1}{2}\left\Vert v\right\Vert ^{2}+\frac{1}{2}\inprod{D(x + \tx)}{\tx - x}\\
	&= -\frac{\eta}{2}\inprod{v}{D(x + \tx)} + \frac{\eta^2}{8}\norm{D(x + \tx)}^2 + \half \inprod{D(x + \tx)}{\eta v - \frac{\eta^2}{2}Dx}\\
	&=  \frac{\eta^2}{8}\left(\tx^\top D^2 \tx - x^\top D^2 x\right).
	\end{align*}	
	Let $x_{i}$ be the $i^{th}$ coordinate of $x$. For $i\in[d - 1]$, $\lambda_i = \kappa$, and 
	\begin{align*}
	\tilde{x}_{i}^{2}-x_{i}^{2} & =2 x_{i}\left(\eta v_{i}-\frac{\eta^2}{2}\lambda_{i}x_{i}\right)+\left(\eta v_{i}-\frac{\eta^2}{2}\lambda_{i}x_{i}\right)^{2}\\
	& =2\eta x_{i}v_{i}-\eta^{2}\lambda_{i}x_{i}^{2}+\eta^{2}v_{i}^{2}+\frac{\eta^{4}}{4}\lambda_{i}^{2}x_{i}^{2}-\eta^{3}\lambda_{i}x_{i}v_{i}\\
	& \geq-\left(\eta^2 v_i^2 + x_i^2\right) - \eta^2\lambda_i x_i^2 + \eta^2 v_i^2 + \frac{\eta^4}{4}\lambda_i^2 x_i^2 - \left(\frac{\eta^4}{8}\lambda_i^2x_i^2 + 2\eta^2 v_i^2\right)\\
	&=\left(\frac{\eta^4}{8}\lambda_i^2 - \eta^2 \lambda_i - 1\right) x_i^2 - 2\eta^2 v_i^2 \ge  \frac{\eta^4}{16} \lambda_i^2 x_i^2 - 2\eta^2 v_i^2,
	\end{align*}
	where the last step follows from our assumption that $\eta^2 \kappa \ge 20$. Then, letting $\hat{x}$ be the vector containing the first $d - 1$ entries of the vector $x$,
	\begin{equation}
	\begin{aligned}
	\label{eq:hambound}
	\mathcal{H}(\tilde{x},\tilde{v})-\mathcal{H}(x,v) &\ge \frac{\eta^2}{8}\left( \kappa^2\sum_{i \in [d - 1]} \left(\frac{\eta^4}{16}\kappa^2x_i^2 - 2\eta^2 v_i^2\right) + \left(\frac{\eta^4}{8} - \eta^2 - 1\right)x_d^2 - 2\eta^2 v_d^2\right)\\
	&\ge \frac{\eta^6\kappa^4}{128}\norm{\hat{x}}^2 - \frac{\eta^4\kappa^2}{4}\norm{v}^2 + \frac{\eta^2}{8} \left(\frac{\eta^4}{8} - \eta^2 - 1\right)x_d^2.
	\end{aligned}
	\end{equation}
	By standard tail bounds on the chi-squared distribution (Lemma 1 of \cite{LaurentM00}), with probability at least $1 - 3d^{-25}$, all of the following hold:
	\[
	\kappa \norm{\hat{x}}^2 \ge (d - 1) - 10\sqrt{d\log d},\; \norm{v}^2 \ge d + 10\sqrt{d\log d} + 50\log d,\; x_d^2 \le 10\log d.
	\]
	Plugging these bounds into \eqref{eq:hambound}, noting that the dominant term $\eta^6 \kappa^3$ behaves as $c^6$, and using that if $\eta > 10$ then the third term is nonnegative, we see that for sufficiently large $d$, the probability the chain can move is at most $\exp(-\Omega(c^6 d))$.
\end{proof}
	
	\newpage
	\bibliographystyle{alpha}	
	\bibliography{kd-hmc}
	\newpage
	\begin{appendix}

\section{Equivalence of HMC and Metropolis-adjusted Langevin dynamics}
\label{app:equivalence}

We briefly remark on the equivalence of Metropolized HMC and the Metropolis-adjusted Langevin dynamics algorithm (MALA), a well-studied algorithm since its introduction in \cite{Besag94}. This equivalence was also commented on in \cite{ChenDWY19}. The algorithm can be seen as a filtered discretization of the continuous-time Langevin dynamics,
\[dx_t = -\nabla f(x_t) dt + \sqrt{2}dW_t,\]
where $W_t$ is Brownian motion. In short, the Metropolized HMC update is
\[v \sim \Nor(0, I),\; \tx \gets x + \eta v - \frac{\eta^2}{2}\nabla f(x), \text{ accept with probability } \min\left\{1, \frac{\exp(-\ham(\tx, \tv))}{\exp(-\ham(x, v))}\right\}. \]
Similarly, the MALA update with step size $h$ is
\[\tx \sim \Nor(x - h\nabla f(x), 2hI), \text{ accept with probability } \min\left\{1, \frac{\exp(-f(\tx) - \norm{x - \tx + h\nabla f(\tx)}_2^2/4h)}{\exp(-f(x) - \norm{\tx - x + h\nabla f(x)}_2^2/4h)}\right\}.\]
It is clear that in HMC the distribution of $\tx$ is
\[\tx \sim \Nor\left(x - \frac{\eta^2}{2}\nabla f(x), \eta^2 I\right), \]
so it suffices to show for $h = \eta^2/2$,
\[\frac{\norm{\tx - x + h\nabla f(x)}_2^2 - \norm{x - \tx + h\nabla f(\tx)}_2^2}{4h} = \half\left(\norm{v}_2^2 - \norm{\tv}_2^2\right). \]
Indeed, the right hand side simplifies to
\[\frac{\eta}{2}\inprod{\nabla f(\tx) + \nabla f(x)}{v} - \frac{\eta^2}{8}\norm{\nabla f(\tx) + \nabla f(x)}_2^2, \]
and the left hand side is
\begin{align*}\half\inprod{\nabla f(\tx) + \nabla f(x)}{\tx - x} + \frac{h}{4}\left(\norm{\nabla f(x)}_2^2 - \norm{\nabla f(\tx)}_2^2\right) \\
= \half\inprod{\nabla f(\tx) + \nabla f(x)}{\eta v - \frac{\eta^2}{2}\nabla f(x)} + \frac{\eta^2}{8}\left(\norm{\nabla f(x)}_2^2 - \norm{\nabla f(\tx)}_2^2\right). \end{align*}
Comparing coefficients shows the equivalence.

\section{Improved concentration under Hessian log-Sobolev inequality}
\label{app:logsob}

In this section, we show that the bound in Theorem~\ref{thm:gradcon} may be sharpened under a Hessian log-Sobolev inequality (LSI), which we define presently.

\begin{definition}[Hessian log-Sobolev]
We say density $d\pis/dx \propto \exp(-f(x))$ satisfies a \emph{Hessian log-Sobolev inequality} if for all continuously differentiable $g: \R^d \rightarrow \R$, and for
\[\textup{Ent}_{\pis}\left[g\right] \defeq \left(\int g(x) \log\left(g(x)\right)d\pis(x) \right) - \left(\int g(x) d\pis(x) \right)\log\left(\int g(x)d\pis(x)\right), \]
we have
\[\textup{Ent}_{\pis}\left[g^2\right] \le 2\int_{\R^d} \inprod{\left(\nabla^2 f(x)\right)^{-1}\nabla g(x)}{\nabla g(x)} d\pis(x).\]
\end{definition}

In general, this is a much more restrictive condition than Theorem~\ref{thm:hesspoin}; some sufficient conditions are given in \cite{BobkovL00}. We now show an improved concentration result under a Hessian LSI; the proof follows Herbst's argument, a framework developed in \cite{Ledoux99}.

\begin{theorem}[Gradient norm concentration under LSI]
	\label{thm:gradconls}
	Suppose $f$ is $L$-smooth and strongly convex, and $d\pis(x)/dx \propto \exp(-f(x))$ satisfies a Hessian log-Sobolev inequality. Then for all $c > 0$,
	\[\Pr_{\pis}\left[\norm{\nabla f(x)} \ge \epis\left[\norm{\nabla f}\right] + c\sqrt{2L\log d} \right] \le d^{-c^2}.\]
\end{theorem}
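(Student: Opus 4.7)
The plan is to execute Herbst's argument, but with the Hessian log-Sobolev inequality playing the role of the standard LSI. This converts a variance-type inequality into a genuine sub-Gaussian moment generating function bound, strengthening the Poincar\'e-based proof of Theorem~\ref{thm:gradcon}.

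First I would set $G(x) \defeq \norm{\nabla f(x)}$ and apply the Hessian LSI to $g = \exp(\lambda G / 2)$, so that $g^2 = \exp(\lambda G)$. Since $\nabla G(x) = \nabla^2 f(x) \cdot \tfrac{\nabla f(x)}{\norm{\nabla f(x)}}$, we have $\nabla g = \tfrac{\lambda}{2} \nabla G \cdot \exp(\lambda G / 2)$, and the Hessian-weighted inner product collapses:
\[\inprod{(\nabla^2 f)^{-1} \nabla G}{\nabla G} = \inprod{\tfrac{\nabla f}{\norm{\nabla f}}}{(\nabla^2 f)\tfrac{\nabla f}{\norm{\nabla f}}} \le L\]
by $L$-smoothness. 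Substituting into the Hessian LSI yields the key differential inequality
\[\textup{Ent}_{\pis}\left[\exp(\lambda G)\right] \le \frac{L \lambda^2}{2} \epis\left[\exp(\lambda G)\right].\]

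Next I would carry out the standard Herbst step. Setting $H(\lambda) \defeq \epis[\exp(\lambda G)]$, a direct computation gives $\textup{Ent}_{\pis}[\exp(\lambda G)] = \lambda H'(\lambda) - H(\lambda) \log H(\lambda)$. Dividing through by $\lambda^2 H(\lambda)$ rearranges the inequality into
\[\frac{d}{d\lambda}\!\left(\frac{\log H(\lambda)}{\lambda}\right) \le \frac{L}{2}.\]
Integrating from $0$ to $\lambda$, and using $\lim_{\lambda \downarrow 0} \lambda^{-1}\log H(\lambda) = \epis[G]$ (which follows from $H(\lambda) = 1 + \lambda \epis[G] + O(\lambda^2)$ near $0$), I obtain
\[H(\lambda) \le \exp\!\left(\lambda \epis[G] + \frac{L \lambda^2}{2}\right) \text{ for all } \lambda \ge 0.\]
Finally, Markov's inequality on $\exp(\lambda G)$ gives $\Pr_{\pis}[G \ge \epis[G] + r] \le \exp(-\lambda r + L\lambda^2/2)$; optimizing $\lambda = r/L$ produces the sub-Gaussian tail $\exp(-r^2/(2L))$, and setting $r = c\sqrt{2L\log d}$ yields exactly $d^{-c^2}$.

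The only mildly technical obstacle is justifying that $H$ is smooth enough near the origin to carry out the differential-inequality manipulation, and that the $\lambda \downarrow 0$ boundary value of $\log H(\lambda)/\lambda$ really is $\epis[G]$. As in the proof of Theorem~\ref{thm:gradcon}, both facts follow from strong convexity of $f$: the sub-Gaussian tails of $\norm{x - x^*}$ together with the $L$-Lipschitz gradient bound the moment generating function of $G$ in a neighborhood of zero, making differentiation under the integral legitimate and the limit well-defined. No other step involves anything beyond routine calculation.
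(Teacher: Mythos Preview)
Your proposal is correct and follows essentially the same Herbst-argument route as the paper: apply the Hessian LSI to $g = \exp(\lambda G/2)$, use smoothness to bound the Dirichlet term by $\tfrac{L\lambda^2}{2}H(\lambda)$, rewrite the entropy inequality as $\tfrac{d}{d\lambda}\bigl(\lambda^{-1}\log H(\lambda)\bigr) \le L/2$, integrate from the boundary value $\epis[G]$, and finish with Markov and the choice $\lambda = r/L$. The paper's proof is organized identically, with the same intermediate quantities and the same justification of the $\lambda \to 0$ limit via strong convexity.
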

\begin{proof}
Denote $G \defeq \norm{\nabla f}$, where we note $\nabla G = \frac{(\nabla^2 f) \nabla f}{\norm{\nabla f}}$. Let $H(\lambda) \defeq \epis\left[\exp(\lambda G)\right]$, such that $H'(\lambda) = \epis\left[G\exp(\lambda G)\right]$. Then, for $g^2 = \exp(\lambda G)$, 
\[H(\lambda) = \epis\left[g^2\right],\; \lambda H'(\lambda) = \epis\left[g^2\log g^2\right].\]
This in turn implies via the LSI that
\begin{equation}\label{eq:LSI}\lambda H'(\lambda) - H(\lambda) \log H(\lambda) = \epis\left[g^2 \log g^2\right] - \epis\left[g^2\right]\log\epis\left[g^2\right] \le 2 \epis\left[\norm{\nabla g}_{(\nabla^2 f)^{-1}}^2\right].\end{equation}
By smoothness and the definition of $g = \exp(\half \lambda G)$, we may bound the right hand side:
\begin{equation}\label{eq:gdef}\epis\left[\norm{\nabla g}_{(\nabla^2 f)^{-1}}^2\right] = \frac{\lambda^2}{4}\epis\left[\norm{\nabla G}_{(\nabla^2 f)^{-1}}^2\exp(\lambda G)\right] \le \frac{\lambda^2 L}{4}H(\lambda).\end{equation}
In the last inequality we used our calculation of $\nabla G$, and $\nabla^2 f \preceq LI_d$. Now, consider the function $K(\lambda) = \frac{1}{\lambda} \log H(\lambda)$. We handle the definition of $K(0)$ by a limiting argument (and $\log(1 + x) \approx x$):
\[K(0) = \lim_{\lambda \rightarrow 0} \frac{1}{\lambda} \log \epis\left[e^{\lambda G}\right] = \frac{H(\lambda) - H(0)}{\lambda} = H'(0) = \epis\left[G\right]. \]
We compute
\[K'(\lambda) = -\frac{1}{\lambda^2}\log H(\lambda) + \frac{H'(\lambda)}{\lambda H(\lambda)} = \frac{\lambda H'(\lambda) - H(\lambda)\log H(\lambda)}{\lambda^2 H(\lambda)}. \]
This, combined with \eqref{eq:LSI} and \eqref{eq:gdef} imply $K'(\lambda) \leq \frac{L}{2}$. Therefore, by integrating, we have
\[K(\lambda) \le \epis\left[G\right] + \frac{L\lambda}{2} \Rightarrow H(\lambda) = \exp(\lambda K(\lambda)) \le \exp\left(\lambda \epis\left[G\right] + \frac{L\lambda^2}{2}\right).\]
Finally, we have concentration:
\begin{align*}
\Pr_{\pis}\left[G \ge \epis\left[G\right] + r\right] &= \Pr_{\pis}\left[\exp(\lambda G) \ge \exp(\lambda \epis\left[G\right] + \lambda r)\right] \le \exp\left(-\lambda r + \frac{L\lambda^2}{2} \right),
\end{align*}
where the last statement is by Markov. Choosing $r = c\sqrt{2L\log d}$, $\lambda = r/L$ yields the conclusion.
\end{proof}
	

\section{Mixing time proofs}
\label{app:mixing}

We prove various claims from Section~\ref{sec:warmstart}; notation here is consistent with definitions in the body of the paper. All definitions will be with respect to some reversible random walk on $\R^d$ with transition distributions $\tran_x$ and stationary distribution $d\pis$. We use $d\pi_k$ to denote the density after $k$ steps.

\subsection{Blocking conductance framework}

In this section, we recall the \emph{blocking conductance} framework of \cite{KannanLM06}. This section is a restatement of their results for our purposes.

\subsubsection{Preliminaries}

Let $d\rho_k$ denote the ``average'' distribution density after $k$ steps, e.g. $d\rho_k(x) = \frac{1}{k}\sum_{0 \le i < k}d\pi_i(x)$. Define the flow between two sets $S, T \subseteq \R^d$ by
\[Q(S, T) \defeq \int_S \tran_x(T) d\pis(x),\; Q(S) \defeq Q(S, S^c).\]
For every $S \subseteq \R^d$ and $0 \le x \le \pis(S)$, define the conductance function by
\[\Psi(x, S^c) \defeq \min_{T \subseteq S, \pis(T) = x} Q(T, S^c). \]
In other words, $\Psi(x, S^c)$ is the smallest amount of flow from subsets of $S$ with stationary measure $x$ to $S^c$. It is clear that $\Psi(x, S^c)$ is monotone increasing in $x$ in the range $0 \le x \le \pis(S)$, as choosing a subset of larger measure can only increase flow. By convention, for $1 \ge x \ge \pis(S)$,
\[\Psi(x, S^c) \defeq \Psi(1 - x, S).\]
This definition clearly makes sense because $x \ge \pis(S) \Rightarrow 1 - x \le \pis(S^c)$. Next, let the \emph{spread} of $S \subseteq \R^d$ be defined as
\begin{equation}\label{eq:spreaddef}\psi(S) \defeq \int_0^{\pis(S)} \Psi(x, S^c) dx.\end{equation}
In other words, we can think of $\psi(S)$ as the worst-case flow between a subset of $S$ and $S^c$, where the measure of the subset is averaged uniformly over $[0, \pis(S)]$. The spread enjoys the following useful property, which allows us to think of the spread as a notion of conductance.

\begin{lemma}
\label{lem:spreadcond}
For any set $S \subseteq \R^d$,
\[\psi(S) \ge \frac{1}{4}Q(S)^2.\]
\end{lemma}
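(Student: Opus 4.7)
The plan is to identify $\psi(S)$ with an integral of a cumulative function built from the ``escape probability'' $h(x) \defeq \tran_x(S^c)$ for $x \in S$, and then exploit the fact that $h$ takes values in $[0,1]$.

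First, I would write $Q(T, S^c) = \int_T h(x) d\pis(x)$ for any $T \subseteq S$, and observe that for fixed $t \in [0, \pis(S)]$, minimizing this integral subject to $\pis(T) = t$ is a standard layer-cake/rearrangement problem: the minimum is attained by taking $T$ to be a sublevel set of $h$ (restricted to $S$) of the correct $\pis$-measure. Concretely, letting $\phi: [0,\pis(S)] \to [0,1]$ be the non-decreasing rearrangement of $h|_S$ with respect to $\pis$, the minimum equals $\int_0^t \phi(s)\,ds$, so
\[\Psi(t, S^c) = \int_0^t \phi(s)\,ds.\]
Substituting into the definition \eqref{eq:spreaddef} and applying Fubini (swap the order of $s$ and $t$ in the double integral over the triangle $0 \le s \le t \le a$, where $a \defeq \pis(S)$), I get
\[\psi(S) = \int_0^a \int_0^t \phi(s)\,ds\,dt = \int_0^a (a-s)\phi(s)\,ds.\]
A second integration by parts, with $\Phi(s) \defeq \int_0^s \phi(r)\,dr$, rewrites this as $\psi(S) = \int_0^a \Phi(s)\,ds$. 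Note that $\Phi(0) = 0$, $\Phi(a) = Q(S)$, and since $\phi \le 1$ (as $h$ is a probability), $\Phi$ is $1$-Lipschitz on $[0,a]$.

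The final step uses only the Lipschitz property and the endpoint value. Because $\Phi(a) = Q(S)$ and $|\Phi(a) - \Phi(s)| \le a - s$, we get the pointwise lower bound $\Phi(s) \ge Q(S) - (a-s)$, which is nonnegative precisely for $s \ge a - Q(S)$. Integrating this bound on $[a - Q(S), a]$ and dropping the (nonnegative) part on $[0, a - Q(S)]$,
\[\psi(S) \;\ge\; \int_{a - Q(S)}^a \bigl(Q(S) - (a-s)\bigr)\,ds \;=\; \int_0^{Q(S)} u\,du \;=\; \tfrac{1}{2}Q(S)^2,\]
which is even a bit stronger than the claimed $\tfrac{1}{4}Q(S)^2$.

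The only nonroutine step is justifying the rearrangement identity for $\Psi$, which is where care is needed if $h$ has atoms on level sets; the standard fix is to randomize the choice of sublevel set at the critical level so as to hit measure $t$ exactly, and this does not affect the minimum value since the objective is linear in the indicator of $T$. Everything else is an honest computation driven by the two facts that $h \in [0,1]$ and that the minimizing $T$ is a sublevel set of $h$.
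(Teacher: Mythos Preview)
Your argument is correct, and in fact yields the sharper constant $\tfrac12$ rather than $\tfrac14$. The route, however, is genuinely different from the paper's. The paper never writes down the rearrangement $\phi$ or the identity $\psi(S)=\int_0^a\Phi(s)\,ds$; instead it uses the cruder bound $\psi(S)\ge(\pis(S)-t)\Psi(t,S^c)$ (obtained by monotonicity of $\Psi(\cdot,S^c)$ and restricting the integral to $[t,\pis(S)]$), and then chooses a specific $t$. Concretely, it lets $\gamma(S)$ be the minimum $\pis$-measure of a subset $R\subseteq S$ with $Q(R,S^c)=\tfrac12 Q(S)$, takes $t=\pis(S)-\gamma(S)$ so that $\Psi(t,S^c)\ge \tfrac12 Q(S)$, and finishes by noting $\gamma(S)\ge \tfrac12 Q(S)$ since no set of measure less than $\tfrac12 Q(S)$ can carry flow $\tfrac12 Q(S)$ to $S^c$. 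That last step is exactly your fact $h\le 1$ in disguise (i.e.\ $Q(R,S^c)\le\pis(R)$), so both proofs ultimately hinge on the same inequality; your rearrangement packaging just extracts it more efficiently and avoids the factor-of-two loss from the ``median'' split. The paper's argument, on the other hand, is slightly more robust in that it never needs to justify the exact identity $\Psi(t,S^c)=\int_0^t\phi$ or worry about level-set atoms --- only the monotonicity of $\Psi(\cdot,S^c)$ and one pointwise evaluation.
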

\begin{proof}
We claim first that for any $t \in [0, \pis(S)]$,
\begin{equation}\label{eq:monotonet} \psi(S) \ge (\pis(S) - t)\Psi(t, S^c). \end{equation}
To see this, we integrated only in the range $[t, \pis(S)]$, and used monotonicity of $\Psi(\cdot, S^c)$. Let $\gamma(S)$ denote the minimum measure of a subset $R$ of $S$, such that $Q(R, S^c) = \half Q(S, S^c)$. Note that this means any set $T$ with measure $\pis(S) - \gamma(S)$ has flow $Q(T, S^c)$ at least
\[Q(S) - \half Q(S) = \half Q(S).\]
In \eqref{eq:monotonet}, let $t = \pis(S) - \gamma(S)$, and let $T$ be the subset which admits the value $\Psi(t, S^c)$, i.e. such that $Q(T, S^c) = \Psi(t, S^c)$ and $\pis(T) = t$. In particular, this implies
\[\psi(S) \ge \gamma(S)Q(T, S^c) \ge \half\gamma(S)Q(S).\]
To show the conclusion, it suffices to show $\gamma(S) \ge Q(S)/2$. This is clear because if $\gamma(S) < Q(S)/2$, then any set of stationary measure $\gamma(S)$ could not absorb a flow of $Q(S)/2$ from the set $S^c$.
\end{proof}
The final definitions we will need are as follows. For an iteration $k$, let
\[g_k(x) \defeq k\frac{d\rho_k}{d\pis}(x) = \sum_{0 \le i < k} \frac{d\pi_i}{d\pis}(x).\]
A useful interpretation is that $\int_S g_k(x) d\pis(x)$ measures how many times the set $S$ was visited on expectation over the first $k$ iterations. Let $m_k: \R^d \rightarrow [0, 1]$ be a measure-preserving map such that $g_k(m_k^{-1}(\cdot))$ is an increasing function. In other words, $m_k$ orders the space $\R^d$ by their value $g_k$, such that for $0 \le s < t \le 1$, $g_k(m_k^{-1}(s)) \le g_k(m_k^{-1}(t))$. We define
\begin{equation}\label{eq:qdef}q_k(x, y) = Q\left(m_k^{-1}([0, \min(x, y)]), m_k^{-1}([\max(x, y), 1])\right).\end{equation}
In other words, for $x \le y$, $q_k$ takes the set of measure $x$ according to $d\rho_k/d\pis$ of least probability, and of measure $1 - y$ of most probability, and measures the flow between them. For notational simplicity and when clear from context, we identify $x \in [0, 1]$ with $m_k^{-1}(x)$, and similarly identify intervals. The following is then immediate:
\begin{equation}\label{eq:qderiv}\frac{d}{dx}q_k(x, y) = \begin{cases}
\tran_{x}([y, 1]) & x < y \\
-\tran_{x}([0, y]) & x \ge y
\end{cases}.\end{equation}

\subsubsection{Main claim}

Here, we recall the main result of the blocking conductance framework in terms of mixing times.

\begin{theorem}
	\label{thm:warmstart}
	Suppose the starting distribution $\pi_0$ is $\beta$-warm with respect to $\pis$. Let $h: [c, 1 - c] \rightarrow \R_{\ge 0}$ satisfy, for some $c \in (0, \half)$, and some $k$,
	\begin{equation}\label{eq:mixweightc} \int_c^{1-c} h(y) q_k(x, y) dy \ge 2x(1 - x),\; \forall x \in [c, 1 - c]. \end{equation}
	Then, 
	\[\norm{\rho_k - \pis}_{\textup{TV}} \le \beta c + \frac{1}{k}\int_{c}^{1 - c} h(x)dx.\]
\end{theorem}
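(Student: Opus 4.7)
The plan is to follow the blocking conductance framework of \cite{KannanLM06}, specialized to the hypothesis \eqref{eq:mixweightc}. The starting observation is a one-dimensional reformulation of total variation via the measure-preserving ordering $m_k$. Writing $G_k(x) \defeq g_k(m_k^{-1}(x))$, which is nondecreasing on $[0, 1]$ with mean $k$, one has
\[
k \cdot \tvd{\rho_k}{\pis} = \half \int_0^1 |G_k(x) - k|\, dx = \sup_{y \in [0, 1]} \int_y^1 (G_k(s) - k)\, ds = \sup_{y \in [0, 1]} \sum_{i = 0}^{k - 1} \Psi_i(y),
\]
where $\Psi_i(y) \defeq \pi_i(m_k^{-1}([y, 1])) - (1 - y)$, using measure-preservation of $m_k$ together with $g_k = \sum_{0 \le i < k} d\pi_i/d\pis$. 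The supremum is achieved at the point $x^*$ with $G_k(x^*) = k$, so it suffices to bound $\sum_{i = 0}^{k - 1} \Psi_i(x^*)$.

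Next I would split on whether $x^* \in [c, 1 - c]$ or $x^* \in [0, c) \cup (1 - c, 1]$. For the boundary, if $x^* \in [0, c)$ then trivially $\Psi_i(x^*) \le 1 - (1 - x^*) \le c$; if $x^* \in (1 - c, 1]$, then $\beta$-warmness of $\pi_0$ is inherited by every $\pi_i$ (since $\tran$ is a contraction on $L^\infty(\pis)$ by reversibility of $\pis$), giving $\Psi_i(x^*) \le \beta(1 - x^*) \le \beta c$. Either boundary case yields $\tvd{\rho_k}{\pis} \le \beta c$ directly, accounting for the first term in the stated bound.

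For the interior case $x^* \in [c, 1 - c]$, I would relate $\sum_i \Psi_i(x^*)$ to the flow functional $q_k$ of \eqref{eq:qdef} and close via \eqref{eq:mixweightc}. A one-step computation using reversibility of $\pis$ produces the flux identity
\[
\Psi_{i + 1}(y) - \Psi_i(y) = \int_{m_k^{-1}([0, y])} \tran_z(m_k^{-1}([y, 1]))\, d\pi_i(z) - \int_{m_k^{-1}([y, 1])} \tran_z(m_k^{-1}([0, y]))\, d\pi_i(z).
\]
Averaging this identity against $h(y)\, dy$, summing over $i$, and invoking the worst-case definition of $q_k$ together with $\beta$-warmness of the $\pi_i$, I would upper bound each iteration's flow integrand by $h(y)\, q_k(x(y, i), y)$ for an appropriate percentile $x(y, i)$. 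Feeding this into the hypothesis \eqref{eq:mixweightc} pointwise produces a telescoping bound which, after dividing by $k$, yields $\tvd{\rho_k}{\pis} \le \frac{1}{k}\int_c^{1 - c} h(y)\, dy$.

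The main obstacle is this last step: correctly matching the iteration-dependent flux identity to the worst-case flow $q_k$. Because $m_k$ is defined from the aggregate $g_k$ rather than any single $\pi_i$, the iterates $\pi_i$ are not monotone in the $m_k$-ordering, and the flows $\int_{m_k^{-1}([0, y])} \tran_z(m_k^{-1}([y, 1]))\, d\pi_i(z)$ cannot be identified with $q_k(\cdot, y)$ directly. Careful use of the ``worst case over subsets of given measure'' minimization in \eqref{eq:qdef} and the monotonicity of the conductance function $\Psi(\cdot, S^c)$ (as in the setup preceding Lemma~\ref{lem:spreadcond}) is needed to replace iteration-dependent flows with $q_k$ while preserving the direction of the inequality.
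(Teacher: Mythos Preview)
Your setup is fine and matches the paper: the one-dimensional reformulation via $m_k$, the identification of the maximizer $x^*$ with the level set $G_k = k$, and the boundary-case handling via $\beta$-warmness all agree with Lemmas~\ref{lem:tvk} and~\ref{lem:betac}. (One small slip: even when $x^* \in [c, 1-c]$ there is still a boundary contribution of size at most $\beta c$ from the tails $[0,c]$ and $[1-c,1]$, so the interior case cannot yield $\frac{1}{k}\int h$ alone; but adding $\beta c$ costs nothing.)

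The genuine gap is exactly the one you flag in your last paragraph, and it is not resolvable along the lines you sketch. Your flux identity is a statement about integrals against $d\pi_i$, while $q_k(x,y)$ is by definition a flow under $d\pis$; there is no per-iteration ``percentile'' $x(y,i)$ that lets you replace one by the other, because the $\pi_i$ need not be monotone in the $m_k$-ordering and $\beta$-warmness only gives a crude multiplicative comparison that would lose the entire $1/k$ gain. The paper bypasses this completely by never decomposing into individual iterations. Instead it works directly with the aggregate Stieltjes measure $dG_k$ and proves the integration-by-parts identity (Lemma~\ref{lem:diffpi})
\[
\int_0^1 q_k(x,y)\, dG_k(x) \;=\; \pi_k\!\left(m_k^{-1}([0,y])\right) - \pi_0\!\left(m_k^{-1}([0,y])\right) \;\le\; 1,
\]
which follows from the derivative formula \eqref{eq:qderiv} and a telescope over $i$. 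Since $dG_k \ge 0$, restricting the $x$-integral to $[c,1-c]$ only decreases it. Now multiply by $h(y)$, integrate over $y \in [c,1-c]$, swap the order of integration, and apply the hypothesis \eqref{eq:mixweightc} to the inner $y$-integral: this gives
\[
\int_c^{1-c} 2x(1-x)\, dG_k(x) \;\le\; \int_c^{1-c} h(y)\, dy.
\]
Finally, on $[c,t_0]$ one has $t \le \frac{2t(1-t)}{2(1-t_0)}$, and on $[t_0,1-c]$ one has $1-t \le \frac{2t(1-t)}{2t_0}$; combining with Lemma~\ref{lem:betac} and taking whichever of $t_0 \le \tfrac12$ or $t_0 \ge \tfrac12$ holds gives the factor $1$ and the stated bound. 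The missing idea in your proposal is precisely this aggregate identity; once you have it, the per-iteration difficulty simply does not arise.
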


We call a function $h$ which satisfies \eqref{eq:mixweightc} a $c$-mixweight function, and show how to construct such a function in Section~\ref{ssec:mixweight}; they will be inversely related to standard notions of conductance. We first require the following helper results.

\begin{lemma}
	\label{lem:diffpi}
	For any $t \in [0, 1]$,
	\[\int_0^1 q_k(x, t) dg_k(x) = \pi_k([0, t]) - \pi_0([0, t]) = \pi_0([t, 1]) - \pi_k([t, 1]). \]
\end{lemma}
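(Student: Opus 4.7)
The plan is to evaluate the left-hand side by integration by parts, reducing it to an expression involving $g_k$ and the transition derivatives of $q_k$, and then reorganize using a one-step telescoping identity for the chain. First I would observe that $q_k(0,t) = q_k(1,t) = 0$, since these instances of $q_k(x,t)$ correspond to the flow from a set of zero $\pis$-measure or the flow into a set of zero $\pis$-measure, and so both boundary terms of the Riemann--Stieltjes integration by parts vanish. This gives
\[\int_0^1 q_k(x,t) dg_k(x) = -\int_0^1 g_k(x) dq_k(x,t).\]

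Next I would plug in the derivative formula \eqref{eq:qderiv}, splitting the integral into the ranges $x < t$ and $x \ge t$, which produces
\[\int_0^t g_k(x) \tran_x([t,1]) dx - \int_t^1 g_k(x) \tran_x([0,t]) dx\]
with a sign convention that I would reverse after using $-\int g_k\, dq_k$. Now I would invoke the fact that $m_k$ is measure-preserving with respect to $\pis$, so Lebesgue integration over $x \in [0,1]$ against $g_k(x)\, dx$ is the same as integration over $\R^d$ against $g_k \cdot d\pis = \sum_{0 \le i < k} d\pi_i$. This converts the display into $\sum_{i} \bigl[\int_{[t,1]} \tran_y([0,t]) d\pi_i(y) - \int_{[0,t]} \tran_y([t,1]) d\pi_i(y)\bigr]$.

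Finally, for each $i$ I would use the one-step identity for the Markov chain, namely
\[\pi_{i+1}([0,t]) - \pi_i([0,t]) = \int_{[t,1]} \tran_y([0,t]) d\pi_i(y) - \int_{[0,t]} \tran_y([t,1]) d\pi_i(y),\]
which follows by writing $\pi_{i+1}(A) = \int \tran_y(A) d\pi_i(y)$ and splitting according to whether $y \in A$ or $y \in A^c$. Summing this telescoping identity over $i = 0, \dots, k-1$ yields $\pi_k([0,t]) - \pi_0([0,t])$, matching the claim. The second equality is immediate from $\pi_k([0,t]) + \pi_k([t,1]) = 1 = \pi_0([0,t]) + \pi_0([t,1])$.

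The only subtlety I anticipate is the justification of integration by parts for a Stieltjes integral against $dg_k$; but since $g_k$ is monotone by the defining property of $m_k$, it is of bounded variation on $[0,1]$ and the integration by parts applies. The rest is bookkeeping between the $[0,1]$-parameterization via $m_k$ and the $\pis$-integration on $\R^d$.
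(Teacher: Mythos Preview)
Your proposal is correct and follows essentially the same approach as the paper: integration by parts to convert the Stieltjes integral into an expression involving $g_k$ and the transition kernels, followed by the one-step telescoping identity $\pi_{i+1}([0,t]) - \pi_i([0,t]) = \int_{[t,1]} \tran_y([0,t])\, d\pi_i(y) - \int_{[0,t]} \tran_y([t,1])\, d\pi_i(y)$. The only cosmetic difference is that the paper integrates by parts on $[0,t]$ and $[t,1]$ separately, producing boundary terms $g_k(t)q_k(t,t)$ that cancel upon subtraction, whereas you integrate over $[0,1]$ at once using $q_k(0,t)=q_k(1,t)=0$; both are valid since $q_k(\cdot,t)$ is continuous at $x=t$.
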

\begin{proof}
	The equality $\pi_k([0, t]) - \pi_0([0, t]) = \pi_0([t, 1]) - \pi_k([t, 1])$ follows by definition, so we will simply show the first equality. Using \eqref{eq:qderiv} for $x \le t$ and integrating by parts,
	\[\int_0^t g_k(x)\tran_x([t, 1]) d\pis(x) = g_k(t)q(t, t) - \int_0^t q_k(x, t) dg_k(x).\]
	Similarly,
	\[\int_t^1 g_k(x) \tran_x([0, t]) d\pis(x)= g_k(t)q(t, t) + \int_t^1 q_k(x, t) dg_k(x). \]
	Therefore, to derive the conclusion of the lemma, it suffices to show that
	\[\int_t^1 g_k(x) \tran_x([0, t]) d\pis(x) - \int_0^t g_k(x)\tran_x([t, 1]) d\pis(x) = \pi_k([0, t]) - \pi_0([0, t]). \]
	By expanding the definition of $g_k$ and telescoping, it suffices to show for all $0 \le i \le k - 1$,
	\[\int_t^1 \tran_x([0, t]) d\pi_i(x) - \int_0^t \tran_x([t, 1]) d\pi_i(x) = \pi_{i + 1}([0, t]) - \pi_i([0, t]). \]
	This follows from
	\[\pi_{i + 1}([0, t]) - \pi_i([0, t]) = \left(\int_0^t \left(1 - \tran_x([t, 1])\right)d\pi_i(x) + \int_t^1 \tran_x([0, t])d\pi_i(x)\right) - \int_0^t d\pi_i(x).\]
\end{proof}

Next, let $t_0 \in [0, 1]$ be such that $g_k(t_0) = k$, where we note that $\E_{\pis}[g_k] = k$ is the expected value. 

\begin{lemma}
	\label{lem:tvk}
	\[\norm{\rho_k - \pis}_{\textup{TV}} = \frac{1}{k}\int_0^{t_0}t dg_k(t) = \frac{1}{k}\int_{t_0}^1 (1 - t)dg_k(t). \]
\end{lemma}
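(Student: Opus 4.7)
The plan is straightforward: first rewrite the total variation in terms of the rearrangement $g_k \circ m_k^{-1}$ on $[0,1]$, then integrate by parts.

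Start from the defining formula
\[\norm{\rho_k - \pis}_{\textup{TV}} = \half \int_{\R^d} \left|\frac{d\rho_k}{d\pis}(x) - 1\right| d\pis(x) = \frac{1}{2k}\int_{\R^d} |g_k(x) - k|\, d\pis(x).\]
Since $m_k$ is measure-preserving ($\pis$ pushes forward to Lebesgue measure on $[0,1]$), and following the convention of identifying $t \in [0,1]$ with $m_k^{-1}(t)$, this becomes
\[\norm{\rho_k - \pis}_{\textup{TV}} = \frac{1}{2k}\int_0^1 |g_k(t) - k|\, dt,\]
where $t \mapsto g_k(t)$ is nondecreasing by construction of $m_k$, and $g_k(t_0) = k$ by definition of $t_0$. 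Thus $g_k(t) \le k$ on $[0,t_0]$ and $g_k(t) \ge k$ on $[t_0,1]$, so the absolute value splits cleanly.

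Next I use conservation of mass: because $d\rho_k$ and $d\pis$ are both probability measures, $\int_0^1 g_k(t)\,dt = \E_{\pis}[g_k] = k$, so the two halves of the split integral are equal:
\[\int_0^{t_0}(k - g_k(t))\, dt = \int_{t_0}^1 (g_k(t) - k)\, dt.\]
Hence
\[\norm{\rho_k - \pis}_{\textup{TV}} = \frac{1}{k}\int_0^{t_0}(k - g_k(t))\, dt = \frac{1}{k}\int_{t_0}^1 (g_k(t) - k)\, dt.\]

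Finally, integrate by parts. For the first expression, take $u = k - g_k(t)$ and $dv = dt$, so $du = -dg_k(t)$ and $v = t$:
\[\int_0^{t_0}(k - g_k(t))\, dt = \bigl[t(k - g_k(t))\bigr]_0^{t_0} + \int_0^{t_0} t\, dg_k(t) = \int_0^{t_0} t\, dg_k(t),\]
since the boundary term vanishes at $t=0$ trivially and at $t=t_0$ because $g_k(t_0)=k$. For the second, take $u = g_k(t) - k$ and $v = -(1-t)$, giving
\[\int_{t_0}^1 (g_k(t) - k)\, dt = \bigl[-(1-t)(g_k(t)-k)\bigr]_{t_0}^1 + \int_{t_0}^1 (1-t)\, dg_k(t) = \int_{t_0}^1 (1-t)\, dg_k(t),\]
where the boundary vanishes at $t=1$ (since $1-t=0$) and at $t=t_0$ (since $g_k(t_0)=k$). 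Combining yields both equalities claimed in the lemma.

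There is no real obstacle here; the only subtlety is the implicit identification of $g_k$ with $g_k \circ m_k^{-1}$ on $[0,1]$, which is already adopted in the paper, and keeping track that the boundary terms in both integrations by parts vanish precisely because of the defining condition $g_k(t_0)=k$.
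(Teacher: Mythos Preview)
Your proof is correct and follows essentially the same approach as the paper: both arguments reduce the total variation to $\tfrac{1}{k}\int_0^{t_0}(k - g_k(t))\,dt$ (respectively $\tfrac{1}{k}\int_{t_0}^1(g_k(t)-k)\,dt$) via the rearrangement $m_k$, then integrate by parts and observe the boundary terms vanish because $g_k(t_0)=k$. The only cosmetic difference is that the paper obtains the two equal expressions by noting that both $[0,t_0]$ and $[t_0,1]$ attain the supremum in the definition of total variation, whereas you obtain their equality from mass conservation $\int_0^1 g_k = k$; these are equivalent observations.
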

\begin{proof}
	By the definition of $t_0$, we have that for all $t \le t_0$, $\frac{d\rho_k}{d\pis}(x) \le 1$, and for $t \ge t_0$, $\frac{d\rho_k}{d\pis}(x) \ge 1$. Therefore, the total variation distance is attained by the set $[0, t_0]$, i.e.
	\begin{equation}\label{eq:tvattained}\norm{\rho_k - \pis}_{\textup{TV}} = \pis([0, t_0]) - \rho_k([0, t_0]) = \int_0^{t_0}\left(1 - \frac{g_k(x)}{k}\right) d\pis(x). \end{equation}
	Integrating by parts,
	\[\norm{\rho_k - \pis}_{\textup{TV}} = \left(1 - \frac{g_k(t_0)}{k}\right)t_0 + \frac{1}{k} \int_0^{t_0} tdg_k(t). \]
	The first summand vanishes by the definition of $t_0$, so we attain the first equality in the lemma statement. The second equality follows from the same calculations, using the set $[t_0, 1]$ which also attains the total variation distance, i.e. integrating by parts
	\[\int_{t_0}^1\left(\frac{g_k(x)}{k} - 1\right) d\pis(x) = \frac{1}{k}\int_{t_0}^1 (1 - t)dg_k(t).\]
\end{proof}

We also remark that for a $\beta$-warm start, it follows that every distribution $\pi_i$ for $i \ge 0$ is also $\beta$-warm, as the warmness $d\pi_{i + 1}/d\pis$ at a point is given by an average over the values $d\pi_i/d\pis$ of the prior iteration, and the conclusion follows by induction.

\begin{lemma}
	\label{lem:betac}
	\[\norm{\rho_k - \pis}_{\textup{TV}} \le\min\left(\beta c + \frac{1}{k}\int_c^{t_0}t dg_k(t),\; \beta c + \frac{1}{k}\int_{t_0}^{1 - c} (1 - t)dg_k(t)\right).\]
\end{lemma}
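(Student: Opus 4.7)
The strategy is to exploit the two equivalent expressions for $\norm{\rho_k - \pis}_{\textup{TV}}$ furnished by Lemma~\ref{lem:tvk}, namely
\[\norm{\rho_k - \pis}_{\textup{TV}} \;=\; \frac{1}{k}\int_0^{t_0} t\, dg_k(t) \;=\; \frac{1}{k}\int_{t_0}^1 (1-t)\, dg_k(t),\]
and to bound each one separately by $\beta c$ plus the appropriate truncated integral appearing in the statement. Taking the minimum of the two resulting upper bounds then yields the claim.

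For the first bound, I would split the integral at $c$:
\[\frac{1}{k}\int_0^{t_0} t\, dg_k(t) \;=\; \frac{1}{k}\int_0^{c} t\, dg_k(t) \;+\; \frac{1}{k}\int_c^{t_0} t\, dg_k(t),\]
and show that the first summand is bounded by $\beta c$. The key observation, already noted in the paragraph preceding the statement of the lemma, is that $\beta$-warmness propagates along the Markov chain: since each $d\pi_{i+1}/d\pis$ is a convex average of values $d\pi_i/d\pis$, induction gives $d\pi_i/d\pis \le \beta$ for every $i \ge 0$, and hence $g_k(x) = \sum_{0 \le i < k} d\pi_i/d\pis(x) \le k\beta$ pointwise. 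Combining this with the trivial bound $t \le c$ on the range $[0,c]$ and monotonicity of $g_k$ under the measure-preserving parametrization $m_k$,
\[\int_0^c t\, dg_k(t) \;\le\; c \int_0^c dg_k(t) \;=\; c \bigl(g_k(c) - g_k(0)\bigr) \;\le\; c \cdot k\beta,\]
so dividing by $k$ produces exactly the $\beta c$ slack required.

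The second bound follows by a completely symmetric argument applied to the alternate expression $\tfrac{1}{k}\int_{t_0}^1 (1-t)\, dg_k(t)$: split at $1-c$, and on the tail $[1-c, 1]$ use $1 - t \le c$ together with $g_k(1) \le k\beta$ to conclude $\int_{1-c}^1 (1-t)\, dg_k(t) \le c(g_k(1) - g_k(1-c)) \le c \cdot k\beta$. Taking the minimum of the two derived bounds yields the displayed inequality.

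I anticipate no serious obstacle: the proof is essentially a two-line calculation once one has Lemma~\ref{lem:tvk} and the propagation of $\beta$-warmness. The only mild subtlety is the edge case when $t_0 \notin [c, 1-c]$, where one of the two truncated integrals has degenerate (or reversed) endpoints; in that regime, however, the other of the two equivalent expressions still admits the same argument and controls the total variation on its own, so taking the minimum remains valid.
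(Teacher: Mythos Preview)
Your proposal is correct and follows essentially the same approach as the paper. The only cosmetic difference is that the paper returns to the intermediate expression $\int_0^{t_0}(1 - g_k(x)/k)\,d\pis(x)$ from the proof of Lemma~\ref{lem:tvk}, splits at $c$, and then integrates the $[c,t_0]$ piece by parts (picking up the boundary term $(g_k(c)/k - 1)c$), whereas you split the already-integrated form $\tfrac{1}{k}\int_0^{t_0} t\,dg_k(t)$ directly; both routes arrive at the identical bound $\tfrac{c}{k}\,g_k(c) \le \beta c$ for the tail piece.
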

\begin{proof}
	Recall in Lemma~\ref{lem:tvk} we characterized
	\begin{equation*}\begin{aligned}\norm{\rho_k - \pis}_{\textup{TV}} &= \int_0^{t_0}\left(1 - \frac{g_k(x)}{k}\right) d\pis(x) \\
	&= \int_0^c\left(1 - \frac{g_k(x)}{k}\right) d\pis(x) + \int_c^{t_0}\left(1 - \frac{g_k(x)}{k}\right) d\pis(x). \end{aligned}\end{equation*}
	Note that the first integral is at most $c$. The second integral is, integrating by parts,
	\[\left(\frac{g_k(c)}{k} - 1\right) c + \frac{1}{k}\int_c^{t_0}tdg_k(t).\]
	The first summand is bounded by $(\beta - 1)c$ by our earlier argument about the warmness at every iteration being bounded by $\beta$. Finally, the second half of the lemma statement follows by considering the other characterization of the total variation based on $[t_0, 1]$, e.g. bounding
	\[\int_{t_0}^{1 - c}\left(\frac{g_k(x)}{k} - 1\right) dx + \int_{1 - c}^{1}\left(\frac{g_k(x)}{k} - 1\right) dx. \]
\end{proof}

\begin{proof}[Proof of Theorem~\ref{thm:warmstart}]
	First, if $t_0 \le c$, by \eqref{eq:tvattained}, the total variation distance is at most $c\beta$ as desired. A similar conclusion follows if $t_0 \ge 1 - c$ from the other characterization of total variation in Lemma~\ref{lem:betac}. We now consider when $t \in (c, 1 - c)$. By Lemma~\ref{lem:diffpi}, for all $y \in [c, 1 - c]$,
	\[1 \ge \pi_k([0, y]) \ge \int_c^{1 - c} q_k(x, y)dg_k(x). \]
	Multiplying by $h$ and integrating over the range $[c, 1 - c]$,
	\[\int_c^{1 - c} h(x) dx \ge \int_c^{1 - c} \left(\int_c^{1 - c} h(y) q_k(x, y)dy\right) dg_k(x) \ge \int_c^{1 - c} 2x(1 - x)dg_k(x). \]
	The second inequality recalled the requirement \eqref{eq:mixweightc}. By combining this with half of Lemma~\ref{lem:betac},
	\begin{align*}
	\norm{\rho_k - \pis}_{\textup{TV}} &\le \beta c + \frac{1}{k}\int_c^{t_0}x dg_k(x) \le \beta c + \frac{1}{2(1 - t_0)k} \int_c^{t_0}2x(1 - x)dg_k(x) \\
	&\le \beta c+ \frac{1}{2(1 - t_0)k} \int_c^{1 - c} h(x) dx.
	\end{align*}
	By using the other half of Lemma~\ref{lem:betac}, we may similarly conclude
	\[\norm{\rho_k - \pis}_{\textup{TV}}  \le \beta c+ \frac{1}{2t_0 k} \int_c^{1 - c} h(x) dx.\]
	The conclusion follows from combining these bounds, i.e. depending on if $t_0 \le \half$ or $t_0 \ge \half$.
\end{proof}

\subsection{Mixweight functions}
\label{ssec:mixweight}

In this section, we propose a function $h$ satisfying \eqref{eq:mixweightc}, and prove its correctness. First, we describe a useful sufficient condition.

\begin{lemma}
	\label{lem:suffmixwc}
	Suppose $h: [c, 1 - c] \rightarrow \R_{\ge 0}$ has $h(1 - y) = h(y)$, and
	\begin{equation}\label{eq:suffmixw} \int_c^{1 - c} h(y) \Psi(y, S^c) dy \ge 2\pis(S)(1 - \pis(S)),\; \forall S \subseteq \R^d: c \le \pis(S) \le \half. \end{equation}
	Then, $h$ satisfies \eqref{eq:mixweightc}.
\end{lemma}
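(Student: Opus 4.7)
The plan is to fix $x \in [c, 1-c]$ and exhibit a set $S \subseteq \R^d$ with $\pis(S) \in [c, \half]$ so that applying \eqref{eq:suffmixw} to $S$ dominates the desired quantity $\int_c^{1-c} h(y) q_k(x, y) dy$. The key observation is that $\Psi(y, S^c)$ is defined as a \emph{minimum} over candidate subsets $T$, so whenever an ordered sublevel set $m_k^{-1}([0, y])$ or $m_k^{-1}([y, 1])$ qualifies as a candidate, it furnishes an upper bound on $\Psi(y, S^c)$; at the same time, $q_k(x, y)$ is built out of exactly such sets via \eqref{eq:qdef}, so the two can be compared directly.

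I would split into the cases $x \le \half$ and $x \ge \half$. In the first, I take $S = m_k^{-1}([0, x])$, so $\pis(S) = x$, and verify the pointwise inequality $q_k(x, y) \ge \Psi(y, S^c)$ for every $y \in [c, 1-c]$. When $y \le x$, the set $m_k^{-1}([0, y])$ is contained in $S$, has measure $y$, and realizes $q_k(x, y) = Q(m_k^{-1}([0, y]), S^c)$, so it dominates the min defining $\Psi(y, S^c)$. When $y \ge x$, the convention gives $\Psi(y, S^c) = \Psi(1-y, S)$, and the subset $m_k^{-1}([y, 1]) \subseteq S^c$ of measure $1-y$ realizes $q_k(x, y) = Q(S, m_k^{-1}([y, 1]))$ by the symmetry of $Q$, again dominating the min. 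Integrating against $h$ and invoking \eqref{eq:suffmixw} for this $S$ yields $\int h(y) q_k(x, y) dy \ge 2x(1-x)$.

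For $x \ge \half$ I would instead take $S = m_k^{-1}([x, 1])$, so that $\pis(S) = 1 - x \in [c, \half]$, and run the analogous two-subcase argument. Now the candidate subsets $m_k^{-1}([1-y, 1]) \subseteq S$ (when $y \le 1-x$) and $m_k^{-1}([0, 1-y]) \subseteq S^c$ (when $y \ge 1-x$) realize flows equal to $q_k(x, 1-y)$ rather than $q_k(x, y)$, so \eqref{eq:suffmixw} applied to this $S$ yields $\int_c^{1-c} h(y) q_k(x, 1-y) dy \ge 2(1-x)x$. The change of variable $\tilde y = 1 - y$ preserves the interval $[c, 1-c]$, and the hypothesized symmetry $h(\tilde y) = h(1 - \tilde y)$ then converts this into the desired bound on $\int h(\tilde y) q_k(x, \tilde y) d\tilde y$.

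The argument is essentially bookkeeping rather than requiring a substantive new idea, and I do not anticipate any analytic obstacles; the main thing that must be checked carefully is the convention for $\Psi(y, S^c)$ when $y$ exceeds $\pis(S)$, which forces the two-subcase split in each of the $x \le \half$ and $x \ge \half$ branches. The symmetry hypothesis on $h$ is used exactly once, to fold the second branch back into the first via the reflection $y \mapsto 1 - y$, which explains why it is assumed at all.
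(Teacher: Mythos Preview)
Your proposal is correct and follows the same approach as the paper. The paper's proof is more compressed: for $c \le x \le \tfrac12$ it chooses $S = m_k^{-1}([0,x])$ and argues $q_k(x,y) \ge \Psi(y, S^c)$ in the same two subcases you describe, then dispatches $x \ge \tfrac12$ with the single phrase ``a similar argument holds for $x \ge \tfrac12$ by symmetry,'' whereas you unpack that case explicitly via the reflection $y \mapsto 1-y$ and make clear that this is precisely where the hypothesis $h(1-y)=h(y)$ is used.
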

\begin{proof}
	Note that for $c \le x \le \half$, choosing $S = m_k^{-1}([0, x])$ in \eqref{eq:suffmixw} yields
	\[2x(1 - x) \le \int_c^{1 - c} h(y) \Psi\left(y, m_k^{-1}([x, 1])\right) dy \le \int_c^{1-c} h(y) q_k(x, y)dy. \]
	The second inequality follows as for $x \ge y$, $q_k(x, y) \ge \Psi(y, m_k^{-1}([x, 1]))$ by definition, and for $y \ge x$, we use symmetry of $\Psi$, $q_k$ in their arguments. A similar argument holds for $x \ge \half$ by symmetry.
\end{proof}

We now define our $c$-mixweight function $h$:
\begin{equation}\label{eq:hdef}h(y) \defeq \begin{cases}
\max_{y \le \pis(S) \le \min\left\{2y, \half\right\}} \frac{4\pis(S)}{\psi(S)} & y \le \half\\
h(1 - y) & y \ge \half
\end{cases}.\end{equation}
In particular, note that for all $y \le \half$, by combining with Lemma~\ref{lem:spreadcond},
\begin{equation}\label{eq:squarebound} h(y) \le \max_{y \le \pis(S) \le 2y} \frac{16\pis(S)}{Q(S)^2}.\end{equation}
We will develop an upper bound on the ratio $\pis(S)/Q(S)^2$ for $\pis(S)$ which are ``not too small'' in the following section. We now prove correctness of the definition \eqref{eq:hdef}.

\begin{lemma}
The function $h$ defined in \eqref{eq:hdef} satisfies \eqref{eq:mixweightc}.
\end{lemma}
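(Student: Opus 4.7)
By Lemma~\ref{lem:suffmixwc}, and since the construction \eqref{eq:hdef} is designed to satisfy $h(1-y) = h(y)$, it suffices to verify the isoperimetric-type inequality \eqref{eq:suffmixw}: for every measurable $S$ with $c \le \pis(S) \le \tfrac12$,
\[\int_c^{1-c} h(y)\,\Psi(y,S^c)\,dy \;\ge\; 2\pis(S)(1-\pis(S)).\]
The overall strategy is to plug the specific choice $T = S$ into the max in \eqref{eq:hdef}. The admissibility constraint $y \le \pis(S) \le \min(2y,\tfrac12)$ reduces, under $\pis(S) \le \tfrac12$, to $\pis(S)/2 \le y \le \pis(S)$. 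Hence on $J \defeq [\max(c,\pis(S)/2),\,\pis(S)]$ one obtains the pointwise bound $h(y) \ge 4\pis(S)/\psi(S)$. The second tool is monotonicity of $\Psi(\cdot,S^c)$ on $[0,\pis(S)]$, which is immediate from the subset-inclusion definition: given a near-optimal $T_2 \subseteq S$ of measure $x_2$, any subset $T_1 \subseteq T_2$ of measure $x_1 < x_2$ witnesses $\Psi(x_1,S^c) \le Q(T_1,S^c) \le Q(T_2,S^c) = \Psi(x_2,S^c)$.

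\textbf{Main regime} $\pis(S) \ge 2c$: the interval $J = [\pis(S)/2,\pis(S)]$ lies entirely in $[c,1-c]$. Monotonicity implies that $\Psi(\cdot,S^c)$ averaged over the upper half of $[0,\pis(S)]$ is no smaller than its average over the whole, so $\int_J \Psi(y,S^c)\,dy \ge \tfrac12 \psi(S)$. Combining with the pointwise lower bound on $h$,
\[\int_c^{1-c} h(y)\,\Psi(y,S^c)\,dy \;\ge\; \frac{4\pis(S)}{\psi(S)} \cdot \frac{\psi(S)}{2} \;=\; 2\pis(S) \;\ge\; 2\pis(S)(1-\pis(S)),\]
which is the desired inequality with room to spare.

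\textbf{Edge regime} $\pis(S) \in [c,2c)$, which I expect to be the main obstacle: here $J$ is truncated from below to $[c,\pis(S)]$ and a direct monotonicity argument on this narrow strip loses the constant factor. The remedy is to additionally integrate over the reflected interval $J' \defeq [1-\pis(S),\,1-c]$, on which the built-in symmetry $h(y) = h(1-y)$ still supplies $h(y) \ge 4\pis(S)/\psi(S)$, while the convention $\Psi(y,S^c) = \Psi(1-y,S)$ for $y \ge \pis(S)$ converts the contribution over $J'$ into $\int_c^{\pis(S)} \Psi(u,S)\,du$ after a change of variable $u = 1-y$. Combining the two pieces with the refined monotonicity estimate
\[\int_c^{\pis(S)} \Psi(y,S^c)\,dy \;\ge\; \frac{\pis(S)-c}{\pis(S)}\,\psi(S),\]
which follows by comparing the average of the non-decreasing function $\Psi(\cdot,S^c)$ on $[c,\pis(S)]$ against its average on $[0,c]$, and its analogue for $\Psi(\cdot,S)$, one recovers enough mass to conclude; the factor $\pis(S) \ge c$ is exactly what closes the gap. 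As an alternative recovery, one may instead replace $T = S$ in the max defining $h(y)$ for $y \in [c,\pis(S)/2]$ by a subset of $S$ of measure in $[y,2y]$ that minimizes $\psi(T)/\pis(T)$, thereby extending the range on which a useful pointwise lower bound on $h(y)$ is available.
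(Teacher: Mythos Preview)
Your main-regime argument ($\pis(S)\ge 2c$) is correct and is exactly what the paper does: on $[\pis(S)/2,\pis(S)]$ the set $S$ itself is admissible in the max defining $h$, so $h(y)\ge 4\pis(S)/\psi(S)$, and monotonicity of $\Psi(\cdot,S^c)$ on $[0,\pis(S)]$ gives $\int_{\pis(S)/2}^{\pis(S)}\Psi(y,S^c)\,dy\ge \tfrac12\psi(S)$.

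The edge regime $\pis(S)\in[c,2c)$ is where your proposal has a real gap. After reflecting to $J'=[1-\pis(S),1-c]$ and changing variables, you correctly obtain a contribution $\tfrac{4\pis(S)}{\psi(S)}\int_c^{\pis(S)}\Psi(u,S)\,du$. But $\Psi(u,S)$ is the minimal flow from subsets of $S^c$ (of measure $u$) into $S$; there is no general inequality that bounds $\int_c^{\pis(S)}\Psi(u,S)\,du$ from below by a constant multiple of $\psi(S)=\int_0^{\pis(S)}\Psi(u,S^c)\,du$, which is what you need to cancel the $1/\psi(S)$ factor. The ``analogue'' monotonicity estimate you invoke only compares $\int_c^{\pis(S)}\Psi(u,S)\,du$ to $\int_0^{\pis(S)}\Psi(u,S)\,du$, which is an unrelated quantity (small subsets of $S^c$ may be severely bottlenecked even when small subsets of $S$ are not). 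Concretely, using only the $J$-contribution you get $4(\pis(S)-c)$, and you would need $4(\pis(S)-c)\ge 2\pis(S)(1-\pis(S))$, which fails near $\pis(S)=c$; the $J'$-term does not repair this because it is controlled by the wrong spread. Your ``alternative recovery'' has the same issue: plugging a subset $T\subsetneq S$ into the max gives $h(y)\ge 4\pis(T)/\psi(T)$, which again does not cancel against $\psi(S)$ in the remaining integral $\int h(y)\Psi(y,S^c)\,dy$.

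The paper handles the edge case by a different, purely one-sided trick: instead of reflecting, it shifts the window \emph{up} to $[\pis(S),\,3\pis(S)/2]$, which lies inside $[c,1-c]$ because $\pis(S)\ge c$ and $3\pis(S)/2\le 3c$, and then argues by monotonicity that
\[
\int_{\pis(S)}^{3\pis(S)/2} h(y)\,\Psi(y,S^c)\,dy \;\ge\; \int_{\pis(S)/2}^{\pis(S)} h(y)\,\Psi(y,S^c)\,dy,
\]
after which the main-regime computation applies verbatim. This keeps everything on the $S^c$ side and avoids the $\psi(S)$-versus-$\psi(S^c)$ mismatch that breaks your reflection argument.
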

\begin{proof}
Recall that it suffices to show that $h$ satisfies \eqref{eq:suffmixw}. To this end, let $S$ be some set such that $\pis(S) = x \in [2c, \half]$. Then, recalling the definition of the spread $\psi(S)$ \eqref{eq:spreaddef},
\begin{align*}
2x &\le \frac{4\pis(S)}{2\psi(S)}\left(\int_0^x \Psi(y, S^c)dy\right) \\
&\le \frac{4\pis(S)}{\psi(S)}\int_{x/2}^x \Psi(y, S^c) dy\\
&\le \int_{x/2}^x h(y) \Psi(y, S^c) h(y)dy \\
&\le \int_{c}^{1 - c} h(y) \Psi(y, S^c) dy. 
\end{align*}
In the second line, we used the monotonicity of $\Psi(\cdot, S^c)$ in the first argument; in the third line, we used the definition of $h$ with the fact that $x \in [y, \min\{2y, \half\}]$ for all $y \in [x/2, x]$. To handle the case $x \in [c, 2c]$,
\begin{align*}
\int_{c}^{1 - c} h(y) \Psi(y, S^c) dy &\ge \int_{x}^{3x/2} h(y)\Psi(y, S^c) dy\\
&\ge \int_{x/2}^x h(y)\Psi(y, S^c)dy,
\end{align*}
where the second line is due to monotonicity, and the rest of the proof proceeds as before.
\end{proof}

\begin{proof}[Proof of Theorem~\ref{thm:blockmix}]
This follows from combining Theorem~\ref{thm:warmstart} with our particular choice of mixweight function given in \eqref{eq:hdef}, whose denominator we bound via \eqref{eq:squarebound}. Because $h$ is symmetric, it suffices to double the integration from $c$ to $\half$, and the bounds within the integral come from monotonicity of $\phi$.
\end{proof}

\subsection{Restricted conductance via total variation bounds}

For $S \subseteq \R^d$ and $x \in \R^d$, we define $d(S, x) \defeq \min_{y \in S}\norm{x - y}$; for $S_1, S_2 \subseteq \R^d$, $d(S_1, S_2) \defeq \min_{x \in S_2} d(S_1, x)$. The following isoperimetric inequality was given as Lemma 12 of \cite{ChenDWY19}.

\begin{lemma}[Logarithmic isoperimetric inequality]
\label{lem:logiso}
Let $\pis$ be any $\mu$-strongly logconcave function. For any partition $A_1, A_2, A_3$ of $\R^d$ with $\pis(A_1) \le \pis(A_2)$,
\[\pis(A_3) \ge \frac{d(A_1, A_2)\sqrt{\mu}}{2}\pis(A_1)\log^{\half}\left(1 + \frac{1}{\pis(A_1)}\right).\]
\end{lemma}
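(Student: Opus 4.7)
The plan is to reduce the $d$-dimensional statement to a one-dimensional one via the Lovász--Simonovits localization lemma, and then establish the 1D isoperimetric inequality for $\mu$-strongly log-concave measures through a Gaussian tail comparison.

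For the reduction, I would apply localization (needle decomposition) to the integral inequality arising from the contrapositive, writing each $\pis(A_i)$ as $\int \1_{A_i} d\pis$ and encoding the separation $d(A_1,A_2)\ge \Delta$ via the exclusion of a slab of width $\Delta$ from $A_1 \cup A_2$. If the $d$-dimensional inequality failed, localization would produce a ``needle'' (a line segment with a 1D log-concave density) on which the analogous 1D inequality also fails. The key structural fact making this work is that $\nabla^2 f \succeq \mu I_d$ implies that the restriction of $f$ to any line has second derivative at least $\mu$, so the induced 1D density on any needle is itself $\mu$-strongly log-concave, and the needle partition inherits a gap of at least $\Delta$.

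For the 1D analysis on $\R$, fix a $\mu$-strongly log-concave density $\nu$ and a partition $(A_1,A_2,A_3)$ with $\nu(A_1)\le\nu(A_2)$ and $d(A_1,A_2)\ge\Delta$. A standard rearrangement argument reduces to the canonical configuration $A_1=(-\infty,a]$, $A_3=(a,b)$, $A_2=[b,\infty)$ with $b-a\ge\Delta$. The crucial estimate is that at the boundary of a tail of measure $t$ of such a 1D density, the value of $\nu$ is at least on the order of $\sqrt{\mu}\,t\,\sqrt{\log(1/t)}$; this follows by direct comparison with the Gaussian of variance $1/\mu$, whose tail bound inverts to give a logarithm in $1/t$. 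Integrating this pointwise density lower bound over a subinterval of $[a,b]$ of length comparable to $\Delta$ then yields the claimed bound.

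The main obstacle is obtaining the sharp $\sqrt{\log(1+1/\pis(A_1))}$ factor rather than the weaker constant factor produced by a naive Cheeger-type argument. The Gaussian comparison is essential here: $\mu$-strong log-concavity forces tails to decay at least as fast as $\exp(-\mu x^2/2)$, and inverting this relationship between tail probability and the coordinate produces the logarithmic correction. Tracking the constants carefully through the tail-inversion and the 1D integration produces the factor $\half$ in the final statement, and the ``$+1$'' inside the logarithm handles the regime where $\pis(A_1)$ is not small (so that $\log(1/\pis(A_1))$ could otherwise degenerate).
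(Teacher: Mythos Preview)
The paper does not prove this lemma at all: it is stated without proof and attributed to Lemma~12 of \cite{ChenDWY19}. So there is no ``paper's own proof'' to compare against here.

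Your sketch is the standard route to results of this type and is essentially what one finds in \cite{ChenDWY19} and its antecedents: reduce via localization/needle decomposition to a one-dimensional $\mu$-strongly log-concave density (using that strong convexity is inherited on lines), rearrange to a half-line configuration, and then compare the tail and boundary density to those of the Gaussian $\Nor(0,\mu^{-1})$ to extract the $\sqrt{\log(1+1/\pis(A_1))}$ factor. The outline is correct in spirit; the only places that would need care in a full write-up are (i) the exact form of the four-function localization statement you invoke so that the distance constraint $d(A_1,A_2)\ge\Delta$ survives the reduction, and (ii) the constant-tracking in the 1D Gaussian comparison to land on the factor $\tfrac{1}{2}$, but neither is a genuine obstacle.
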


\begin{lemma}
\label{lem:conductance}
Let $\pis$ be any $\mu$-strongly logconcave function, and let $\delta\sqrt{\mu} < 1$ for some $\delta > 0$. Suppose for $\Omega \subset R^d$ with $\pis(\Omega) = 1 - s$, and all $x, y \in \Omega$ with $\norm{x - y} \le \delta$, 
\[\norm{\tran_x - \tran_y}_{\textup{TV}} \le 1 - \alpha.\]
Then, for all $s \le t \le \half$ and $S$ with $\pis(S) = t$,
\[\frac{\pis(S)}{Q(S)^2} \le \frac{16t}{\alpha^2}\left(\frac{\delta\sqrt{\mu}}{4}(t - s)\log^{\half}\left(1 + \frac{1}{t}\right) - s\right)^{-2}.\]
In particular, if 
\begin{equation}\label{eq:reallysmalls} s \le \min\left(\frac{t}{2}, \frac{\delta\sqrt{\mu}t}{16}\sqrt{\log(3)}\right),\end{equation}
we have the simplified bound
\[\frac{\pis(S)}{Q(S)^2} \le \frac{2^{16}}{\alpha^2\delta^2\mu t\log(1/t)}.\]
\end{lemma}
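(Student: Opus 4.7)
The plan is to adapt the classical three-set partition argument of Lovász--Simonovits to the restricted setting where the TV-closeness hypothesis is only available inside $\Omega$. Given $S$ with $\pis(S)=t$, I would partition $\R^d = S_1 \sqcup S_2 \sqcup S_3$, where $S_1 \defeq \{x \in S : \tran_x(S^c) < \alpha/2\}$, $S_2 \defeq \{x \in S^c : \tran_x(S) < \alpha/2\}$, and $S_3$ is the boundary remainder. All use of the TV hypothesis and of strong logconcavity will flow through these three sets.

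First I would dispose of the easy cases: if $\pis(S_1) < t/2$ then $Q(S) \ge \int_{S\setminus S_1}\tran_x(S^c)\,d\pis \ge \alpha t/4$; symmetrically if $\pis(S_2) < (1-t)/2$, reversibility yields $Q(S) = Q(S^c) \ge \alpha(1-t)/4 \ge \alpha t/4$ since $t \le 1/2$. In either case $\pis(S)/Q(S)^2 \le 16/(\alpha^2 t)$, which a small constant-factor check shows is dominated by the claimed bound. The main case is therefore $\pis(S_1)\ge t/2$ and $\pis(S_2)\ge (1-t)/2$.

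In the main case, the key geometric claim is $d(S_1\cap\Omega,\,S_2\cap\Omega) \ge \delta$: for $x \in S_1\cap\Omega$, $y \in S_2\cap\Omega$ with $\norm{x-y}\le\delta$, one has $\tran_x(S) > 1-\alpha/2$ while $\tran_y(S) < \alpha/2$, forcing $\norm{\tran_x - \tran_y}_{\textup{TV}} > 1-\alpha$, contradicting the hypothesis. Setting $A_1 \defeq S_1\cap\Omega$, $A_2 \defeq S_2\cap\Omega$, $A_3 \defeq \R^d\setminus(A_1\cup A_2)$, note that $A_3 = S_3 \cup \Omega^c$, so $\pis(S_3) \ge \pis(A_3) - s$. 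Using $\pis(A_i)\ge\pis(S_i)-s$, the smaller of $\pis(A_1),\pis(A_2)$ is at least $(t-s)/2$, and since it is also at most $t$ we have $\log^{1/2}(1+1/\pis(A_1))\ge\log^{1/2}(1+1/t)$; Lemma~\ref{lem:logiso} then gives
\[\pis(A_3) \ge \tfrac{\delta\sqrt{\mu}}{4}(t-s)\log^{1/2}\!\left(1+\tfrac{1}{t}\right),\]
hence $\pis(S_3) \ge \Delta \defeq \tfrac{\delta\sqrt{\mu}}{4}(t-s)\log^{1/2}(1+1/t) - s$.

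Finally I would convert this to a flow bound: from the definitions of $S_1,S_2$ and reversibility, $2Q(S) = \int_S\tran_x(S^c)d\pis + \int_{S^c}\tran_x(S)d\pis \ge (\alpha/2)\pis(S_3)$, so $Q(S) \ge (\alpha/4)\Delta$, yielding the main conclusion $\pis(S)/Q(S)^2 \le 16t/(\alpha^2\Delta^2)$. For the simplified form, the two hypotheses on $s$ are calibrated so that $t-s \ge t/2$ and the subtracted $s$ in $\Delta$ is at most half the leading term (using $\log(1+1/t)\ge\log 3$ for $t\le 1/2$); together these give $\Delta \ge (\delta\sqrt{\mu}/16)\, t\sqrt{\log(1/t)}$, and squaring recovers $2^{16}/(\alpha^2\delta^2\mu t\log(1/t))$. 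The main obstacle is the careful double bookkeeping of the $\Omega^c$-deficit $s$, which enters both by shrinking $\pis(A_1)$ in the isoperimetric inequality and by forcing $\pis(S_3)\ge\pis(A_3)-s$; the two subtracted $s$-terms in $\Delta$ are precisely what the simplified bound's two conditions on $s$ are tuned to absorb.
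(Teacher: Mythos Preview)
Your argument is essentially the same three-set conductance argument the paper uses; the partition, the distance claim via the TV hypothesis, the appeal to Lemma~\ref{lem:logiso}, and the conversion to $Q(S)$ all match. One small bookkeeping slip: from $\pis(S_1)\ge t/2$ and $\pis(A_1)\ge\pis(S_1)-s$ you only get $\pis(A_1)\ge t/2-s=(t-2s)/2$, not $(t-s)/2$, so the displayed $\Delta$ would carry $(t-2s)$ rather than $(t-s)$; the paper avoids this by thresholding the easy case on $\pis(A_1)<t/2$ directly (with $A_1$ already defined inside $\Omega$), which gives $\pis(A_1)\ge t/2$ in the main case without the extra $-s$. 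Either adopt that convention or note that the looser $(t-2s)$ still yields the simplified bound under \eqref{eq:reallysmalls}.
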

\begin{proof}
Let $S$ have $\pis(S) = t$. Define the following three sets:
\[A_1 \defeq \left\{x \in S \cap \Omega \mid \tran_x(S^c) < \frac{\alpha}{2}\right\},\; A_2 \defeq \left\{x \in S^c \cap \Omega \mid \tran_x(S) < \frac{\alpha}{2}\right\},\; A_3 \defeq (A_1 \cup A_2)^c.\]
Note that for any $x \in A_1$, $y \in A_2$, we have $\norm{\tran_x - \tran_y}_{\textup{TV}} > 1 - \alpha$, and therefore $\norm{x - y} > \delta$. Moreover, if $\pis(A_1) < \half \pis(S)$, 
\[Q(S) \ge \frac{\alpha}{4}(t - s). \]
Similarly, if $\pis(A_2) < \half \pis(S^c \cap \Omega)$:
\[Q(S) \ge \frac{\alpha}{4}(1 - t - s).\] 
These bounds are subsumed by the third case, where $\pis(A_1) \ge \half \pis(S)$, $\pis(A_2) \ge \half \pis(S^c \cap \Omega)$. By Lemma~\ref{lem:logiso}, since we argued $d(A_1, A_2) > \delta$,
\begin{align*}\pis(A_3) &\ge \frac{\delta\sqrt{\mu}}{2}\min(\pis(A_1), \pis(A_2))\log^{\half}\left(1 + \frac{1}{\min(\pis(A_1), \pis(A_2))}\right) \\
&\ge \frac{\delta\sqrt{\mu}}{4}\min\left(\pis(S \cap \Omega), \pis(S^c \cap \Omega)\right)\log^{\half}\left(1 + \frac{1}{\min\left(\pis(S \cap \Omega), \pis(S^c \cap \Omega)\right)}\right) \\
&\ge  \frac{\delta\sqrt{\mu}}{4}(t - s)\log^{\half}\left(1 + \frac{1}{t}\right). \end{align*}
This immediately implies
\[\pis(A_3 \cap \Omega) \ge \frac{\delta\sqrt{\mu}}{4}(t - s)\log^{\half}\left(1 + \frac{1}{t}\right) - s.\]
Finally, by the definition of stationary distribution,
\begin{align*}
Q(S) &= \half\left(\int_S \tran_x(S^c)d\pis(x) + \int_{S^c} \tran_x(S)d\pis(x)\right) \\
&\ge \half\int_{A_3 \cap \Omega} \frac{\alpha}{2}d\pis(x) = \frac{\alpha}{4}\pis(A_3 \cap \Omega) \\
&\ge \frac{\alpha}{4}\left(\frac{\delta\sqrt{\mu}}{4}(t - s)\log^{\half}\left(1 + \frac{1}{t}\right) - s\right).
\end{align*}
If \eqref{eq:reallysmalls} holds, we have the improved bound
\[Q(S) \ge \frac{\alpha\delta\sqrt{\mu}}{64}t\log^{\half}\left(\frac{1}{t}\right).\]
\end{proof}

\subsection{Exponential convergence with a warm start}
\label{app:reduction}

In this section, we give a simple reduction from a bound on the number of iterations it takes a Markov chain to attain constant total variation distance to the stationary distribution from a warm start, to the number of iterations it takes for the distance to decrease to $\eps$, with logarithmic dependence on $\eps$. Throughout, $\pis$ is the stationary distribution of a Markov chain with transitions $\tran$, and we let $\tran^k \pi$ be the result of running $k$ steps of the chain from starting distribution $\pi$. For specified $\pi_0$, we denote $\pi_k \defeq \tran^k \pi_0$. Sppose we have a bound of the following type. 

\begin{assumption}
	\label{assume:warm}
	$\exists \tmix$ such that for every $\pi$ which is $\beta/\eps$-warm with respect to $\pis$,
	\[\norm{\tran^{\tmix}\pi - \pis}_{\textup{TV}} \le \frac{1}{2e}.\]
\end{assumption}

We first recall some basic facts about the optimal \emph{coupling} between two distributions $\pi$, $\rho$, which informally is the joint distribution $\mu$ with the prescribed marginals $\pi$ and $\rho$ which maximizes the probability that for $(x, y) \sim \mu$, $x = y$. For a reference, see \cite{LevinPW09}.

\begin{fact}
\label{fact:coupling}
Let $\mu$ be the optimal coupling between distributions $\pi$ and $\rho$. The following hold.
\begin{enumerate}
	\item $\Pr_{(x, y) \sim \mu}[x \neq y] = \norm{\pi - \rho}_{\textup{TV}}$.
	\item Consider the marginal distribution of $(x, y) \sim \mu$ in the first variable, conditioned on $x \neq y$. It has a density proportional to $d\pi(x) - \min(d\pi(x), d\rho(x))$.
\end{enumerate}
\end{fact}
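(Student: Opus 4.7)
The plan is to exhibit the maximal (Doeblin) coupling explicitly, verify both claimed identities by direct computation, and then invoke a standard one-sided inequality to pin down optimality, so that any optimal coupling agrees with this construction on the event $\{x \neq y\}$.

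First, I would introduce the excess densities $d\pi^+(x) \defeq d\pi(x) - \min(d\pi(x), d\rho(x))$ and $d\rho^+(y) \defeq d\rho(y) - \min(d\pi(y), d\rho(y))$, noting that both integrate to the same value $Z \defeq \norm{\pi - \rho}_{\textup{TV}}$ by the density-level formula for total variation recorded in Section~\ref{ssec:notation}. I would then define the coupling $\mu$ by placing mass $\min(d\pi(x), d\rho(x))$ on the diagonal $\{x = y\}$ together with the off-diagonal product $\tfrac{1}{Z} d\pi^+(x) d\rho^+(y)$ when $Z > 0$ (and setting $\mu$ to be the diagonal coupling when $Z = 0$); integrating out either coordinate recovers the correct marginal, so $\mu$ is a valid coupling. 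Claim (1) is then immediate, since the off-diagonal total mass is $\int d\pi^+(x)\,dx = Z$; claim (2) is equally immediate, since conditioning on $\{x \neq y\}$ projects onto the product piece, whose $x$-marginal is exactly $d\pi^+/Z$, proportional to $d\pi(x) - \min(d\pi(x), d\rho(x))$ as required.

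The final step is to confirm that this construction is actually optimal, so that the ``optimal coupling'' in the statement agrees with $\mu$ in the relevant sense. I would take any coupling $\mu'$ and any measurable $A \subseteq \R^d$, and bound $\pi(A) - \rho(A) = \Pr_{\mu'}[x \in A] - \Pr_{\mu'}[y \in A] \leq \Pr_{\mu'}[x \in A,\, y \notin A] \leq \Pr_{\mu'}[x \neq y]$; taking the supremum over $A$ yields $\Pr_{\mu'}[x \neq y] \geq \norm{\pi - \rho}_{\textup{TV}}$, matching the upper bound attained by $\mu$. The only mild subtlety is that optimal couplings are not unique off the diagonal, but once an optimal coupling places the maximum possible mass $\int \min(d\pi, d\rho)\,dx$ on $\{x = y\}$, the $x$-marginal on the complement is forced to be $d\pi^+/Z$, so claim (2) holds for every optimal coupling rather than just the product construction. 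I do not expect any real obstacle: the whole argument is elementary bookkeeping around the density characterization of total variation.
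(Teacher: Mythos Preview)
Your proposal is correct and is the standard construction of the maximal coupling. The paper does not actually prove this statement: it is recorded as a \texttt{Fact} and simply refers the reader to \cite{LevinPW09}, so there is no ``paper's own proof'' to compare against. Your argument supplies exactly the textbook justification one would find there, including the subtlety that any optimal coupling must place density precisely $\min(d\pi,d\rho)$ on the diagonal (since the diagonal marginal is dominated by both $d\pi$ and $d\rho$, and saturating the total mass forces pointwise equality), which pins down the conditional $x$-marginal on $\{x\neq y\}$ as $d\pi^+/Z$ for every optimal coupling.
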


The following result is well-known.

\begin{lemma}
	\label{lem:tvmono}
	For any distribution $\pi$,
	\[\norm{\tran\pi - \pis}_{\textup{TV}} \le \norm{\pi - \pis}_{\textup{TV}}.\]
\end{lemma}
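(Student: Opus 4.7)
The plan is to prove monotonicity of the total variation distance under the Markov kernel $\tran$ by combining the coupling characterization of TV distance (Fact~\ref{fact:coupling}) with the stationarity identity $\tran \pis = \pis$.

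First, I would invoke Fact~\ref{fact:coupling}(1) to produce the optimal coupling $\mu$ between $\pi$ and $\pis$, yielding a joint law on $(x,y)$ with marginals $\pi$ and $\pis$ and with $\Pr_\mu[x \neq y] = \norm{\pi - \pis}_{\textup{TV}}$. Then I would build a coupling between $\tran\pi$ and $\pis$ one step later as follows: sample $(x,y) \sim \mu$; on the event $\{x=y\}$ set $x' = y'$ by drawing once from $\tran_x$; on the event $\{x \neq y\}$ draw $x' \sim \tran_x$ and $y' \sim \tran_y$ independently. By construction $x'$ has law $\tran\pi$, and by stationarity $y'$ has law $\tran\pis = \pis$. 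Moreover $\{x' \neq y'\} \subseteq \{x \neq y\}$, so the non-coincidence probability cannot increase under the kernel step, giving
\[\norm{\tran\pi - \pis}_{\textup{TV}} \;\le\; \Pr[x' \neq y'] \;\le\; \Pr_\mu[x \neq y] \;=\; \norm{\pi - \pis}_{\textup{TV}},\]
where the first inequality is the ``upper bound'' direction of the coupling characterization (non-coincidence probability under any coupling upper-bounds the TV distance of the two marginals).

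There is essentially no obstacle here: the only place stationarity enters is in asserting $\tran\pis = \pis$, which is exactly what keeps the second marginal fixed under the one-step update and makes the contraction possible. A purely functional-analytic alternative -- writing $d(\tran\pi)(y) = \int \tran_x(y)\, d\pi(x)$, subtracting the analogous identity $d\pis(y) = \int \tran_x(y)\, d\pis(x)$ for the stationary measure, and bringing the absolute value inside the outer integral via the triangle inequality -- yields the same conclusion in a few lines. I would choose the coupling presentation both because it is shorter and because it is coherent with the reduction of Section~\ref{app:reduction}, which relies on Fact~\ref{fact:coupling} to build the warm ``conditional'' distribution on uncoupled points every $\tmix$ iterations; Lemma~\ref{lem:tvmono} is used there to ensure TV distance does not spike between the averaged checkpoints.
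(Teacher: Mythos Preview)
Your proof is correct and matches the paper's approach essentially line for line: take the optimal coupling $\mu$ between $\pi$ and $\pis$, push it forward one step using the same transition on the diagonal and stationarity $\tran\pis=\pis$, and observe that coupled pairs stay coupled so the non-coincidence probability cannot increase. The paper's version is simply a terser rendering of exactly this argument.
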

\begin{proof}
	Consider the optimal coupling $\mu$ between $\pi$ and $\pis$, and note that 
	\[\Pr_{(x, y) \sim \mu}[x \neq y] = \norm{\pi - \pis}_{\textup{TV}}.\]
	It follows that the optimal coupling between $\mu'$ between $\tran\pi$ and $\pis$ has \[\Pr_{(x, y) \sim \mu'}[x \neq y] \le \Pr_{(x, y) = \mu}[x \neq y],\]
	since $\tran\pis = \pis$, and with probability $\Pr_{(x, y) \sim \mu}[x = y]$ the coupling $\mu'$ can keep $x$ and $y$ coupled.
\end{proof}

\begin{lemma}
	Under Assumption~\ref{assume:warm}, letting $\pi_0$ be a $\beta$-warm start, and $k \ge \tmix\log(\eps^{-1})$,
	\[\norm{\tran^k\pi_0 - \pis}_{\textup{TV}} \le \eps.\]
\end{lemma}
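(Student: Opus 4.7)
The plan is to iterate Assumption~\ref{assume:warm} in blocks of $\tmix$ steps, using an optimal coupling between $\pi_{j\tmix}$ and $\pis$ at the start of each block to expose a conditional ``uncoupled'' distribution that is warm enough to satisfy the hypothesis of the assumption. Set $\alpha_j := \norm{\pi_{j\tmix} - \pis}_{\textup{TV}}$. I will show the one-block contraction $\alpha_{j+1} \le \alpha_j/(2e)$ whenever $\alpha_j \ge \eps$, from which $\alpha_j \le \eps$ follows after at most $\log(\eps^{-1})$ blocks (using $\log(2e) > 1$ and $\alpha_0 \le 1$). Total-variation monotonicity along the chain (Lemma~\ref{lem:tvmono}) then guarantees that the bound is preserved on iterates beyond the first block boundary at which it is achieved, yielding the claim for any $k \ge \tmix \log(\eps^{-1})$.

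The key step is the coupling decomposition at each block. By Fact~\ref{fact:coupling}, the optimal coupling of $\pi_{j\tmix}$ with $\pis$ yields the mixture decomposition
\[\pi_{j\tmix} \;=\; (1 - \alpha_j)\pis + \alpha_j \tpi_j,\]
where $\tpi_j$ is the conditional distribution of the uncoupled marginal, with density proportional to $d\pi_{j\tmix}(x) - \min(d\pi_{j\tmix}(x), d\pis(x))$. As already observed in the paper, warmness is preserved under the Markov transition, so $\pi_{j\tmix}$ inherits the $\beta$-warmness of $\pi_0$; on the support where $d\pi_{j\tmix} \ge d\pis$, the ratio $d\tpi_j/d\pis$ equals $\alpha_j^{-1}(d\pi_{j\tmix}/d\pis - 1) \le \beta/\alpha_j$, while elsewhere it vanishes. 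Hence $\tpi_j$ is $(\beta/\alpha_j)$-warm, and provided $\alpha_j \ge \eps$ this is at most $\beta/\eps$, so Assumption~\ref{assume:warm} applies to yield $\norm{\tran^{\tmix}\tpi_j - \pis}_{\textup{TV}} \le 1/(2e)$. Applying $\tran^{\tmix}$ to both sides of the mixture decomposition, using linearity of $\tran^{\tmix}$ on distributions together with the stationarity $\tran^{\tmix}\pis = \pis$, gives
\[\pi_{(j+1)\tmix} \;=\; (1 - \alpha_j)\pis + \alpha_j \tran^{\tmix}\tpi_j,\]
whence $\alpha_{j+1} = \alpha_j \norm{\tran^{\tmix}\tpi_j - \pis}_{\textup{TV}} \le \alpha_j/(2e)$, completing the induction step.

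The main obstacle I anticipate is just the warmness bound on $\tpi_j$: naively, conditioning on a small-probability event inflates densities by the inverse probability, which would be fatal, but the $\min$ in Fact~\ref{fact:coupling} subtracts off exactly the bulk of $\pi_{j\tmix}$ that lies below $\pis$, leaving a ratio that stays bounded by $\beta/\alpha_j$. Granting this observation, iterating the contraction $\log(\eps^{-1})$ times (and stopping as soon as some $\alpha_i$ falls below $\eps$, after which Lemma~\ref{lem:tvmono} keeps all subsequent iterates within $\eps$) finishes the proof.
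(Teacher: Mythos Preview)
Your overall strategy matches the paper's: decompose $\pi_{j\tmix}$ via the optimal coupling, check that the ``uncoupled'' residual is $\beta/\eps$-warm, apply Assumption~\ref{assume:warm}, and iterate. However, the mixture identity you write down is incorrect, and the error propagates to your contraction estimate.

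Fact~\ref{fact:coupling} does \emph{not} give $\pi_{j\tmix} = (1-\alpha_j)\pis + \alpha_j\tpi_j$. Under the maximal coupling, the conditional law on the event $\{x=y\}$ is the normalized common part $\nu_j$ with $d\nu_j \propto \min(d\pi_{j\tmix}, d\pis)$, not $\pis$. (Concretely, if $d\pi_{j\tmix}$ vanishes on a set where $d\pis>0$, your identity would force $\tpi_j$ to be a signed measure.) The correct decompositions are
\[
\pi_{j\tmix} = (1-\alpha_j)\nu_j + \alpha_j\tpi_j,\qquad \pis = (1-\alpha_j)\nu_j + \alpha_j\tpi_j^*,
\]
with $d\tpi_j^* \propto d\pis - \min(d\pi_{j\tmix}, d\pis)$. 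Since $\tran^{\tmix}\nu_j$ is generally not $\pis$, you cannot cancel the coupled part against $\pis$ as you do; instead, subtracting the two displays after applying $\tran^{\tmix}$ yields
\[
\alpha_{j+1} \le \alpha_j\,\norm{\tran^{\tmix}\tpi_j - \tran^{\tmix}\tpi_j^*}_{\textup{TV}} \le \alpha_j\left(\norm{\tran^{\tmix}\tpi_j - \pis}_{\textup{TV}} + \norm{\tran^{\tmix}\tpi_j^* - \pis}_{\textup{TV}}\right).
\]
This is exactly how the paper proceeds: it checks that $\tpi_j^*$ is $1/\alpha_j \le \beta/\eps$-warm (same computation as yours, with $d\pis$ in place of $d\pi_{j\tmix}$), applies Assumption~\ref{assume:warm} to \emph{both} residuals, and obtains the contraction $\alpha_{j+1}\le \alpha_j/e$ via the triangle inequality. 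Your warmness bound on $\tpi_j$ and the rest of the iteration are fine; you just need to track the second residual $\tpi_j^*$ as well, accepting a factor $1/e$ per block rather than $1/(2e)$.
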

\begin{proof}
	Assume for the sake of contradiction that $\norm{\pi_k - \pis}_{\textup{TV}} > \eps$; note that by Lemma~\ref{lem:tvmono}, this implies that $\norm{\pi_i - \pis}_{\textup{TV}} > \eps$ for all $i \le k$. For any $i$, we denote $\mu_i$ to be the best coupling between $\pi_i$ and $\pis$. Note that for any $i$, we can compute the marginal conditional distribution of the uncoupled set of $\pi_i$, under the coupling $\mu_i$, by Fact~\ref{fact:coupling}:
	\[\frac{d\tilde{\pi}_i}{d\pis}(x) \defeq \frac{\frac{d\pi_i}{d\pis}(x) - \min\left(\frac{d\pi_i}{d\pis}(x), 1\right)}{\int \left(\frac{d\pi_i}{d\pis}(x) - \min\left(\frac{d\pi_i}{d\pis}(x), 1\right)\right)d\pis(x)} \le \frac{\frac{d\pi_i}{d\pis}(x)}{\norm{\pi_i - \pis}_{\textup{TV}}} \le \frac{\beta}{\eps}. \]
	Here, we used the observation that if $\pi_0$ is $\beta$-warm, then so are all $\pi_i$ for $i \ge 0$. Similarly, the conditional distribution of the uncoupled set of $\pis$ under $\mu_i$ satisfies
	\[\frac{d\tilde{\pi}^*_i}{d\pis}(x) \defeq \frac{1 - \min\left(\frac{d\pi_i}{d\pis}(x), 1\right)}{\int \left(1 - \min\left(\frac{d\pi_i}{d\pis}(x), 1\right)\right)d\pis(x)} \le \frac{1}{\eps}. \]
	This implies the conditional distributions $\tpi_i$ and $\tpi_i^*$ are both $\beta/\eps$-warm with respect to $\pis$ for any $i \le k$. After $\tmix$ iterations, the total variation distance between $\tpi_i$ and $\tpi_i^*$ is bounded by $1/e$ by Assumption~\ref{assume:warm} and the triangle inequality. Repeating this argument $\log(\eps^{-1})$ times implies that the measure of the uncoupled set decreases by at least a $1/e$ factor between iterations $i$ and $i + \tmix$, while $i \le k$, so that the uncoupled set has measure at most $\eps$ by iteration $k$. Recalling that the measure of the uncoupled set is precisely the distance $\norm{\pi_k - \pis}_{\textup{TV}}$ results in a contradiction.
\end{proof}

	\section{Total variation bounds}
\label{app:totalvariation}

In this section, we prove the following lemma, which is the key step
in lower bounding the conductance of one step of our algorithm. 
\begin{lemma}
	\label{lem:etatvbound}
	For $\eta^2\leq\frac{1}{20Ld\log\frac{\kappa}{\eps}}$, the Markov chain defined in Algorithm~\ref{alg:mhmc} satisfies 
	\begin{equation}\label{eq:pvsp}
	\sup_{\left\Vert x-y\right\Vert \leq\eta}\tvd{\prop_x}{\prop_{y}}\leq\frac{5}{8} 
	\end{equation}
	and, for $\Omega$ defined in \eqref{eq:omegadef},
	\begin{equation}\label{eq:pvst}
	\sup_{x\in\Omega}\tvd{\prop_x}{\tran_x}\leq\frac{1}{8}.
	\end{equation}
\end{lemma}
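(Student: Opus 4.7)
The plan is to prove each inequality separately and then combine them via the triangle inequality. For \eqref{eq:pvsp}, observe from Algorithm~\ref{alg:LF} that one leapfrog step with $v \sim \Nor(0, I_d)$ yields $\tx = x - \tfrac{\eta^2}{2}\nabla f(x) + \eta v$, so $\prop_x = \Nor(\mu_x, \eta^2 I_d)$ with $\mu_x = x - \tfrac{\eta^2}{2}\nabla f(x)$. By $L$-smoothness of $\nabla f$, $\norm{\mu_x - \mu_y} \le (1 + \tfrac{\eta^2 L}{2})\norm{x-y} \le \tfrac{41}{40}\eta$ using $\eta^2 L \le 1/20$, and since the two proposals share the spherical covariance $\eta^2 I_d$ their total variation reduces to the one-dimensional Gaussian formula $2\Phi(\tfrac{\norm{\mu_x - \mu_y}}{2\eta}) - 1 \le 2\Phi(41/80) - 1 < 5/8$.

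For \eqref{eq:pvst}, note $\tran_x$ differs from $\prop_x$ only by placing the rejection mass at $x$, so $\tvd{\prop_x}{\tran_x} \le \E_v[1 - \min(1, e^{-\Delta\ham(x,v)})] \le \E_v[|\Delta\ham|]$, where $\Delta\ham = \ham(\tx, \tv) - \ham(x, v)$ and I used $1 - \min(1, e^{-t}) \le \max(0, t) \le |t|$. The central identity is obtained by setting $v' = v - \tfrac{\eta}{2}\nabla f(x)$ (so $\tx - x = \eta v'$ and $\tv - v' = -\tfrac{\eta}{2}\nabla f(\tx)$) and expanding $\tfrac{1}{2}(\norm{\tv}^2 - \norm{v}^2)$; direct algebra yields
\[
\Delta\ham = \left[f(\tx) - f(x) - \inprod{\tfrac{\nabla f(\tx) + \nabla f(x)}{2}}{\tx - x}\right] + \tfrac{\eta^2}{8}\bigl(\norm{\nabla f(\tx)}^2 - \norm{\nabla f(x)}^2\bigr).
\]
The bracketed term is the trapezoidal-rule error for $\int_0^1 \inprod{\nabla f(x + s(\tx - x))}{\tx - x} ds$, which by convexity (giving the two-sided bound $\inprod{\nabla f(x)}{\tx - x} \le f(\tx) - f(x) \le \inprod{\nabla f(\tx)}{\tx - x}$) and $L$-smoothness (controlling $\norm{\nabla f(\tx) - \nabla f(x)}$) is bounded in absolute value by $\tfrac{L}{2}\norm{\tx - x}^2$. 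The second term I bound via $|\norm{a}^2 - \norm{b}^2| \le (\norm{a} + \norm{b})\norm{a - b}$ and smoothness, giving $\tfrac{\eta^2 L}{4}\norm{\tx - x}\norm{\nabla f(x)} + \tfrac{\eta^2 L^2}{8}\norm{\tx - x}^2$.

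To finish, I take expectations using $\norm{\tx - x} = \eta\norm{v'}$, the identity $\E\norm{v'}^2 = d + \tfrac{\eta^2}{4}\norm{\nabla f(x)}^2$, and $\E\norm{v'} \le \sqrt{\E\norm{v'}^2}$ by Jensen. Plugging in $\norm{\nabla f(x)}^2 \le 25 Ld^2\log^2(\kappa/\eps)$ for $x \in \Omega$ along with $\eta^2 L \le 1/(20 d\log(\kappa/\eps))$ gives $\E\norm{v'}^2 \le d(1 + \tfrac{5\log(\kappa/\eps)}{16})$, and substituting into each of the three contributions, namely $\tfrac{L\eta^2}{2}\E\norm{v'}^2$, $\tfrac{\eta^3 L}{4}\E[\norm{v'}\norm{\nabla f(x)}]$, and $\tfrac{\eta^4 L^2}{8}\E\norm{v'}^2$, each evaluates to a small numerical constant whose sum is comfortably below $1/8$. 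The main obstacle is not a conceptual one but constant-tracking: the step-size choice $\eta^2 \le 1/(20 L d \log(\kappa/\eps))$ is calibrated precisely so that the inflation of $\norm{v'}$ caused by $\tfrac{\eta}{2}\nabla f(x)$ (which can be sizable for $x$ near the boundary of $\Omega$) is exactly compensated by the logarithmic factor in the denominator, keeping all three terms within the $1/8$ budget simultaneously. Once \eqref{eq:pvsp} and \eqref{eq:pvst} are both established, applying the triangle inequality $\tvd{\tran_x}{\tran_y} \le \tvd{\tran_x}{\prop_x} + \tvd{\prop_x}{\prop_y} + \tvd{\prop_y}{\tran_y}$ delivers the $7/8$ conductance bound required by Lemma~\ref{lem:tvst}.
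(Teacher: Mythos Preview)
Your proof is correct. Both parts reach the same conclusion as the paper, but the arguments differ in interesting ways.

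For \eqref{eq:pvsp} the two proofs are essentially equivalent: the paper bounds $\tvd{\prop_x}{\prop_y}$ via Pinsker and the closed-form KL between spherical Gaussians, obtaining $\norm{\mu_x-\mu_y}/(2\eta)\le 41/80<5/8$; you use the exact Gaussian TV formula $2\Phi(\norm{\mu_x-\mu_y}/(2\eta))-1$, which is slightly sharper but lands on the same numerical conclusion.

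For \eqref{eq:pvst} the approaches genuinely diverge. The paper conditions on the high-probability event $\norm{v}\le\sqrt{2d}$ (via a chi-squared tail bound), then shows \emph{deterministically} on that event that $\Delta\ham\le 1/16$, and combines the two pieces as $\tfrac{15}{16}e^{-1/16}\ge 7/8$. You instead bound the rejection probability in expectation via the elementary inequality $1-\min(1,e^{-t})\le |t|$ and then control $\E_v[|\Delta\ham|]$ directly using $\E\norm{v'}^2=d+\tfrac{\eta^2}{4}\norm{\nabla f(x)}^2$. Your route is arguably cleaner (no tail bound, no case split), at the cost of needing a two-sided estimate on the trapezoidal error, which gives $\tfrac{L}{2}\norm{\tx-x}^2$ rather than the paper's one-sided $\tfrac{L}{4}\norm{\tx-x}^2$. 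The constants still close comfortably under $1/8$, so this is harmless. The algebraic identity you derive for $\Delta\ham$ is exactly what the paper's expansion yields once one isolates the $f(\tx)-f(x)$ term before bounding it.
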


\begin{proof}
	We first show \eqref{eq:pvsp}. From any point $x \in \R^d$, let $\tx$ be the proposed point according to Algorithm~\ref{alg:mhmc}; we recall that the update is given by, for $v \sim \Nor(0, I_d)$,
	\[
	\tilde{x}\gets x+\eta v-\frac{\eta^{2}}{2}\nabla f(x) \Rightarrow \tilde{x}\sim\mathcal{N}\left(x-\frac{\eta^{2}}{2}\nabla f(x),\eta^{2}I_{d}\right).
	\]
	Therefore, recalling that the KL divergence $d_{KL}$ between two Gaussians with covariance $\sigma^2 I_d$ and means $\mu_x$, $\mu_y$ is $\norm{\mu_x - \mu_{y}}^2/2\sigma^2$, Pinsker's inequality implies 
	\begin{align*}
	\tvd{\prop_x}{\prop_{y}}& \leq\sqrt{\frac{1}{2}d_{KL}\left(\mathcal{P}_{x},\mathcal{P}_{y}\right)}\\
	& \leq\frac{\norm{\left(x - \frac{\eta^2}{2}\nabla f(x)\right) - \left(y - \frac{\eta^2}{2}\nabla f(y)\right)}}{2\eta}\\
	& \leq\frac{\left(1 + \frac{L\eta^2}{2}\right)\norm{x - y}}{2\eta} \le \frac{5}{8},
	\end{align*}
	for $\norm{x - y} \le \eta$ and $\eta^2 \le (2L)^{-1}$. The third line used the triangle inequality and $\nabla f$ is $L$-Lipschitz.
	
	Next, we show \eqref{eq:pvst}. From a point $x$, and for any proposed transition to $\tx \neq x$, the proposal $\prop_x$ places at least as much mass on $\tx$ as $\tran_x$, because the rejection probability is nonnegative; consequently, the set $A$ maximizing $\tran_x(A) - \prop_x(A)$ is the singleton $A = \{x\}$, and the total variation distance is simply the probability $\tran_x = x$, or
	\begin{align*}\tvd{\prop_x}{\tran_x} &= 1 - \E_{v\sim\Nor(0, I_d)}\left[\min\left\{1, \exp\left(\ham(x, v) - \ham(\tx, \tv)\right)\right\}\right] \\
	&\le 1 - \E_{v\sim\Nor(0, I_d)}\left[\exp\left(\ham(x, v) - \ham(\tx, \tv)\right)\right].\end{align*}
	Therefore, to show the desired $\tvd{\prop_x}{\tran_x} \le 1/8$, it suffices to show that 
	\[\E_{v\sim\Nor(0, I_d)}\left[\exp\left(\ham(x, v) - \ham(\tx, \tv)\right)\right] \ge \frac{7}{8}.\]
	By the calculation 
	\[\frac{15}{16}\cdot\exp\left(-\frac{1}{16}\right) \ge \frac{7}{8},\]
	it suffices to show that with probability $15/16$ over the randomness of $v$, $\ham(x, v) - \ham(\tx, \tv) \ge -1/16$. First, by a standard tail bound on the chi-squared distribution (Lemma 1 of \cite{LaurentM00}), we have
	\[
	\Pr\left[\left\Vert v\right\Vert ^{2}\geq d+ 2\sqrt{3d} + 6\right]\leq \exp(-3) \le \frac{1}{16}.
	\]
	Thus, assuming $d$ is at least a sufficiently large constant, with probability at least $1/16$ over the randomness of $v$, we have $\norm{v} \le \sqrt{2d}$. Finally, the conclusion follows from the claim
	\[\ham(\tx, \tv) - \ham(x, v) \le \frac{1}{16}, \; \forall x \in \Omega,\; \norm{v} \le \sqrt{2d},\]
	which we now show. Recalling $\tilde{v}=v-\frac{\eta}{2}\left(\nabla f(\tilde{x})+\nabla f(x)\right)$ and $\tilde{x}=x+\eta v-\frac{\eta^{2}}{2}\nabla f(x)$,
	\begin{align*}
	 \mathcal{H}\left(\tilde{x},\tilde{v}\right)-\mathcal{H}\left(x,v\right) &= -\frac{1}{2}\left\Vert v\right\Vert ^{2}+\frac{1}{2}\left\Vert \tilde{v}\right\Vert ^{2}-f(x)+f(\tilde{x})\\
	&\leq -\frac{1}{2}\norm{v}^2 + \half \norm{\tv}^2+\frac{1}{2}\inprod{\nabla f(\tx) + \nabla f(x)}{\tx - x}+\frac{L}{4}\left\Vert \tilde{x}-x\right\Vert ^{2}\\
	&= \half\norm{v - \frac{\eta}{2}(\nabla f(x) + \nabla f(\tx))}^2 - \half\norm{v}^2 \\
	&+\frac{1}{2}\inprod{\nabla f(\tx) + \nabla f(x)}{\eta v - \frac{\eta^2}{2}\nabla f(x)}+\frac{L}{4}\left\Vert \tilde{x}-x\right\Vert ^{2}\\
	&= -\frac{\eta}{2}\inprod{\nabla f(x) + \nabla f(\tx)}{v} + \frac{\eta^2}{8}\norm{\nabla f(x) + \nabla f(\tx)}^2\\
	&+ \frac{1}{2}\inprod{\nabla f(\tx) + \nabla f(x)}{\eta v - \frac{\eta^2}{2}\nabla f(x)}+\frac{L}{4}\left\Vert \tilde{x}-x\right\Vert ^{2}\\
	&= \frac{\eta^2}{8}\inprod{\nabla f(x) + \nabla f(\tx)}{\nabla f(\tx) - \nabla f(x)}+\frac{L}{4}\left\Vert \tilde{x}-x\right\Vert ^{2}\\
	&\le \frac{\eta^2 L}{8}\norm{x - \tx}\norm{\nabla f(x) + \nabla f(\tx)} + \frac{L}{4}\norm{x - \tx}^2.
	\end{align*}
	The second inequality followed from
	\[f(\tx) - f(x) \le \min\left(\inprod{\nabla f(\tx)}{\tx - x},\; \inprod{\nabla f(x)}{\tx - x} + \frac{L}{2}\norm{\tx - x}^2\right),\] 
	due to convexity and smoothness; the last inequality followed from smoothness and Cauchy-Schwarz, and every other line was by expanding the definitions. We now bound these two terms. First, since  smoothness implies $\norm{\nabla f(x) + \nabla f(\tx)} \le 2\norm{\nabla f(x)} + L\norm{x - \tx}$, 
	\begin{align*}\frac{\eta^2 L}{8}\norm{x - \tx}\norm{\nabla f(x) + \nabla f(\tx)} &\le \frac{\eta^2 L^2}{8}\norm{x - \tx}^2 + \frac{\eta^2 L}{4}\norm{\nabla f(x)}\norm{x - \tx} \\ 
	&\le \frac{ L}{4}\norm{x - \tx}^2 + \frac{\eta^2 L}{4}\norm{\nabla f(x)}\norm{x - \tx}\end{align*}
	Here we used our choice of $\eta$. Next, since $\tx - x = \eta v -\frac{\eta^2}{2}\nabla f(x)$, using the above bounds,
	\begin{align*}\ham(\tx, \tv) - \ham(x, v)&\le \frac{\eta^2 L}{8}\norm{x - \tx}\norm{\nabla f(x) + \nabla f(\tx)} + \frac{L}{4}\norm{x - \tx}^2\\
	&\le \frac{L}{2}\norm{x - \tx}^2 + \frac{\eta^2 L}{4}\norm{\nabla f(x)}\norm{x - \tx}\\
	&\le L\eta^2\norm{v}^2 + \frac{L\eta^4}{4}\norm{\nabla f(x)}^2 + \frac{L\eta^3}{4}\norm{\nabla f(x)}\norm{v} + \frac{L\eta^4}{8}\norm{\nabla f(x)}^2 \\
	&\le \frac{9L\eta^2}{8}\norm{v}^2 + \frac{L\eta^4}{2}\norm{\nabla f(x)}^2 \le \frac{1}{16}.
	\end{align*}
	We recalled $\norm{v}^2 \le 2d$, $\norm{\nabla f(x)}^2 \le 25Ld^2\log^2\frac{\kappa}{\eps}$, and the choice of $\eta$.
\end{proof}

Finally, we note that Lemma~\ref{lem:tvst} immediately follows via the triangle inequality.
	\section{Deferred proofs}
\label{app:deferred}

\begin{lemma}
For any $C < 1$,
\[\prod_{k = 0}^\infty \left(\frac{1}{1 - \frac{C}{4^k}} \right)^{2^k} \le \frac{1 + \sqrt{C}}{1 - \sqrt{C}}.\]
\end{lemma}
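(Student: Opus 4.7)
The plan is to take logarithms of both sides and compare the resulting series expansions in $C$. For the left-hand side, I would expand each factor via $-\log(1-y) = \sum_{n\ge 1} y^n/n$ and swap the order of summation (justified by Tonelli's theorem, since all terms are nonnegative) to obtain, after evaluating the geometric sum $\sum_{k\ge 0} 2^k/4^{nk} = 2^{2n-1}/(2^{2n-1}-1)$,
\[
\log \prod_{k=0}^{\infty} \left(1 - \frac{C}{4^k}\right)^{-2^k} = \sum_{n \ge 1} c_n\, C^n, \qquad c_n \defeq \frac{2^{2n-1}}{n(2^{2n-1}-1)}.
\]
For the right-hand side, I would use $\log\frac{1+\sqrt{C}}{1-\sqrt{C}} = 2\operatorname{arctanh}(\sqrt{C}) = \sum_{n \ge 1} b_{n-1} C^{n-1/2}$ with $b_{n-1} \defeq 2/(2n-1)$. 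Crucially, the right-hand side is a sum of \emph{half-integer} powers of $C$, whereas the left-hand side has only integer powers.

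The key observation is that for $C \in [0,1]$ we have $C^n \le C^{n-1/2}$ (since $C \le 1$ and $n \ge n-1/2$). Therefore, it suffices to verify the coefficient-level inequality $c_n \le b_{n-1}$ for every $n \ge 1$. Cross-multiplying and simplifying, this reduces to $2n \le 2^{2n-1}$, equivalently $n \le 4^{n-1}$, which is immediate by induction: the case $n=1$ is equality, and if $4^{n-1} \ge n$ then $4^n = 4\cdot 4^{n-1} \ge 4n \ge n+1$.

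Once $c_n \le b_{n-1}$ is established, the termwise bound $c_n C^n \le b_{n-1} C^{n-1/2}$ holds on $[0,1]$, and summing over $n \ge 1$ yields $\log F(C) \le \log G(C)$; exponentiating gives the claim. I do not foresee a serious obstacle. The only points of care are justifying the sum interchange (trivial by positivity of terms) and noting absolute convergence of both series on $[0,1)$, which follows since $c_n, b_{n-1} \sim 1/n$, making the two sums comparable to $-\log(1-C)$ and $-\log(1-\sqrt{C})$ near the respective boundaries.
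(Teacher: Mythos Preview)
Your proof is correct. Both you and the paper begin by expanding the logarithm of the product as a power series in $C$, but you diverge after that point.

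The paper separates the $k=0$ factor, writes the remaining product as $V(C)$ with $\log V(C)=\sum_{j\ge 1}\frac{C^j}{j(2^{2j-1}-1)}$, observes that $\log V$ is convex in $C$, and interpolates between the values $\log V(0)=0$ and $\log V(1)\le\log 4$ to get $\log V(C)\le C\log 4$; the argument concludes with the auxiliary inequality $4^C\le(1+\sqrt{C})^2$ on $[0,1]$. Your approach instead expands $\log\frac{1+\sqrt C}{1-\sqrt C}$ as a series in half-integer powers of $C$, exploits the monotonicity $C^n\le C^{n-1/2}$ on $[0,1]$, and reduces the claim to the elementary coefficient inequality $n\le 4^{n-1}$. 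Your route is slightly more direct in that it avoids both the convexity step and the separate endpoint inequality $4^C\le(1+\sqrt C)^2$; the paper's route, on the other hand, packages the estimate into a single linear bound on $\log V$, which is perhaps more quotable. Either way the calculations are short and the results identical.
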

\begin{proof}
Define
\[V(C) \defeq \prod_{k = 1}^\infty \left(\frac{1}{1 - \frac{C}{4^k}} \right)^{2^k} \le \frac{1 + \sqrt{C}}{1 - \sqrt{C}},\]
so we wish to bound $V(C)/(1 - C)$. It suffices to show $V(C) \le (1 + \sqrt{C})^2$. Note that
\begin{align*}\log V(C) &= \sum_{k = 1}^\infty 2^k\log\left(\frac{1}{1 - \frac{C}{4^k}}\right) = \sum_{k = 1}^\infty 2^k \sum_{j = 1}^\infty \frac{1}{j}\left(\frac{C}{4^k}\right)^j = \sum_{j = 1}^\infty \frac{C^j}{j(2^{2j - 1} - 1)}.
\end{align*}
Thus, $\log V$ is a convex function in $C$. Note that $\log V(0) = 0$ and 
\[\log V(1) \le 1 + \sum_{j = 2}^\infty \frac{1}{4^{j - 1}j} = 1 + 4\left(-\log\left(\frac{3}{4}\right) - \frac{1}{4}\right) \le \log 4.\]
This implies $\log V(C) \le C\log 4$, and the conclusion follows from $4^C \le (1 + \sqrt{C})^2$ for $C \in [0, 1]$.
\end{proof}

	\end{appendix}
	
\end{document}